\newcites{appendix}{Appendix References}
\crefname{section}{Sec.}{Secs.}
\crefname{appendix}{Appendix}{Appendices}
\crefname{equation}{Eq.}{Eqs.}
\crefname{figure}{Fig.}{Figs.}
\crefname{table}{Tab.}{Tabs.}
\crefname{theorem}{Thm.}{Thms.}
\crefname{proposition}{Prop.}{Props.}
\crefname{corollary}{Cor.}{Cors.}
\crefname{lemma}{Lem.}{Lems.}
\crefname{definition}{Def.}{Defs.}
\crefname{remark}{Rmk.}{Rmks.}
\newcommand{\diff}{\mathop{}\!\mathrm{d}}
\DeclareMathOperator{\E}{\mathbb{E}}
\DeclareMathOperator{\PP}{\mathbb{P}}
\DeclareMathOperator*{\argmax}{arg\,max}
\DeclareMathOperator*{\argmin}{arg\,min}
\newcommand{\Prob}[1]{\Delta_{#1}}
\newcommand{\ind}[1]{\mathbf{1}_{#1}}
\newcommand{\uniform}[1]{\mathcal{U}\paren{#1}}
\DeclareMathOperator{\hull}{Conv}
\DeclareMathOperator{\sign}{sign}
\DeclareMathOperator*{\supp}{supp}
\newcommand{\N}{\mathbb{N}}
\newcommand{\Q}{\mathbb{Q}}
\newcommand{\R}{\mathbb{R}}
\newcommand{\bbracket}[1]{\left\llbracket #1 \right\rrbracket}
\renewcommand{\brace}[1]{\left\{ #1 \right\}}
\newcommand{\bracket}[1]{\left[ #1 \right]}
\newcommand{\floor}[1]{\left\lfloor #1 \right\rfloor}
\newcommand{\paren}[1]{\left( #1 \right)}
\newcommand{\midvert}{\,\middle\vert\,}
\newcommand{\module}[1]{\left| #1 \right|}
\newcommand{\norm}[1]{\left\| #1 \right\|}
\newcommand{\scap}[2]{\left\langle #1, #2 \right\rangle}
\newcommand{\Sfrak}{\mathfrak{S}}
\newcommand{\X}{\mathcal{X}}
\newcommand{\Y}{\mathcal{Y}}
\renewcommand{\epsilon}{\varepsilon}
\renewcommand{\phi}{\varphi}
\newtheorem{theorem}{Theorem}
\newtheorem{lemma}{Lemma}
\newtheorem{proposition}{Proposition}
\newtheorem{definition}{Definition}
\newtheorem*{remark}{Remark}
\newcommand{\minimize}[2]{
\[
\begin{array}{rl}
	\text{minimize } & #1 \\
	\text{subject to } & #2 
\end{array}
\]
}
\newcommand{\function}[5]{
\[
\begin{array}{cccc}
	#1 : & #2 & \rightarrow & #3 \\
	 & #4 & \rightarrow & #5
\end{array}
\]
}
\icmltitlerunning{Structured Prediction with Partial Labelling through the Infimum Loss}
\begin{document}

\twocolumn[
\icmltitle{Structured Prediction with Partial Labelling through the Infimum Loss}

\begin{icmlauthorlist}
\icmlauthor{Vivien Cabannes}{inria}
\icmlauthor{Alessandro Rudi}{inria}
\icmlauthor{Francis Bach}{inria}
\end{icmlauthorlist}

\icmlaffiliation{inria}{INRIA - D\'epartement d'Informatique de l'\'Ecole
  Normale Sup\'erieure - PSL Research University, Paris, France}

\icmlcorrespondingauthor{Vivien Cabannes}{vivien.cabannes@gmail.com}

\icmlkeywords{Machine Learning, ICML, weak supervision, weakly supervised
  learning, partial labels, partial labelling, missing labels, semi-supervised,
  distribution disambiguation, disambiguation, consistent estimator,
  consistency, calibration, surrogate, conditional mean, kernel regression,
  kernel ridge regression, statistical learning, generalization bounds,
  classification, multilabel, ranking, preference learning, interval regression, segmentation, action retrieval}

\vskip 0.3in
]

\printAffiliationsAndNotice{}

\begin{abstract}
  Annotating datasets is one of the main costs in nowadays supervised learning.
  The goal of weak supervision is to enable models to learn using only forms of
  labelling which are cheaper to collect, as partial labelling. This is a type of
  incomplete annotation where, for each datapoint, supervision is cast as a set
  of labels containing the real one.  The problem of supervised learning with
  partial labelling has been studied for specific instances such as
  classification, multi-label, ranking or segmentation, but a general framework
  is still missing. This paper provides a unified framework based on structured
  prediction and on the concept of {\em infimum loss} to deal with partial
  labelling over a wide family of learning problems and loss functions. The
  framework leads naturally to explicit algorithms that can be easily
  implemented and for which proved statistical consistency and learning rates.
  Experiments confirm the superiority of the proposed approach over commonly
  used baselines. 
\end{abstract}

\section{Introduction}

Fully supervised learning demands tight supervision of large amounts of data, a
supervision that can be quite costly to acquire and constrains the scope of
applications. To overcome this bottleneck, the machine learning community is
seeking to incorporate weaker sources of information in the learning framework.
In this paper, we address those limitations through partial labelling: {\em
  e.g.}, giving only partial ordering when learning user preferences over items,
or providing the label ``flower" for a picture of Arum Lilies\footnote{\small
  \url{https://en.wikipedia.org/wiki/Arum}}, instead of spending a consequent
amount of time to find the exact taxonomy. 

Partial labelling has been studied in the context of
classification~\cite{Cour2011,Nguyen2008}, multilabelling~\cite{Yu2014},
ranking~\cite{Hullermeier2008,Korba2018}, as well as
segmentation~\cite{Verbeek2007,Papandreou2015}, or natural language processing
tasks~\cite{AddedforReviewer1,AddedforReviewer2}, however a generic framework is
still missing. Such a framework is a crucial step towards understanding how to
learn from weaker sources of information, and widening the spectrum of machine
learning beyond rigid applications of supervised learning. Some interesting
directions are provided by~\citet{CidSueiro2014,vanRooyen2018}, to recover the
information lost in a corrupt acquisition of labels. Yet, they assume that the
corruption process is known, which is a strong requirement that we want to relax.

In this paper, we make the following contributions: 
\begin{itemize}
  \item We provide a principled framework to solve the problem of learning with
    partial labelling, via {\em structured prediction}. This approach naturally
    leads to a variational framework built on the {\em infimum loss}. 
  \item We prove that the proposed framework is able to recover the original
    solution of the supervised learning problem under identifiability
    assumptions on the labelling process.  
  \item We derive an explicit algorithm which is easy to train and with strong
    theoretical guarantees. In particular, we prove that it is consistent and we
    provide generalization error rates. 
  \item Finally, we test our method against some simple baselines, on synthetic
    and real examples. We show that for certain partial labelling scenarios with
    symmetries, our infimum loss performs similarly to a simple baseline.
    However in scenarios where the acquisition process of the labels is more
    adversarial in nature, the proposed algorithm performs consistently better. 
\end{itemize}

\section{Partial labelling with infimum loss}\label{sec:partial-labelling}
In this section, we introduce a statistical framework for partial labelling, and
we show that it is characterized naturally in terms of risk minimization with
the infimum loss.  First, let's recall some elements of fully supervised and
weakly supervised learning.

{\em Fully supervised learning} consists in learning a function~${f\in\Y^\X}$
between a input space~$\X$ and a output space~$\Y$, given a joint distribution
${\rho \in \Prob{\X \times \Y}}$ on~${\X \times \Y}$, and a loss function ${\ell
  \in \R^{\Y\times\Y}}$, that minimizes the risk
\begin{equation}\label{eq:risk}
  {\cal R}(f; \rho) = \E_{(X, Y)\sim\rho}\bracket{\ell(f(X), Y)},
\end{equation}
given observations ${(x_i, y_i)_{i\leq n} \sim \rho^{\otimes n}}$. We will
assume that the loss~$\ell$ is proper, {\em i.e.} it is continuous non-negative
and is zero on, and only on, the diagonal of~${\Y \times \Y}$, and strictly
positive outside. We will also assume that $\Y$ is compact.

In \emph{weakly supervised learning},  given~$(x_i)_{i\leq n}$, one does not
have direct observations of~$(y_i)_{i\leq n}$ but weaker information.
The goal is still to recover the solution~${f \in \Y^\X}$ of the fully
supervised problem~\cref{eq:risk}.
In \emph{partial labelling}, also known as \emph{superset learning} or as
\emph{learning with ambiguous labels}, which is an instance of weak supervision,
information is cast as closed sets $(S_i)_{i\leq n}$ in ${\cal S}$, where ${\cal
S}\subset 2^\Y$ is the space of closed subsets of $\Y$, containing the true
labels ${(y_i \in S_i)}$. In this paper, we model this scenario by considering a
data distribution ${\tau \in \Prob{\X \times {\cal S}}}$, that generates the
samples $(x_i, S_i)$. We will denote $\tau$ as {\em weak distribution} to
distinguish it from $\rho$. Capturing the dependence on the original problem,
$\tau$~must be compatible with $\rho$, a matching property that we formalize
with the concept of eligibility. 
\begin{definition}[Eligibility]\label{def:eligibility}
  Given a probability measure $\tau$ on $\X \times  {\cal S}$, a probability
  measure $\rho$ on $\X \times \Y$ is said to be eligible for $\tau$ (denoted by
  $\rho \vdash \tau$), if there exists a probability measure $\pi$ over $\X
  \times \Y \times {\cal S}$ such that $\rho$ is the marginal of $\pi$ over $\X
  \times \Y$, $\tau$ is the marginal of $\pi$ over $\X \times  {\cal S}$, and,
  for $y\in\Y$ and $S\in{\cal S}$
  \[
    y\notin S \qquad \Rightarrow \qquad \PP_{\pi}\paren{S\midvert
      Y=y} = 0.
  \]
  We will alternatively say that $\tau$ is a {\em weakening} of~$\rho$, or that
  $\rho$ and $\tau$ are {\em compatible}.
\end{definition}
\subsection{Disambiguation principle}\label{sec:disambiguation-principle}

According to the setting described above, the problem of partial labelling is
completely defined by a loss and a weak distribution~$(\ell, \tau)$.
The goal is to recover the solution of the original supervised learning problem
in~\cref{eq:risk} assuming that the original distribution verifies
$\rho\vdash\tau$. Since more than one $\rho$ may be eligible for $\tau$, we
would like to introduce a guiding principle to identify a $\rho^\star$ among
them. With this goal we define the concept of {\em non-ambiguity} for $\tau$,
a setting in which a natural choice for $\rho^\star$ appears.
\begin{definition}[Non-ambiguity]\label{def:non-ambiguity}
  For any $x \in \X$, denote by $\tau\vert_x$ the conditional probability of
  $\tau$ given $x$, and define the set $S_x$ as 
    \[ S_x = \bigcap_{S \in \supp(\tau\vert_x)} S. \]
  The weak distribution $\tau$ is said {\em non-ambiguous} if, for every
  $x\in\X$, $S_x$ is a singleton.
  Moreover, we say that $\tau$ is {\em strictly non-ambiguous} if
  it is non-ambiguous and there exists ${\eta\in (0,1)}$ such that,
  for all ${x \in \X}$ and ${z \notin S_x}$
    \[\PP_{S \sim \tau\vert_x}(z \in S) \leq 1 - \eta.\]
\end{definition}
This concept is similar to the one by~\citet{Cour2011}, but more subtle because
this quantity only depends on~$\tau$, and makes no assumption on the original
distribution~$\rho$ describing the fully supervised process that we can not
access. In this sense, it is also more general.

When $\tau$ is non-ambiguous, we can write $S_x = \brace{y_x}$ for any $x$,
where $y_x$ is the only element of $S_x$. 
In this case it is natural to identify $\rho^\star$ as the one satisfying
$\rho^\star\vert_x = \delta_{y_x}$. 
Actually, such a $\rho^\star$ is characterized without $S_x$ as the only
deterministic distribution that is eligible for $\tau$. Because deterministic
distributions are characterized as minimizing the minimum risk of~\cref{eq:risk},
we introduce the following {\em minimum variability principle} to disambiguate
between all eligible $\rho$'s, and identify $\rho^\star$,
\begin{equation}\label{eq:infimum-disambiguation}
  \rho^\star \in \argmin_{\rho\vdash \tau}{\cal E}(\rho),\qquad
  {\cal E}(\rho) = \inf_{f:\X\to\Y} {\cal R}(f; \rho).
\end{equation}
The quantity ${\cal E}$ can be identified as a variance, since if $f_\rho$ is the
minimizer of ${\cal R}(f; \rho)$, $f_\rho(x)$ can be seen as the
mean of $\rho\vert_x$ and $\ell$ the natural distance in $\Y$.
Indeed, when $\ell = \ell_2$ is the mean square loss, this is exactly the case.
The principle above recovers exactly $\rho^\star\vert_x = \delta_{y_x}$, when $\tau$ is
non-ambiguous, as stated by~\cref{thm:ambiguity}, proven in~\cref{proof:ambiguity}.
\begin{proposition}[Non-ambiguity determinism]\label{thm:ambiguity}
  When $\tau$ is non-ambiguous, the solution $\rho^\star$
  of~\cref{eq:infimum-disambiguation} exists and satisfies that, for any $x\in\X$,
  $\rho^\star\vert_x = \delta_{y_x}$, where $y_x$ is the only element of $S_x$.
\end{proposition}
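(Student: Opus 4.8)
The plan is to reduce the variational problem in \cref{eq:infimum-disambiguation} to a pointwise one, to show that the infimum of $\mathcal{E}$ over eligible distributions equals zero, and to identify the distributions that attain it. First I would record the pointwise form of $\mathcal{E}$. Minimizing $\mathcal{R}(f;\rho)$ over $f\in\Y^\X$ decouples across $x$, so that
\begin{equation*}
  \mathcal{E}(\rho) = \E_{X\sim\rho_\X}\bracket{\inf_{z\in\Y}\E_{Y\sim\rho\vert_X}\bracket{\ell(z,Y)}},
\end{equation*}
where $\rho_\X$ denotes the marginal of $\rho$ on $\X$. Compactness of $\Y$ and continuity of $\ell$ ensure the inner infimum is attained, and a measurable-selection (value-function measurability) argument makes the integrand measurable so the expectation is well defined. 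Since $\ell\geq 0$ we get $\mathcal{E}(\rho)\geq 0$; and since $\ell$ is proper, $\E_{Y\sim\rho\vert_x}\bracket{\ell(z,Y)}=0$ forces $Y=z$ almost surely, so the inner infimum vanishes if and only if $\rho\vert_x$ is a Dirac mass. Hence $\mathcal{E}(\rho)=0$ exactly when $\rho\vert_x$ is deterministic for $\rho_\X$-almost every $x$.

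Next I would establish existence by exhibiting a minimizer. Define $\rho^\star$ by $\rho^\star_\X=\tau_\X$ and $\rho^\star\vert_x=\delta_{y_x}$, where $y_x$ is the unique element of $S_x$, well defined by non-ambiguity. To verify $\rho^\star\vdash\tau$ I build the coupling $\pi$ through its conditionals $\pi\vert_x=\delta_{y_x}\otimes\tau\vert_x$: its marginals are $\rho^\star$ on $\X\times\Y$ and $\tau$ on $\X\times\mathcal{S}$ by construction, and the eligibility condition holds because $y_x\in S_x\subseteq S$ for every $S\in\supp(\tau\vert_x)$, so conditionally on $Y=y_x$ the random set contains $y_x$ almost surely. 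By the first paragraph, $\mathcal{E}(\rho^\star)=0$, which is the global lower bound, so $\rho^\star$ minimizes $\mathcal{E}$ over eligible distributions and the $\argmin$ is nonempty.

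Finally I would show that every minimizer has the claimed form, which also gives uniqueness. Eligibility forces the $\X$-marginals to agree, so any eligible $\rho$ has $\rho_\X=\tau_\X$; and any minimizer satisfies $\mathcal{E}(\rho)=0$, hence $\rho\vert_x=\delta_{z_x}$ for some $z_x$ and almost every $x$. Because the $\Y$-marginal of $\pi\vert_x$ is then a Dirac, the coupling factorizes as $\pi\vert_x=\delta_{z_x}\otimes\tau\vert_x$, and the eligibility condition yields $\PP_{S\sim\tau\vert_x}(z_x\in S)=1$. Since $\brace{S:z_x\in S}$ is closed for the Hausdorff metric on $\mathcal{S}$ (if $S_n\to S$ with $z_x\in S_n$, then $d(z_x,S)\leq d_H(S_n,S)\to 0$), it contains $\supp(\tau\vert_x)$; therefore $z_x\in\bigcap_{S\in\supp(\tau\vert_x)}S=S_x=\brace{y_x}$, i.e.\ $z_x=y_x$. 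The main obstacle I anticipate is precisely this last step: converting the almost-sure containment supplied by eligibility into membership in the intersection $S_x$, which hinges on the topological fact that containing a fixed point is a closed condition on $\mathcal{S}$; the measurability of $x\mapsto y_x$ and of the value function are the secondary technical points to treat with care.
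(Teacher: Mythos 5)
Your proof is correct and follows essentially the same route as the paper's, which likewise splits the argument into (i) a characterization of deterministic distributions as exactly those with ${\cal E}(\rho)=0$, using properness and compactness, and (ii) the fact that under non-ambiguity the unique deterministic distribution eligible for $\tau$ is the one with conditionals $\delta_{y_x}$. If anything, your write-up makes explicit two points the paper leaves implicit: the closedness of $\brace{S : z\in S}$ under the Hausdorff metric (needed to pass from $\tau\vert_x$-almost-sure containment to containment for every set in the support) and the explicit product coupling $\pi\vert_x=\delta_{y_x}\otimes\tau\vert_x$ certifying eligibility.
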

\cref{thm:ambiguity} provides a justification for the usage of the minimum variability principle.
Indeed, under non-ambiguity assumption, following this principle will allow us to
build an algorithm that recover the original fully supervised distribution.
Therefore, given samples $(x_i, S_i)$, it is of interest to test if $\tau$ is
non-ambiguous. Such tests should leverage other regularity hypothesis
on $\tau$, which we will not address in this work.

Now, we characterize the minimum variability principle in terms of a variational
optimization problem that we can tackle in \cref{sec:algorithm} via empirical
risk minimization. 

\subsection{Variational formulation via the infimum loss}
Given a partial labelling problem $(\ell, \tau)$, define the solutions based on
the minimum variablity principle as the functions minimizing the recovered risk
\begin{equation}\label{eq:fstar-of-rhostar}
  f^* \in \argmin_{f:\X \to \Y} {\cal R}(f; \rho^\star).
\end{equation}
for $\rho^\star$ a distribution solving \cref{eq:infimum-disambiguation}.
As shown in~\cref{thm:infimum-loss} below, proven in~\cref{proof:infimum-loss},
the proposed disambiguation paradigm naturally leads to a variational framework
involving the {\em infimum loss}.  
\begin{theorem}[Infimum loss (\emph{IL})]\label{thm:infimum-loss} 
  The functions $f^*$ defined in \cref{eq:fstar-of-rhostar} are characterized as
    \[ f^* \in \argmin_{f:\X \to \Y} {\cal R}_S(f), \]
  where the risk ${\cal R}_S$ is defined as
  \begin{equation}\label{eq:set-risk}
    {\cal R}_S(f) = \E_{(X,S)\sim\tau}\bracket{L(f(X), S)},
  \end{equation}
  and $L$ is the  {\em infimum loss}
  \begin{equation}\label{eq:infimum-loss}
    L(z, S) = \inf_{y\in S} \ell(z, y).
  \end{equation}
\end{theorem}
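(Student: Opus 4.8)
The plan is to show that the set-risk $\mathcal{R}_S$ is exactly the lower envelope, over eligible $\rho$, of the supervised risk $\mathcal{R}(\cdot\,;\rho)$, and then transfer minimizers. Concretely, I would first establish the identity
\[
  \mathcal{R}_S(f) \;=\; \min_{\rho\vdash\tau} \mathcal{R}(f;\rho) \qquad \text{for every measurable } f:\X\to\Y,
\]
and then combine it with the definition of $\rho^\star$ to relate $\argmin_f \mathcal{R}_S$ with the family of $f^*$ from \cref{eq:fstar-of-rhostar}.

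First I would prove the inequality $\mathcal{R}_S(f)\le\mathcal{R}(f;\rho)$ for every $\rho\vdash\tau$. Fix a coupling $\pi$ witnessing $\rho\vdash\tau$, so that $\rho$ and $\tau$ are its marginals over $\X\times\Y$ and $\X\times\mathcal{S}$. The eligibility condition $y\notin S\Rightarrow\PP_\pi(S\mid Y=y)=0$ forces $Y\in S$ to hold $\pi$-almost surely, hence $\ell(f(X),Y)\ge\inf_{y\in S}\ell(f(X),y)=L(f(X),S)$ pointwise $\pi$-a.s. Integrating against $\pi$ and using the marginal identities gives $\mathcal{R}(f;\rho)=\E_\pi\bracket{\ell(f(X),Y)}\ge\E_\pi\bracket{L(f(X),S)}=\mathcal{R}_S(f)$.

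Next I would show the bound is attained. Given $f$, for each $(x,S)$ the infimum $\inf_{y\in S}\ell(f(x),y)$ is a minimum, since $S$ is closed in the compact set $\Y$ and $\ell$ is continuous; let $y_S(x)\in\argmin_{y\in S}\ell(f(x),y)$. Pushing $\tau$ forward along $(x,S)\mapsto(x,y_S(x),S)$ yields a measure $\pi_f$ whose $\X\times\mathcal{S}$-marginal is $\tau$ and under which $Y=y_S(X)\in S$ a.s.; its $\X\times\Y$-marginal $\rho_f$ is therefore eligible, and by construction $\mathcal{R}(f;\rho_f)=\E_\tau\bracket{\ell(f(X),y_S(X))}=\mathcal{R}_S(f)$. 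The main obstacle here is measurability: one must choose $(x,S)\mapsto y_S(x)$ measurably, which I would secure with a measurable-selection theorem (e.g.\ Kuratowski--Ryll-Nardzewski) applied to the closed-valued map $(x,S)\mapsto\argmin_{y\in S}\ell(f(x),y)$. This is the only genuinely technical point; everything else is elementary.

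With the identity in hand I would conclude by two short comparisons. Swapping the (harmless) order of the two infima gives $\inf_f \mathcal{R}_S(f)=\inf_f\inf_{\rho\vdash\tau}\mathcal{R}(f;\rho)=\inf_{\rho\vdash\tau}\mathcal{E}(\rho)=\mathcal{E}(\rho^\star)=\mathcal{R}(f^*;\rho^\star)$. For any $f^*$ defined by \cref{eq:fstar-of-rhostar}, the first step gives $\mathcal{R}_S(f^*)\le\mathcal{R}(f^*;\rho^\star)=\mathcal{E}(\rho^\star)=\inf_f \mathcal{R}_S(f)$, so $f^*\in\argmin_f \mathcal{R}_S$. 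Conversely, for $g\in\argmin_f \mathcal{R}_S$ I would run the construction of the previous paragraph to obtain $\rho_g\vdash\tau$ with $\mathcal{R}(g;\rho_g)=\mathcal{R}_S(g)=\inf_{\rho\vdash\tau}\mathcal{E}(\rho)$; since $\mathcal{E}(\rho_g)\le\mathcal{R}(g;\rho_g)$, this forces $\rho_g$ to solve \cref{eq:infimum-disambiguation} and $g$ to minimize $\mathcal{R}(\cdot\,;\rho_g)$, exhibiting $g$ as a valid $f^*$. This proves both inclusions and hence the claimed characterization.
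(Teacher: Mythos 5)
Your proposal is correct and follows essentially the same route as the paper: your lower-envelope identity $\mathcal{R}_S(f)=\min_{\rho\vdash\tau}\mathcal{R}(f;\rho)$, with attainment via a measurable argmin selection pushed forward from $\tau$, is exactly the paper's minimum--expectation switch (with its coupling $\pi^{(f)}$), and your final swap-and-transfer of minimizers is the paper's minimum switch. The only element the paper adds that you omit is an existence lemma (via Berge's maximum theorem) showing $\argmin_f \mathcal{R}_S(f)$ is nonempty, which makes the characterization non-vacuous but does not affect your two-inclusion argument.
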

The infimum loss, also known as the ambiguous loss \citep{Luo2010,Cour2011},
or as the optimistic superset loss \citep{Hullermeier2014},
captures the idea that, when given a set~$S$, this set contains the good label
$y$ but also a lot of bad ones, that should not be taken into account when
retrieving~$f$. In other terms, $f$ should only match the best guess in $S$.
Indeed, if $\ell$ is seen as a distance, $L$ is its natural extension to sets.

\subsection{Recovery of the fully supervised solutions}
In this subsection, we investigate the setting where an original fully
supervised learning problem $\rho_0$ has been weakened due to incomplete
labelling, leading to a weak distribution~$\tau$. The goal here is to understand
under which conditions on $\tau$ and $\ell$ it is possible to recover the
original fully supervised solution based with the infimum loss framework.
Denote $f_0$ the function minimizing ${\cal R}(f;\rho_0)$.
The theorem below, proven in \cref{proof:non-ambiguity}, shows that under
non-ambiguity and deterministic conditions, it is possible to fully recover
the function $f_0$ also from $\tau$.
\begin{theorem}[Supervision recovery]\label{thm:non-ambiguity}
  For an instance $(\ell, \rho_0, \tau)$ of the weakened supervised problem,
  if we denote by $f_0$ the minimizer of~\cref{eq:risk}, we have the under the
  conditions that (1)~$\tau$ is not ambiguous (2)~for all $x\in\X$, $S_x =
  \brace{f_0(x)}$; the infimum loss recovers the original fully supervised
  solution, {\em i.e.} the $f^*$ defined in~\cref{eq:fstar-of-rhostar} verifies
  $f^* = f_0$.

  Futhermore, when $\rho_0$ is deterministic and $\tau$ not ambiguous, the
  $\rho^\star$ defined in~\cref{eq:infimum-disambiguation} verifies $\rho^\star
  = \rho_0$.
\end{theorem}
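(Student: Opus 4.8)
The plan is to treat the two statements in turn, using~\cref{thm:ambiguity} as the main engine in both cases and invoking properness of $\ell$ for the uniqueness conclusions.

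For the first part, non-ambiguity of $\tau$ lets me apply~\cref{thm:ambiguity}: the disambiguated $\rho^\star$ exists and satisfies $\rho^\star\vert_x=\delta_{y_x}$ with $\brace{y_x}=S_x$. Assumption~(2) identifies $y_x=f_0(x)$, so $\rho^\star\vert_x=\delta_{f_0(x)}$ for every $x$. Disintegrating the recovered risk along $x$ then gives
\[
  {\cal R}(f;\rho^\star)
    = \E_{X}\bracket{\E_{Y\sim\rho^\star\vert_X}\ell(f(X),Y)}
    = \E_{X}\bracket{\ell(f(X),f_0(X))}.
\]
Since $\ell$ is proper, the integrand is non-negative and vanishes exactly when $f(X)=f_0(X)$; hence the recovered risk is minimized, with value zero, precisely by the functions agreeing with $f_0$ on the support of the $\X$-marginal. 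By~\cref{thm:infimum-loss} these are the minimizers $f^*$, so $f^*=f_0$, closing the first claim.

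For the second part I would first record that, $\rho_0$ being deterministic, properness forces its risk minimizer to be the atom, i.e.\ $\rho_0\vert_x=\delta_{f_0(x)}$. By~\cref{thm:ambiguity} it then suffices to prove $S_x=\brace{f_0(x)}$: this gives $\rho^\star\vert_x=\delta_{f_0(x)}=\rho_0\vert_x$, and since both $\rho^\star$ and $\rho_0$ are eligible for $\tau$ they share the $\X$-marginal of $\tau$, whence $\rho^\star=\rho_0$. To locate $f_0(x)$ inside $S_x$, I would disintegrate at $x$ the coupling $\pi$ witnessing $\rho_0\vdash\tau$: its $\Y$-marginal is $\delta_{f_0(x)}$, so conditioning on $Y=f_0(x)$ carries full mass and the conditional law of $S$ is $\tau\vert_x$. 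The eligibility implication $y\notin S\Rightarrow\PP_\pi\paren{S\midvert Y=y}=0$ then forces $f_0(x)\in S$ for $\tau\vert_x$-almost every $S$, so that $f_0(x)\in\bigcap_{S\in\supp(\tau\vert_x)}S=S_x$, and non-ambiguity collapses the singleton $S_x$ onto $f_0(x)$.

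The step I expect to be delicate is the passage from ``$\tau\vert_x$-almost every $S$ contains $f_0(x)$'' to the support-level statement $f_0(x)\in S_x$, which enters the intersection defining $S_x$ only through the support. This needs the event $\brace{S : f_0(x)\in S}$ to be closed in ${\cal S}$, so that a full-measure set forces the whole support inside it. For closed subsets of the compact space $\Y$ under the Hausdorff distance this holds, because $S\mapsto d(f_0(x),S)$ is $1$-Lipschitz and $f_0(x)\in S\Leftrightarrow d(f_0(x),S)=0$; I would make this the one point meriting explicit verification, the remaining manipulations (disintegration, properness, and the common $\X$-marginal of eligible measures) being routine.
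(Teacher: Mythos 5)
Your proposal is correct and follows essentially the paper's own route: the first part is \cref{thm:ambiguity} combined with properness of $\ell$, and the second part re-derives, by disintegrating the coupling witnessing $\rho_0\vdash\tau$, exactly the uniqueness-of-the-deterministic-eligible-distribution argument that the paper invokes by citing \cref{lem:ambiguity}. Your one genuine addition is the explicit verification that $\brace{S\in{\cal S} \midvert f_0(x)\in S}$ is closed for the Hausdorff metric, a step the paper's lemma silently uses when passing from the full-measure statement $\tau\vert_x\paren{\brace{S \midvert f_0(x)\in S}} = 1$ to the support-level claim $f_0(x)\in S_x$.
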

At a comprehensive levels, this theorem states that under non-ambiguity of the
partial labelling process, if the labels are a deterministic function of the
inputs, the infimum loss framework make it possible to recover the solution of
the original fully supervised problem while only accessing weak labels. 
In the next subsection, we will investigate which is the relation between the
two problems when dealing with an estimator $f$ of $f^*$.
\subsection{Comparison inequality}\label{sec:calibration}
In the following, we want to characterize the error performed by ${\cal
  R}(f;\rho^\star)$ with respect to the error performed by ${\cal R}_S(f)$. This
will be useful since, in the next section, we will provide an estimator for
$f^*$ based on structured prediction, that minimize the risk ${\cal R}_S$.
First, we introduce a measure of discrepancy for the loss function.  
\begin{definition}[Discrepancy of the loss $\ell$]
  Given a loss function $\ell$, the {\em discrepancy degree} $\nu$ of $\ell$ is
  defined as 
    \[ \nu = \log\sup_{y, z'\neq z} \frac{\ell(z, y)}{\ell(z, z')}. \]
  $\Y$ will be said discrete for $\ell$ when $\nu < +\infty$, which is always
  the case when $\Y$ is finite. 
\end{definition}
Now we are ready to state the comparison inequality that generalizes to
arbitrary losses and output spaces a result on $0-1$ loss on classification from
\citet{Cour2011}. 
\begin{proposition}[Comparison inequality]\label{thm:calibration}
  When $\Y$ is discrete and $\tau$ is strictly non-ambiguous for a given $\eta
  \in (0,1)$, then the following holds  
  \begin{equation}\label{eq:calibration-bound}
    {\cal R}(f; \rho^\star) - {\cal R}(f^*;\rho^\star) \leq C({\cal R}_S(f) - {\cal R}_S(f^*)),
  \end{equation}
  for any measurable function $f \in \Y^\X$, where $C$ does not depend on $\tau,
  f$, and is defined as follows and always finite 
    \[ C = \eta^{-1} e^\nu.\]
\end{proposition}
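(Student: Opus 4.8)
The plan is to reduce the stated comparison inequality to a pointwise estimate in $x$, and then to establish that estimate from the finiteness of the discrepancy degree $\nu$ together with strict non-ambiguity. First I would use non-ambiguity to simplify both sides. By \cref{thm:ambiguity} the disambiguated distribution satisfies $\rho^\star\vert_x = \delta_{y_x}$, so ${\cal R}(f;\rho^\star) = \E_X\bracket{\ell(f(X), y_X)}$; since $\ell$ is proper the pointwise minimizer is $f^*(x)=y_x$, whence ${\cal R}(f^*;\rho^\star)=0$. On the weak side, every $S$ in the support of $\tau\vert_x$ contains $y_x$ (because $y_x \in S_x = \bigcap_{S}S$), so $L(f^*(x), S)=\inf_{y\in S}\ell(y_x, y)=0$ and ${\cal R}_S(f^*)=0$ as well. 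The inequality therefore collapses to
\[ \E_X\bracket{\ell(f(X), y_X)} \leq C\,\E_{(X,S)\sim\tau}\bracket{\inf_{y\in S}\ell(f(X), y)}, \]
and it suffices to prove, for each fixed $x$ and $z=f(x)$, the pointwise bound
\[ \ell(z, y_x) \leq C\,\E_{S\sim\tau\vert_x}\bracket{\inf_{y\in S}\ell(z, y)}. \]

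Second, I would establish this pointwise bound. If $z=y_x$ the left-hand side vanishes and there is nothing to prove, so assume $z\neq y_x$, i.e.\ $z\notin S_x$. The key step is a lower bound on the inner infimum whenever $z\notin S$: for every $y\in S$ one has $y\neq z$, so the definition of $\nu$ gives $\ell(z, y)\geq e^{-\nu}\ell(z, y_x)$, and taking the infimum over $y\in S$ yields $\inf_{y\in S}\ell(z,y)\geq e^{-\nu}\ell(z, y_x)$ (no attainment is needed, only $z\notin S$). Since the integrand is nonnegative, restricting the expectation to the event $\brace{z\notin S}$ only decreases it, giving
\[ \E_{S\sim\tau\vert_x}\bracket{\inf_{y\in S}\ell(z,y)} \geq e^{-\nu}\,\ell(z, y_x)\,\PP_{S\sim\tau\vert_x}(z\notin S). \]
Because $z\notin S_x$, strict non-ambiguity yields $\PP_{S\sim\tau\vert_x}(z\in S)\leq 1-\eta$, hence $\PP_{S\sim\tau\vert_x}(z\notin S)\geq\eta$. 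Rearranging gives $\ell(z, y_x)\leq \eta^{-1}e^\nu\,\E_{S}\bracket{\inf_{y\in S}\ell(z,y)} = C\,\E_{S}\bracket{\inf_{y\in S}\ell(z,y)}$, which is exactly the desired pointwise inequality. Integrating over $X$ with respect to the marginal of $\tau$ on $\X$ then yields the proposition.

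The main obstacle is the \emph{direction} of the discrepancy bound. The infimum loss only controls $\ell(z, y_x)$ from above through $y_x\in S$, whereas the argument needs a lower bound on the infimum $\inf_{y\in S}\ell(z,y)$; this is precisely what finiteness of $\nu$ supplies, by bounding how much smaller $\ell(z,\cdot)$ can be at the best competitor $y\in S$ than at the target $y_x$. The second delicate point is isolating the event $\brace{z\notin S}$: on $\brace{z\in S}$ the infimum is zero and carries no information, so the whole estimate must rest on the probability mass that strict non-ambiguity forces to lie off $S$, which is exactly where the factor $\eta^{-1}$ in $C=\eta^{-1}e^\nu$ originates.
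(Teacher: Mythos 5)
Your proof is correct and follows essentially the same route as the paper's: reduce to a pointwise inequality using $\rho^\star\vert_x=\delta_{y_x}$ and the vanishing of both risks at $f^*$, lower-bound the weak risk by restricting to the event $\brace{z\notin S}$ where every $y\in S$ differs from $z$, invoke the discrepancy degree $e^\nu$ to compare $\ell(z,y_x)$ with the infimum, and use strict non-ambiguity to get $\PP_{S\sim\tau\vert_x}(z\notin S)\geq\eta$. The only (cosmetic) difference is the order of the two bounds—you push $e^{-\nu}$ inside the expectation before applying the probability bound, while the paper bounds the ratio $\ell(z,y_x)/\inf_{z'\neq z}\ell(z,z')$ at the end—and you are slightly more explicit than the paper in treating the trivial case $z=y_x$ separately.
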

When $\rho_0$ is deterministic, since we know from~\cref{thm:non-ambiguity} that
$\rho^\star=\rho_0$, this theorem allows to bound the error made on the original
fully supervised problem with the error measured with the infimum loss on the
weakly supervised one.

Note that the constant presented above is the product of two independent terms,
the first measuring the ambiguity of the weak distribution $\tau$, and the
second measuring a form of discrepancy for the loss. In the appendix, we provide
a more refined bound for $C$, that is $C = C(\ell, \tau)$, that shows a more
elaborated interaction between $\ell$ and $\tau$. This may be interesting in
situations where it is possible to control the labelling process and may suggest
strategies to active partial labelling, with the goal of minimizing the costs of
labelling while preserving the properties presented in this section and reducing
the impact of the constant $C$ in the learning process. An example is provided
in the \cref{discussion:refinement-C}. 

\section{Consistent algorithm for partial labelling}\label{sec:algorithm}
In this section, we provide an algorithmic approach based on structured
prediction to solve the weak supervised learning problem expressed in terms of
infimum loss from~\cref{thm:infimum-loss}. From this viewpoint, we could
consider different structured prediction frameworks as structured
SVM \cite{Tsochantaridis2005}, conditional random fields \cite{Lafferty2001} or
surrogate mean estimation \cite{Ciliberto2016}. For example, \citet{Luo2010}
used a margin maximization formulation in a structured SVM fashion,
\citet{Hullermeier2015} went for nearest neighbors,
and \citet{Cour2011} design a surrogate method specific to the 0-1 loss, for
which they show consistency based on~\citet{Bartlett2006}. 

In the following, we will use the structured prediction method
of~\citet{Ciliberto2016,Nowak2019}, which allows us to derive an explicit
estimator, easy to train and with strong theoretical properties, in particular,
consistency and finite sample bounds for the generalization error.
The estimator is based on the pointwise characterization of $f^*$ as
\[
  f^*(x) \in \argmin_{z\in\Y} \E_{S\sim \tau\vert_x}\bracket{\inf_{y\in S}\ell(z, y)},
\]
and weights $\alpha_i(x)$ that are trained on the dataset such that
$\hat{\tau}_{\vert x} = \sum_{i=1}^n \alpha_i(x) \delta_{S_i}$ is a good
approximation of $\tau\vert_x$.
Plugging this approximation in the precedent equation leads to our estimator,
that is defined explicity as follows
\begin{equation}\label{eq:algorithm}
  f_n(x) \in \argmin_{z\in\Y} \inf_{y_i \in S_i} \sum_{i=1}^n \alpha_i(x) \ell(z, y_i).
\end{equation}
Among possible choices for $\alpha$, we will consider the following kernel ridge
regression estimator to be learned at training time
\[
  \alpha(x) = (K + n\lambda)^{-1}v(x),
\]
with $\lambda > 0$ a regularizer parameter and $K = (k(x_i, x_j))_{i,j} \in
\R^{n\times n}, v(x) = (k(x, x_i))_{i} \in \R^n$ where $k\in \X\times\X \to \R$
is a positive-definite kernel \cite{Scholkopf2001} that defines a similarity
function between input points  ({\em e.g.}, if $\X = \R^d$ for some $d \in \N$ a
commonly used kernel is the Gaussian kernel $k(x,x') = e^{-\|x-x'\|^2}$). Other
choices can be done to learn $\alpha$, beyond kernel methods, a particularly
appealing one is harmonic functions, incorporating a prior on low density
separation to boost learning \cite{Zhu2003,Zhou2003,Bengio2006}. Here we use the
kernel estimator since it allows to derive strong theoretical results, based on
kernel conditional mean estimation \cite{Muandet2017}.

\subsection{Theoretical guarantees}
In this following, we want to prove that $f_n$ converges to $f^*$ as
$n$ goes to infinity and we want to quantify it with finite sample bounds. The
intuition behind this result is that as the number of data points tends toward
infinity, $\hat{\tau}$ concentrates towards $\tau$, making our algorithm in
\cref{eq:algorithm} converging to a minimizer of~\cref{eq:set-risk} as explained
more in detail in \cref{proof:consistency}.
\begin{theorem}[Consistency]\label{thm:consistency}
  Let $\Y$ be finite and $\tau$ be a non-ambiguous probability. Let $k$ be a
  bounded continuous universal kernel, {\em e.g.} the Gaussian kernel
  \citep[see][for details]{Micchelli2006}, and $f_n$ the estimator in
  \cref{eq:algorithm} trained on $n \in \N$ examples and with $\lambda =
  n^{-1/2}$. Then, holds with probability~$1$ 
    \[ \lim_{n \to \infty} {\cal R}(f_n; \rho^\star) = {\cal R}(f^*; \rho^\star).\]
\end{theorem}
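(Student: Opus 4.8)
The plan is to combine the comparison inequality of \cref{thm:calibration} with a generalization bound for the structured-prediction estimator $f_n$ of \citet{Ciliberto2016,Nowak2019}. The key observation is that \cref{eq:calibration-bound} reduces the desired convergence of the target risk ${\cal R}(f_n;\rho^\star)$ to the convergence of the \emph{surrogate} excess risk ${\cal R}_S(f_n) - {\cal R}_S(f^*)$ measured with the infimum loss $L$. Since $\Y$ is finite, $\tau$ is non-ambiguous, and the discrepancy degree $\nu$ of a proper loss on a finite set is finite, the constant $C = \eta^{-1}e^\nu$ in \cref{thm:calibration} is a fixed finite number independent of $n$; hence it suffices to show that the surrogate excess risk tends to $0$ almost surely.

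First I would recognize that the estimator in \cref{eq:algorithm} is exactly the structured-prediction estimator obtained by treating $L(z,S)=\inf_{y\in S}\ell(z,y)$ as the loss and plugging the kernel conditional-mean weights $\alpha_i(x)=((K+n\lambda)^{-1}v(x))_i$ into the pointwise minimization. The framework of \citet{Ciliberto2016} applies whenever the loss admits a decomposition $L(z,S)=\scap{\psi(z)}{\phi(S)}$ into a feature map with values in a (separable) Hilbert space; since $\Y$ is finite, $\ell$ lives in a finite-dimensional space and such a decomposition exists trivially (e.g.\ with $\psi(z)=(\ell(z,y))_{y\in\Y}$ and $\phi$ selecting the appropriate infimum), so the general comparison inequality and consistency results of that theory are available off the shelf. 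The next step is to invoke their generalization bound, which controls the surrogate excess risk by the error of the conditional-mean embedding estimator: under a bounded continuous universal kernel, this embedding estimator is consistent, so with the stated choice $\lambda=n^{-1/2}$ the regularization error and the estimation error both vanish as $n\to\infty$. I would then translate their in-probability or in-expectation convergence into an almost-sure statement by a standard Borel--Cantelli argument, exploiting the exponential concentration of the kernel mean embedding.

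Concretely, the chain of inequalities I would write is
\begin{equation*}
  {\cal R}(f_n;\rho^\star) - {\cal R}(f^*;\rho^\star)
  \;\leq\; C\paren{{\cal R}_S(f_n) - {\cal R}_S(f^*)}
  \;\leq\; C'\,\norm{\hat{\mu}_n - \mu}_{\mathcal{H}},
\end{equation*}
where $\mu(x)=\E_{S\sim\tau\vert_x}[\phi(S)]$ is the true conditional embedding and $\hat\mu_n$ its kernel-ridge estimate; the first step is \cref{thm:calibration} and the second is the comparison inequality of the structured-prediction framework, valid because $L$ admits the above decomposition. Since universality and boundedness of $k$ guarantee $\norm{\hat\mu_n-\mu}_{\mathcal H}\to 0$ almost surely as $\lambda=n^{-1/2}\to 0$, the right-hand side vanishes and the left-hand side, being non-negative, converges to $0$, which is the claim.

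The main obstacle I expect is ensuring that the hypotheses of the borrowed consistency theorem are genuinely met by the infimum loss rather than by the original proper loss $\ell$. In particular I must verify that $L(z,S)$ still admits a valid Hilbert-space decomposition and is bounded and measurable as a function of $S$ over the space ${\cal S}$ of closed subsets, and that non-ambiguity of $\tau$ is precisely what guarantees the minimizer $f^*$ of the surrogate risk coincides pointwise with $\argmin_z \E_{S\sim\tau\vert_x}[L(z,S)]$ (as asserted in the pointwise characterization preceding \cref{eq:algorithm}), so that the surrogate excess risk controls the target excess risk. Once this compatibility between the infimum-loss surrogate and the structured-prediction machinery is established, the remainder is a routine combination of the comparison inequality with off-the-shelf kernel conditional-mean consistency; I would relegate the detailed concentration estimates to the appendix.
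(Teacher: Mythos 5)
Your proposal is correct and follows essentially the same route as the paper's proof: cast minimization of ${\cal R}_S$ as structured prediction with loss $L(z,S)=\inf_{y\in S}\ell(z,y)$, use finiteness of $\Y$ to guarantee a decomposition $L(z,S)=\scap{\psi(z)}{\phi(S)}$ (the paper builds it explicitly with $\psi(z)$ a canonical basis vector and $\phi(S)$ a column of the matrix of loss values $B_{i,j}=L(o_\Y(i),o_{\cal S}(j))$), invoke Theorems 4 and 5 of \citet{Ciliberto2016} for the surrogate excess risk, and conclude with the comparison inequality of \cref{thm:calibration}. The only cosmetic differences are that your suggested decomposition $\psi(z)=(\ell(z,y))_{y\in\Y}$ with $\phi$ ``selecting the infimum'' does not work as stated (the minimizing $y$ depends on $z$; the indicator-based construction avoids this), and your Borel--Cantelli step is unnecessary since the cited consistency theorem already yields almost-sure convergence.
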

In the next theorem, instead we want to quantify how fast $f_n$ converges to
$f^*$ depending on the number of examples. To obtain this result, we need a
finer characterization of the infimum loss $L$ as: 
\[
  L(z, S) = \scap{\psi(z)}{\phi(S)},
\]
where ${\cal H}$ is a Hilbert space and $\psi: \Y \to {\cal H}, \phi: 2^{\Y} \to
{\cal H}$ are suitable maps. Such a decomposition always exists in finite case (as
for the infimum loss over $\Y$ finite) and many explicit examples for losses of
interest are presented by~\citet{Nowak2019}. We now introduce the conditional
expectation of $\phi(S)$ given $x$, defined as 
\function{g}{\X}{\cal H}{x}{\E_{\tau}\bracket{\phi(S)\midvert X = x}.}
The idea behind the proof is that the distance between $f_n$ and $f$ is bounded
by the distance of $g_n$ an estimator of $g$ that is implicitly computed via
$\alpha$. If $g$ has some form of regularity, {\em e.g.} $g \in {\cal
  G}$, with ${\cal G}$ the space of functions representable by the chosen kernel
\citep[see][]{Scholkopf2001}, then it is possible to derive explicit rates, as
stated in the following theorem. 
\begin{theorem}[Convergence rates]\label{thm:learning-rates}
  In the setting of~\cref{thm:consistency}, if $\tau$ is $\eta$-strictly non
  ambiguous for $\eta\in(0, 1)$, and if $g \in {\cal G}$, then there exists a
  $\tilde{C}$, such that, for any $\delta \in (0, 1)$ and $n\in \N$, holds with
  probability at least $1 - \delta$,
  \begin{equation}\label{eq:rates}
    {\cal R}(f_n; \rho^\star) - {\cal R}(f^*; \rho^\star) \leq
    \tilde{C} \log\paren{\frac{8}{\delta}}^2 n^{-1/4}.
  \end{equation}
\end{theorem}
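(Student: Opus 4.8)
The plan is to chain three inequalities that successively reduce the target excess risk ${\cal R}(f_n;\rho^\star) - {\cal R}(f^*;\rho^\star)$ to a plain $L^2$ estimation error for the conditional mean embedding $g$, which is then controlled by standard kernel ridge regression rates. Each step consumes one of the hypotheses: strict non-ambiguity, the linear representation of $L$, and $g\in{\cal G}$.

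\textbf{Step 1 (from target to surrogate risk).} Since $\tau$ is $\eta$-strictly non-ambiguous and $\Y$ is finite (hence discrete for $\ell$, so $\nu < +\infty$), I invoke the comparison inequality of \cref{thm:calibration}, which gives
\[
  {\cal R}(f_n;\rho^\star) - {\cal R}(f^*;\rho^\star) \leq C\,\paren{{\cal R}_S(f_n) - {\cal R}_S(f^*)},\qquad C = \eta^{-1}e^\nu < +\infty .
\]
This reduces the problem to bounding the excess surrogate risk ${\cal R}_S(f_n) - {\cal R}_S(f^*)$.

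\textbf{Step 2 (from surrogate risk to the estimation error of $g$).} Using $L(z,S) = \scap{\psi(z)}{\phi(S)}$ and $g(x) = \E_\tau\bracket{\phi(S)\midvert X=x}$, one has ${\cal R}_S(f) = \E_X\bracket{\scap{\psi(f(X))}{g(X)}}$, and both predictors are pointwise minimizers of a linear functional: $f^*(x) \in \argmin_z \scap{\psi(z)}{g(x)}$ while $f_n(x) \in \argmin_z \scap{\psi(z)}{g_n(x)}$ with $g_n(x) = \sum_i \alpha_i(x)\phi(S_i)$. The standard structured-prediction comparison argument (add and subtract $g_n$, then use optimality of $f_n(x)$ against $g_n(x)$ and Cauchy--Schwarz) gives, for every $x$,
\[
  \scap{\psi(f_n(x)) - \psi(f^*(x))}{g(x)} \leq \scap{\psi(f_n(x)) - \psi(f^*(x))}{g(x) - g_n(x)} \leq 2\,c_\psi\,\norm{g(x) - g_n(x)},
\]
with $c_\psi = \sup_{z\in\Y}\norm{\psi(z)} < +\infty$ (finite because $\Y$ is finite). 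Taking expectations over $X$ and applying Jensen's inequality yields ${\cal R}_S(f_n) - {\cal R}_S(f^*) \leq 2 c_\psi\, \norm{g_n - g}_{L^2(\rho^\star_\X)}$.

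\textbf{Step 3 (estimation error of $g$) and conclusion.} The weights $\alpha(x) = (K + n\lambda)^{-1}v(x)$ make $g_n$ exactly the vector-valued kernel ridge regression (conditional mean embedding) estimator of $g$. Since the kernel is bounded and universal, $\phi(S)$ is bounded in ${\cal H}$, and $g\in{\cal G}$ (the well-specified case), I apply the finite-sample bound for this estimator from~\citet{Ciliberto2016,Nowak2019}; with $\lambda = n^{-1/2}$ the bias and variance terms balance and, with probability at least $1-\delta$, $\norm{g_n - g}_{L^2} \leq c\,\log\paren{\frac{8}{\delta}}^2 n^{-1/4}$ for a constant $c$ depending only on the kernel, $\norm{g}_{\cal G}$, and $\sup_S\norm{\phi(S)}$. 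Chaining the three displays and collecting constants into $\tilde C = 2Cc_\psi c$ gives exactly~\cref{eq:rates}. The main obstacle is Step~3: one must check that $g_n$ is genuinely the conditional mean embedding estimator and that the hypotheses (bounded universal kernel, finite $\Y$ giving bounded $\phi$, and $g\in{\cal G}$) match those required by the cited bound, and then track how the $\log\paren{\frac{8}{\delta}}^2 n^{-1/4}$ rate emerges from the bias--variance balance at $\lambda=n^{-1/2}$. A secondary subtlety in Step~2 is the interchange between $\inf_{y_i\in S_i}\sum_i\alpha_i(x)\ell(z,y_i)$ in~\cref{eq:algorithm} and $\scap{\psi(z)}{g_n(x)} = \sum_i\alpha_i(x)L(z,S_i)$; these coincide through the linear representation of $L$, which is what legitimizes treating $f_n$ as a minimizer of a linear functional of $g_n$.
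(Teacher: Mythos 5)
Your proposal is correct and follows essentially the same route as the paper: the paper likewise reduces the excess risk under ${\cal R}(\cdot;\rho^\star)$ to the surrogate excess risk ${\cal R}_S$ via \cref{thm:calibration}, notes that $L$ admits a decomposition $\scap{\psi(z)}{\phi(S)}$ because $\Y$ is finite (constructing $\psi,\phi$ explicitly from the loss matrix), and then invokes Theorems 4 and 5 of \citet{Ciliberto2016} as black boxes, whose internal content is precisely your Steps 2 and 3 (the add--subtract/Cauchy--Schwarz comparison argument and the kernel ridge / conditional-mean-embedding rate at $\lambda = n^{-1/2}$). The only difference is presentational: you inline sketches of the two cited theorems, while the paper cites them wholesale.
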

Those last two theorem are proven in \cref{proof:consistency} and combines the
consistency and learning results for  kernel ridge regression
\cite{Caponnetto2007,Smale2007}, with a comparison inequality
of~\citet{Ciliberto2016} which relates the excess risk of the structured
prediction problem with the one of the surrogate loss ${\cal R}_S$, together
with our \cref{thm:calibration}, which relates the error ${\cal R}$ to ${\cal
  R}_S$. 

Thoses results make our algorithm the first algorithm for partial labelling,
that to our knowledge is applicable to a generic loss $\ell$ and has strong
theoretical guarantees as consistency and learning rates. In the next section we
will compare with the state of the art and other variational principles. 

\section{Previous works and baselines}
\label{sec:inconsistency}
Partial labelling was first approached through discriminative models, proposing to
learn $\paren{Y \midvert X}$ among a family of parameterized distributions
by maximizing the log likelihood
based on expectation-maximization scheme~\cite{Jin2002}, eventually integrating
knowledge on the partial labelling process~\cite{Grandvalet2002,Papandreou2015}.
In the meanwhile, some applications of clustering methods have involved special
instances of partial labelling, like segmentation approached with spectral
method~\cite{Weiss1999}, semi-supervision approached with
max-margin~\cite{Xu2004}. 
Also initially geared towards clustering,~\citet{Bach2007} consider the infimum
principle on the mean square loss, and this was generalized to weakly supervised
problems~\cite{Joulin2010}. 
The infimum loss as an objective to minimize when learning from partial labels
was introduced by~\citet{Cour2011} for the classification instance and used
by~\citet{Luo2010,Hullermeier2014} in generic cases. 
Comparing to those last two, we provide a
framework that derives the use of infimum loss from first principles and from
which we derive an explicit and easy to train algorithm with strong statistical
guarantees, which were missing in previous work.
In the rest of the section, we will compare the infimum loss with other
variational principles that have been considered in the literature, in
particular the supremum loss~\cite{Guillaume2017} and the average loss~\cite{Denoeux2013}. 

\paragraph{Average loss (\emph{AC}).}
A simple loss to deal with uncertainty is to average over all potential
candidates, assuming $S$ discrete,
\[
  L_{\textit{ac}}(z, S) = \frac{1}{\module{S}} \sum_{y\in S} \ell(z, y).
\]
It is equivalent to a fully supervised distribution $\rho_{\textit{ac}}$ by
sampling $Y$ uniformly at random among $S$ 
  \[ \rho_{\textit{ac}}(y) = \int_{{\cal S}} \frac{1}{\module{S}} \ind{y\in S} \diff\tau(S).\]
This directly follows from the definition of $L_{\textit{ac}}$ and of the risk
${\cal R}(z; \rho_{\textit{ac}})$. However, as soon as the loss $\ell$ has
discrepancy, {\em i.e.} $\nu > 0$, the average loss will implicitly advantage
some labels, which can lead to inconsistency, even in the deterministic not
ambiguous setting of~\cref{thm:calibration} (see \cref{app:other-losses} for
more details).

\paragraph{Supremum loss (\emph{SP}).}
Another loss that have been considered is the supremum
loss~\cite{Wald1945,Madry2018}, bounding from above the fully supervised risk
in~\cref{eq:risk}. It is widely used in the context of robust risk minimization
and reads
\[
  R_{\textit{sp}}(f) = \sup_{\rho\vdash\tau} \E_{(X,Y)\sim\rho}\bracket{\ell(f(x), S)}.
\]
Similarly to the infimum loss in~\cref{thm:infimum-loss}, this risk can be
written from the loss function
\[
  L_{\textit{sp}}(z, S) = \sup_{y\in S} \ell(z, y).
\]
Yet, this adversarial approach is not consistent for partial labelling, even in
the deterministic non ambiguous setting of~\cref{thm:calibration}, since it
finds the solution that best agrees with {\em all} the elements in $S$ and not
only the true one (see \cref{app:other-losses} for more details).

\subsection{Instance showcasing superiority of our method}
In the rest of this section, we consider a pointwise example to showcase the
underlying dynamics of the different methods.
It is illustrated in~\cref{fig:inconsistency}.
Consider $\Y = \brace{a, b, c}$ and a proper symmetric loss function such that
$\ell(a, b) = \ell(a, c) = 1$, $\ell(b, c) = 2$. 
The simplex $\Prob{\Y}$ is naturally split into decision regions, for $e\in\Y$,
\[
  R_e = \brace{\rho\in\Prob{\Y} \midvert e\in\argmin_{z\in\Y}\E_{\rho}[\ell(z, Y)]}.
\]
Both {\em IL} and {\em AC} solutions can be understood geometrically by looking
at where $\rho^\star$ and $\rho_{\textit{ac}}$ fall in the partition of the
simplex $(R_e)_{e\in\Y}$. Consider a fully supervised problem with distribution
$\delta_c$, and a weakening $\tau$ of $\rho$ defined by
$\tau(\brace{a, b, c}) = \frac{5}{8}$ and
$\tau(\brace{c}) = \tau(\brace{a,c}) = \tau(\brace{b,c}) = \frac{1}{8}$.
This distribution can be represented on the simplex  in terms of the region
$R_\tau = \brace{\rho\in\Prob{\Y}\midvert \rho\vdash \tau}$. 
Finding $\rho^\star$ correspond to minimizing the piecewise linear function $
{\cal E}(\rho)$~(\cref{eq:infimum-disambiguation}) inside $R_\tau$. On this
example, it is minimized for $\rho^\star = \delta_c$, which we know from
~\cref{thm:calibration}. Now note that if we use the average loss, it
disambiguates $\rho$ as 
\[
  \rho_{\textit{ac}}(c) = \frac{11}{24} =
  \frac{1}{3}\frac{5}{8} + \frac{1}{8} + 2\cdot\frac{1}{2}\frac{1}{8}, \quad
  \rho_{\textit{ac}}(b) = \rho_{\textit{ac}}(a) = \frac{13}{48}.
\]
This distribution falls in the decision region of $a$, which is inconsistent
with the real label $y=c$. 
For the supremum loss, one can show, based on ${\cal R}_{\textit{sp}}(a) = \ell(a, c) =
1$, ${\cal R}_{\textit{sp}}(b) = \ell(b, c) = 2$ and ${\cal R}_{\textit{sp}}(c) =
3/2$, that the supremum loss is minimized for $z = a$, which is also
inconsistent. Instead, by using the infimum loss, we have $f^* = f_0 = c$, and
moreover that $\rho^\star = \rho_0$ that is the optimal one.
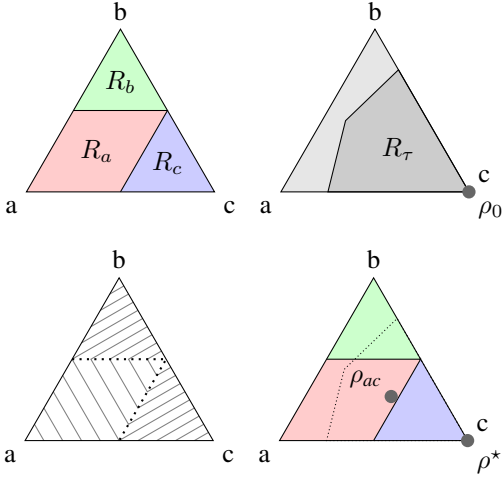
\begin{figure}[h]
  \centering
  \begin{tikzpicture}[scale=2.5]
  \coordinate(a) at (0, 0);
  \coordinate(b) at ({1/2}, {sin(60)});
  \coordinate(c) at (1, 0);
  \coordinate(ha) at ({3/4}, {sin(60)/2});
  \coordinate(hb) at ({1/2}, 0);
  \coordinate(hc) at ({1/4}, {sin(60)/2});

  \coordinate (mc) at (1,-.25);
  \coordinate (mca) at (0,-.25);
  \fill[fill=white] (a) -- (c) -- (mc) -- (mca) -- cycle;

  \fill[fill=red!20] (a) -- (hb) -- (ha) -- (hc) -- cycle;
  \fill[fill=green!20] (b) -- (ha) -- (hc) -- cycle;
  \fill[fill=blue!20] (c) -- (ha) -- (hb) -- cycle;
  \draw (a) node[anchor=north east]{a} -- (b) node[anchor=south]{b} --
        (c) node[anchor=north west]{c} -- cycle;
  \draw (hb) -- (ha) -- (hc);
  \node at ({3/8}, {sin(60)/4}) {$R_a$};
  \node at ({1/2}, {3*sin(60)/4 - 1/16}) {$R_b$};
  \node at ({3/4}, {sin(60)/4 - 1/16}) {$R_c$};
\end{tikzpicture}
\begin{tikzpicture}[scale=2.5]
  \coordinate(a) at (0, 0);
  \coordinate(b) at ({cos(60)}, {sin(60)});
  \coordinate(c) at (1, 0);
  \coordinate(r) at ({1/4}, 0);
  \coordinate(s) at ({7/32 + 1/8}, {7*sin(60)/16});
  \coordinate(t) at ({3/8 + 1/4}, {3*sin(60)/4});

  \coordinate (mc) at (1,-.25);
  \coordinate (mca) at (0,-.25);
  \fill[fill=white] (a) -- (c) -- (mc) -- (mca) -- cycle;

  \fill[fill=black!20] (c) -- (r) -- (s) -- (t) -- cycle;
  \fill[fill=black!10] (a) -- (r) -- (s) -- (t) -- (b) -- cycle;
  \draw (a) node[anchor=north east]{a} -- (b) node[anchor=south]{b} --
        (c) node[anchor=south west]{c} -- cycle;
  \draw (c) -- (r) -- (s) -- (t) -- cycle;
  \node at ({5/8}, {sin(60)/3 - 1/16}) {$R_{\tau}$};
  \fill[fill=black!60] (1, 0) circle (.1em) node[anchor=north west] {$\rho_0$};
\end{tikzpicture}
\begin{tikzpicture}[scale=2.5]  
  \coordinate (a) at (0,0) ;
  \coordinate (b) at ({1/2},{sin(60)}) ;
  \coordinate (c) at (1,0);
  \coordinate (mb) at (.25, {sin(60)});
  \coordinate (mba) at ({-.25*cos(30)},{.25*sin(30)});
  \coordinate (mc) at (1,-.25);
  \coordinate (mca) at (0,-.25);
  \coordinate(ha) at ({3/4}, {sin(60)/2});
  \coordinate(hb) at ({1/2}, 0);
  \coordinate(hc) at ({1/4}, {sin(60)/2});

  \foreach \x in {0,.05,...,.25}
  \draw[gray, rotate=30] ({sin(60)}, \x) -- ({sin(60) - (tan(60)*\x)}, \x) --
  ({sin(60) - (tan(60)*\x)}, -\x) -- ({sin(60)}, {-\x}); 
  \foreach \x in {0.25,.3,...,.5}
  \draw[gray, rotate=30] ({sin(60)}, \x) -- ({sin(60)-tan(60)*(.5-\x)}, \x);
  \foreach \x in {0.25,.3,...,.5}
  \draw[gray, rotate=30] ({sin(60)-tan(60)*(.5-\x)}, -\x) -- ({sin(60)}, {-\x}); 
  \foreach \x in {.25,.3,...,.5}
  \draw[gray, rotate=30] ({sin(60) - (tan(60)*\x)}, .25) -- ({sin(60) - (tan(60)*\x)}, -.25); 

  \fill[fill=white] (a) -- (c) -- (mc) -- (mca) -- cycle;
  \fill[fill=white] (a) -- (b) -- (mb) -- (mba) -- cycle;
  \draw (a) node[anchor=north east]{a} -- (b) node[anchor=south]{b} --
  (c) node[anchor=north west]{c} -- cycle;
  \draw[dotted, thick] (hc) -- (ha) -- (hb);
\end{tikzpicture}
\begin{tikzpicture}[scale=2.5]
  \coordinate(a) at (0, 0);
  \coordinate(b) at ({1/2}, {sin(60)});
  \coordinate(c) at (1, 0);
  \coordinate(ha) at ({3/4}, {sin(60)/2});
  \coordinate(hb) at ({1/2}, 0);
  \coordinate(hc) at ({1/4}, {sin(60)/2});
  \coordinate(r) at ({1/4}, 0);
  \coordinate(s) at ({7/32 + 1/8}, {7*sin(60)/16});
  \coordinate(t) at ({3/8 + 1/4}, {3*sin(60)/4});

  \coordinate (mc) at (1,-.25);
  \coordinate (mca) at (0,-.25);
  \fill[fill=white] (a) -- (c) -- (mc) -- (mca) -- cycle;

  \fill[fill=red!20] (a) -- (hb) -- (ha) -- (hc) -- cycle;
  \fill[fill=green!20] (b) -- (ha) -- (hc) -- cycle;
  \fill[fill=blue!20] (c) -- (ha) -- (hb) -- cycle;
  \draw (a) node[anchor=north east]{a} -- (b) node[anchor=south]{b} --
        (c) node[anchor=south west]{c} -- cycle;
  \draw (hb) -- (ha) -- (hc);
  \draw[densely dotted] (c) -- (r) -- (s) -- (t) -- cycle;
  \fill[fill=black!60] (1, 0) circle (.1em) node[anchor=north west] {$\rho^\star$};
  \fill[fill=black!60] ({13/96+11/24}, {13*sin(60)/48}) circle (.1em)
                       node[anchor=south east] {$\rho_{\textit{ac}}$};
\end{tikzpicture}
  \vspace*{-.3cm}  
  \caption{Simplex $\Prob{\Y}$. (Left) Decision frontiers. (Middle left) Full
    and weak distributions. (Middle right) Level curves of the piecewise linear
    objective ${\cal E}$~(\cref{eq:infimum-disambiguation}), to optimize when
    disambiguating $\tau$ into $\rho^\star$. (Right) Disambiguation of
    \emph{AC}  and \emph{IL}.}
\label{fig:inconsistency}
\end{figure}
\subsection{Algorithmic considerations for AC, SP}
The averaging candidates principle, approached with the framework of quadratic
surrogates \cite{Ciliberto2016}, leads to the following algorithm
\begin{align*}
  f_{\textit{ac}}(x) &\in \argmin_{z\in\Y}
      \sum_{i=1}^n \alpha_{i}(x)\frac{1}{\module{S_i}}\sum_{y\in S_i} \ell(z, y)
   \\ &= \argmin_{z\in\Y} \sum_{y\in\Y}
      \paren{\sum_{i=1}^n \mathbf{1}_{y\in S_i}\frac{\alpha_i(x)}{ \module{S_i}}} \ell(z,y).
\end{align*}
This estimator is computationally attractive because the inference complexity is
the same as the inference complexity of the original problem when approached with the
same structured prediction estimator. Therefore, one can directly reuse
algorithms developed to solve the original inference problem~\cite{Nowak2019}.
Finally, with a similar approach to the one in \cref{sec:algorithm}, we can
derive the following algorithm for the supremum loss
\[
  f_{\textit{sp}}(x) \in \argmin_{z\in\Y}
      \sup_{y_i \in S_i}\sum_{i=1}^n \alpha_i(x) \ell(z, y_i).
\]
In the next section, we will use the average candidates as baseline to compare
with the algorithm proposed in this paper, as the supremum loss consistently
performs worth, as it is not fitted for partial labelling.

\section{Applications and experiments}\label{sec:application}
In this section, we will apply \cref{eq:algorithm} to some synthetic and real
datasets from different prediction problems and compared with the average
estimator presented in the section above, used as a baseline. Code is available
online.\footnote{\url{https://github.com/VivienCabannes/partial_labelling}}

\subsection{Classification}\label{sec:classification}
Classification consists in recognizing the most relevant item among $m$ items.
The output space is isomorphic to the set of indices $\Y=\bbracket{1, m}$, and
the usual loss function is the 0-1 loss 
\[
  \ell(z, y) = \ind{y\neq z}. 
\]
It has already been widely studied with several approaches that are calibrated
in non ambiguous deterministic setting, notably by~\citet{Cour2011}. The infimum
loss reads $L(z, S) = \ind{z\notin S}$, and its risk in~\cref{eq:set-risk} is
minimized for 
\[
  f(x) \in \argmax_{z\in \Y} \PP\paren{z\in S \midvert X=x}.
\]
Based on data $(x_i, S_i)_{i\leq n}$, our estimator~\cref{eq:algorithm} reads
\[
  f_n(x) = \argmax_{z\in\Y} \sum_{i;z\in S_i} \alpha_i(x).
\]
For this instance, the supremum loss is really conservative, only learning from
set that are singletons $L_{\textit{sp}}(z, S) = \ind{S\neq\brace{z}}$, while the
average loss is similar to the infimum one, adding an evidence weight depending
on the size of $S$, $L_{\textit{ac}}(z, S) \simeq \ind{z\notin S} / \module{S}$. 
\begin{figure}[h]
  \centering
  \includegraphics{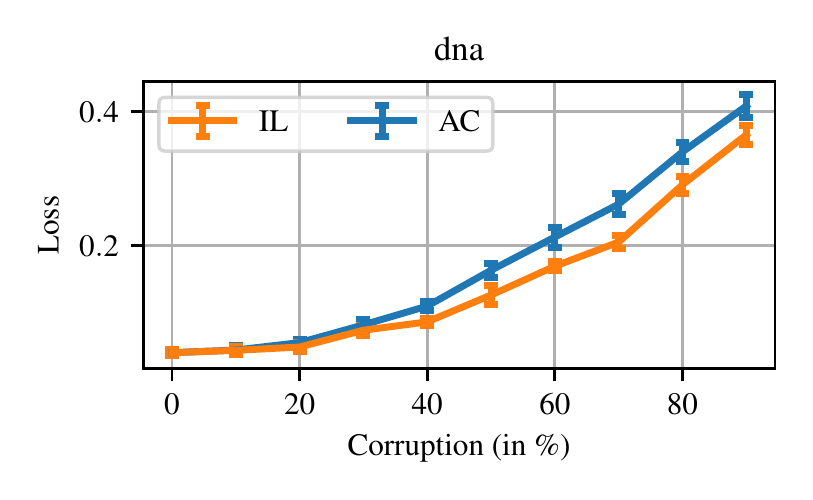}
  \vspace* {-.3cm}
  \includegraphics{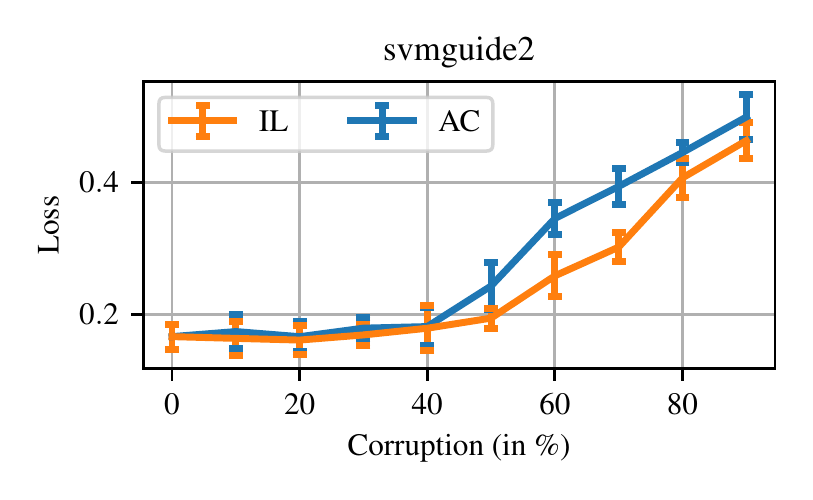}
  \vspace*{-.3cm}  
  \caption{Classification. Testing risks (from~\cref{eq:risk}) achieved by {\em
      AC} and {\em IL} on the ``dna'' and ``svmguide2'' datasets from {\em
      LIBSVM} as a function of corruption parameter $c$, when the corruption is
    as follows: for $y$ being the most present labels of the dataset, and
    $z'\neq z$,  $\PP\paren{z'\in S\midvert Y=z} = c\cdot \ind{z=y}$. Plotted
    intervals show the standard deviation on eight-fold cross-validation.
    Experiments were done with the Gaussian kernel. See all experimental details
    in~\cref{app:experiments}.}
  \label{fig:libsvm}
\end{figure}
\paragraph{Real data experiment.} To compare {\em IL} and {\em AC}, we used {\em
  LIBSVM} datasets~\cite{Chang2011} on which we corrupted labels to simulate
partial labelling. When the corruption is uniform, the two methods perform the
same. Yet, when labels are unbalanced, such as in the ``dna'' and ``svmguide2''
datasets, and we only corrupt the most frequent label $y\in\Y$, the infimum loss
performs better as shown in~\cref{fig:libsvm}. 

\subsection{Ranking}\label{sec:ranking}
Ranking consists in ordering $m$ items based on an input~$x$ that is often the
conjunction of a user $u$ and a query $q$, ($x=(u,q)$). An ordering can be
thought as a permutation, that is, $\Y=\Sfrak_m$. While designing a loss for
ranking is intrinsincally linked to a voting system \cite{Arrow1950}, making it
a fundamentally hard problem; \citet{Kemeny1959} suggested to approach it
through pairwise disagreement, which is current machine learning
standard~\cite{Duchi2010}, leading to the Kendall embedding 
\[
  \phi(y) = \paren{\sign\paren{y_i - y_j}}_{i<j \leq m},
\]
and the Kendall loss~\citep{Kendall1938}, with $C = m(m-1)/2$
\[
    \ell(y, z) = C - \phi(y)^T\phi(z). 
\]
Supervision often comes as partial order on items, {\em e.g.},
\[
  S = \brace{y\in\Sfrak_m \midvert y_i > y_j > y_k, y_l > y_m}.
\]
It corresponds to fixing some coordinates in the Kendall embedding. In this
setting, \emph{AC} and \emph{SP} are not consistent, as one can recreate a
similar situation to the one in~\cref{sec:inconsistency}, considering $m=3$, $a
= (1,2,3)$, $b=(2,1,3)$ and $c=(1,3,2)$ (permutations being represented with
$(\sigma^{-1}(i))_{i\leq m}$), and supervision being most often $S = (1>3) =
\brace{a,b,c}$ and sometimes $S = (1>3>2) = \brace{c}$.

\paragraph{Minimum feedback arc set.}
Dealing with Kendall's loss requires to solve problem of the form,
\[
 \argmin_{y \in S} \scap{c}{\phi(y)},
\]
for $c\in\R^{m^2}$, and constraints due to partial ordering encoded in
$S\subset\Y$. This problem is an instance of the constrained minimum feedback
arc set problem.We provide a simple heuristic to solve it in~\cref{app:fas},
which consists of approaching it as an integer linear program. Such heuristics
are analyzed and refined for analysis purposes
by~\citet{Ailon2005,vanZuylen2007}.

\paragraph{Algorithm specification.}
At inference, the infimum loss requires to solve:
\begin{equation}\tag{\ref{eq:algorithm}}
  f_n(x) = \argmax_{z\in\Y} \sup_{(y_i) \in S_i} \sum_{i=1}^n \alpha_i(x) \scap{\phi(z)}{\phi(y_i)}.
\end{equation}
It can be approached with alternate minimization, initializing $\phi(y_i) \in
\hull(\phi(S_i))$, by putting $0$ on unseen observed pairwise comparisons, then,
iteratively, solving a minimum feedback arc set problem in $z$, then solving
several minimum feedback arc set problems with the same objective, but different
constraints in $(y_i)$. This is done efficiently using warmstart on the dual
simplex algorithm. 
\begin{figure}[h]
  \centering
  \includegraphics{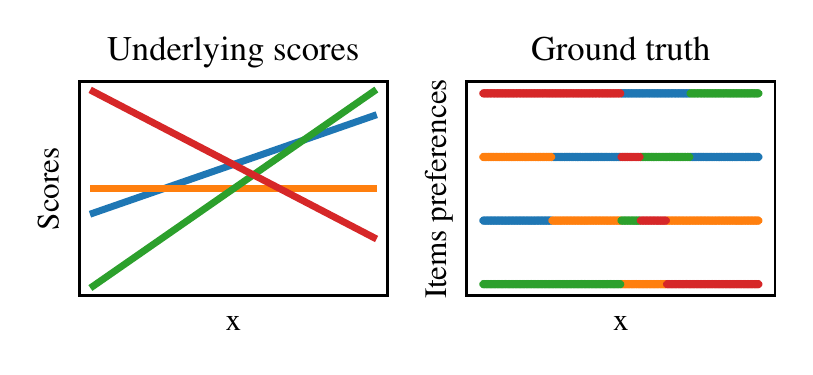}
  \vspace*{-.3cm}
  \caption{Ranking, experimental setting. Colors represent four different items
    to rank. Each item is associate to a utility function of $x$ shown on the
    left figure. From those scores, is retrieved an ordering $y$ of the items as
    represented on the right.}
  \label{fig:rk:setting}
\end{figure}
\begin{figure}[h]
  \centering
  \includegraphics{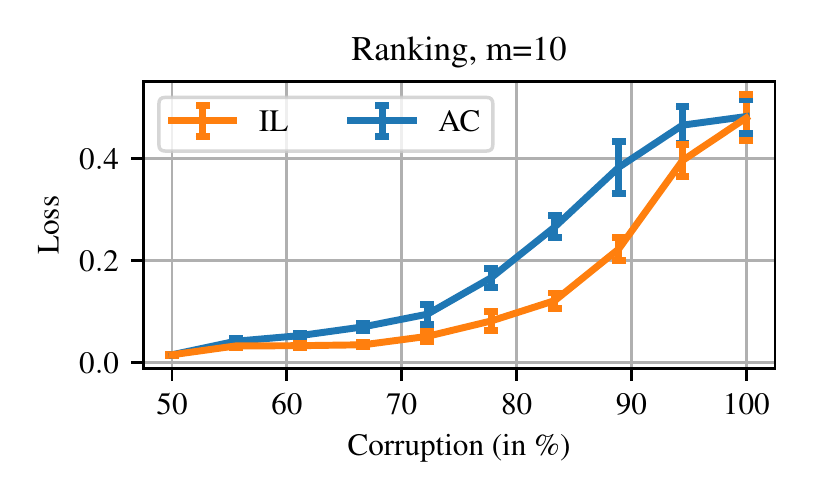}
  \vspace* {-.3cm}
  \caption{Ranking, results. Testing risks~(from \cref{eq:risk}) achieved by
    {\em AC} and {\em IL} as a function of corruption parameter $c$. When $c=1$,
    both risks are similar at $0.5$. The simulation setting is the same as
    in~\cref{fig:libsvm}. The error bars are defined as for~\cref{fig:libsvm},
    after cross-validation over eight folds. {\em IL} clearly outperforms {\em
      AC}.} 
  \label{fig:rk:corruption}
\end{figure}

\begin{figure*}[t]
  \centering
  \includegraphics{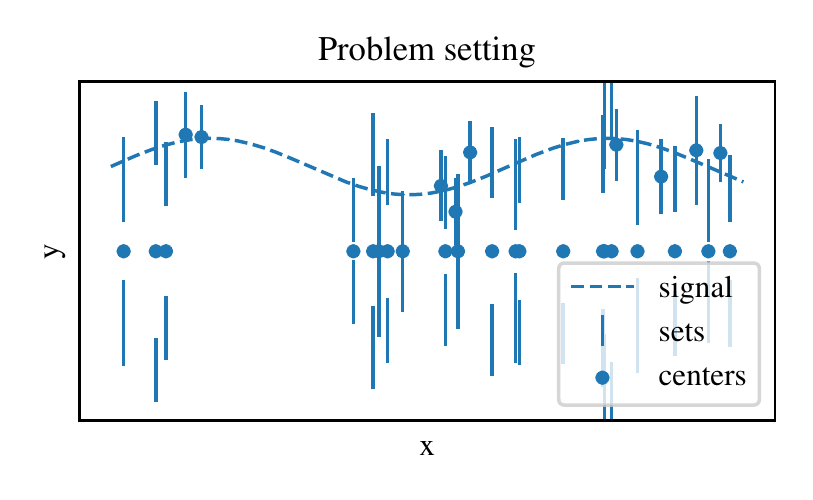}
  \vspace* {-.3cm}
  \includegraphics{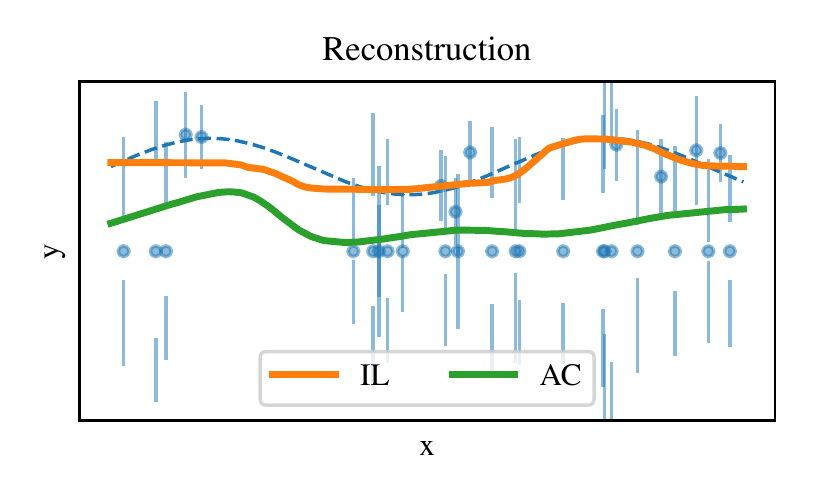}
  \vspace* {-.3cm}
  \caption{Partial regression on $\R$. In this setting we aim at recovering a
    signal $y(x)$ given upper and lower bounds on it amplitude, and in thirty
    percent of case, information on its phase, or equivalently in $\R$, its
    sign. {\em IL} clearly outperforms the baseline. Indeed {\em AC} is a
    particular ill-fitted method on such a problem, since it regresses on the
    barycenters of the resulting sets.} 
  \label{fig:interval-regression}
\end{figure*}

\paragraph{Synthetic experiments.}
Let us consider $\X = [0,1]$ embodying some input features. Let $\{1,\dots,m\}$,
$m \in \N$ be abstract items to order, each item being linked to a utility
function $v_i \in \R^\X$, that characterizes the value of $i$ for $x$ as
$v_i(x)$. Labels $y(x)\in\Y$ are retrieved by sorting $(v_i(x))_{i\leq m}$.
To simulate a problem instance, we set $v_i$ as $v_i(x) = a_i\cdot x + b_i$,
where $a_i$ and $b_i$ follow a standard normal distribution. Such a setting is
illustrated in~\cref{fig:rk:setting}. 

After sampling $x$ uniformly on $[0, 1]$ and retrieving the ordering $y$ based
on scores, we simulate partial labelling by randomly loosing pairwise
comparisons. The comparisons are formally defined as coordinates of the
Kendall's embedding $(\phi(y)_{jk})_{jk\leq m}$. To create non symmetric
perturbations we corrupt more often items whose scores differ a lot. In other
words, we suppose that the partial labelling focuses on pairs that are hard to
discriminate. The corruption is set upon a parameter $c\in[0,1]$. In fact, for
$m=10$, until $c=0.5$, our corruption is fruitless since it can most often be
inverted based on transitivity constraint in ordering, while the problem becomes
non-trivial with $c \geq 0.5$.  In the latter setting, {\em IL} clearly
outperforms {\em AC} on~\cref{fig:rk:corruption}. 

\subsection{Partial regression}\label{sec:partial-regression}
Partial regression is an example of non discrete partial labelling problem,
where $\Y=\R^m$ and the usual loss is the Euclidean distance 
\[
 \ell(y, z) = \norm{y - z}^2.
\]
This partial labelling problem consists of regression where observation are sets
$S\subset \R^m$ that contains the true output $y$ instead that $y$. Among
others, it arises for example in economical models, where bounds are preferred
over approximation when acquiring training labels~\cite{Tobin1958}.
As an example, we will illustrate how partial regression could appear for
some phase problems arising with physical measurements.
Suppose a physicist want to measure the law between a vectorial
quantity $Y$ and some input parameters~$X$. Suppose that, while she can record the
input parameters $x$, her sensors do not exactly measure $y$ but render an
interval in which the amplitude $\norm{y}$ lays and only occasionally render
its phase $y / \norm{y}$, in a fashion that leads to a set of candidates $S$ for
$y$. The geometry over $\ell^2$ makes it a perfect example to showcase
superiority of the infimum loss as illustrated in~\cref{fig:interval-regression}.

In this figure, we consider ${\cal Y} = \mathbb{R}$ and suppose that $Y$ is a
deterministic function of $X$ as shown by the dotted blue line signal.  If,
for a given $x_i$, measurements only provides that $\module{y_i} \in [1, 2]$
without the sign of $y_i$, a situation where the phase is lost, this
correspond to the set $S_i = [-2, -1] \cup [1, 2]$, explaining the shape of
observed sets that are symmetric around the origin.
Whenever the acquired data has no phase, which happen seventy percent of the
time in our simulation, AC will target the set centers, explaining the green
curve. On the other hand, IL is aiming at passing by each set, which
explains the orange curve, crossing all blue bars.

\section{Conclusions}
In this paper, we deal with the problem of weakly supervised learning, beyond
standard regression and classification, focusing on the more general case of
arbitrary loss functions and structured prediction. We provide a principled
framework to solve the problem of learning with partial labelling, from which a
natural variational approach based on the infimum loss is derived. We prove that
under some identifiability assumptions on the labelling process the framework is
able to recover the solution of the original supervised learning problem. The
resulting algorithm is easy to train and with strong theoretical guarantees. 
In particular we prove that it is consistent and we provide generalization error
rates. Finally the algorithm is tested on simulated and real datasets, showing
that when the acquisition process of the labels is more adversarial in nature,
the proposed algorithm performs consistently better than baselines. This paper
focuses on the problem of partial labelling, however the resulting mathematical
framework is quite flexible in nature and it is  interesting to explore the
possibility to extend it to tackle also other weakly supervised problems, as
imprecise labels from non-experts~\cite{Dawid1979}, more general constraints
over the set $(y_i)_{i\leq n}$ \cite{Quadrianto2009} or semi-supervision
\cite{Chapelle2006}.


\section*{Acknowledgements}
The authors would like to thanks Alex Nowak-Vila for precious discussions,
Yann Labb\'e for coding insights, as well as the reviewers and Eyke H{\"{u}llermeier} for their precious
time and remarks. This work was funded in part by the French
government under management of Agence Nationale de la Recherche as part of the
``Investissements d'avenir'' program, reference ANR-19-P3IA-0001 (PRAIRIE 3IA
Institute). We also acknowledge support of the European Research Council (grant
SEQUOIA 724063). 

\bibliography{main}
\bibliographystyle{style/icml2020}


\onecolumn

\appendix

\section{Proofs}
\label{app:proof}

In the paper, we have implicitely considered $\X, \Y$ separable and completely
metrizable topological spaces, {\em i.e.} Polish spaces, allowing to consider
probabilities. Moreover, we assumed that $\Y$ is compact, to have minimizer well
defined. The observation space was considered to be the set of closed subsets of
$\Y$ endowed with the Hausdorff distance, ${\cal S} = {\operatorname{Cl}(\Y), d_H}$.
As such, ${\cal S}$ is also a Polish metric space, inheriting this property from
$\Y$ \citepappendix{appBeer1993}. In the following, we will show that the
closeness of sets is important in order to switch from the minimum variability
principle to the infimum loss.

In term of notations, we use the simplex notation $\Prob{\cal A}$ to denote the
space of Borel probability measures over the space ${\cal A}$. In particular,
$\Prob{\X\times\Y}$, $\Prob{\X\times{\cal S}}$ and $\Prob{\X\times\Y\times{\cal S}}$
are endowed with the weak-* topology and are Polish, inheriting the properties
from original spaces \citeappendix{appAliprantis2006}. The fact that such spaces
are Polish allows to define the conditional probabilities given $x \in \X$. We
will denote this conditional probability $\rho\vert_x$ when, for example,
$\rho\in\Prob{\X\times\Y}$. Finally, we will denote by $\rho_\X$ the marginal of
$\rho$ over $\X$.

Before diving into proofs, we would like to point out that many of our results
are pointwise results. At an intuitive level, we only leverage the structure of
the loss on the output space and aggregate those results over $\X$.

\begin{remark}[Going pointwise]
The learning frameworks in~\cref{eq:risk,eq:infimum-disambiguation,eq:set-risk}
are {\em pointwise separable} as their solutions can be written as aggregation
of pointwise solutions \citeappendix{appDevroye1996}. More exactly, the partial
labelling risk (and similarly the fully supervised one) can be expressed as
\[
  {\cal R}_S(f) = \E_X\bracket{{\cal R}_{S,X}(f(X))},
\]
where the conditional risk reads,
\[
  {\cal R}_{S, x}(z) = \E_{S\sim\tau\vert_x}\bracket{L(z, S)},
\]
with $\tau\vert_x$ the conditional distribution of $\paren{S\midvert X=x}$.
Thus, minimizing ${\cal R}_S$ globally for $f\in\Y^\X$ is equivalent to minimizing
locally ${\cal R}_{S,x}$ for $f(x)$ for almost all $x$. Similarly,
for~\cref{eq:infimum-disambiguation}, 
\[
  {\cal E}(\rho) = \inf_{f:\X\to\Y}\E_{\rho}\bracket{\ell(f(X), Y)}
  = \E_X\bracket{\inf_{z\in\Y} \E_{Y\sim\rho\vert_x}\bracket{\ell(z, Y) \midvert X=x}}.
\]
Therefore studies on risk can be done pointwise on instances 
$(\ell, \rho\vert_x, \tau\vert_x)$, before integrating along $\X$. Actually,
\cref{thm:ambiguity,thm:infimum-loss,thm:non-ambiguity,thm:calibration} are
pointwise results.
\end{remark}

\subsection{Proof of~\cref{thm:ambiguity}}
\label{proof:ambiguity}

Here we want to prove that when $\tau$ is non-ambigouous, then it is possible to
define an optimal $\rho^\star$ that is deterministic on $\Y$, and that this
$\rho^\star$ is characterized by solving~\cref{eq:infimum-disambiguation}.
\begin{lemma}\label{lem:ambiguity}
  When $\tau$ is non ambiguous, and there is one, and only one, deterministic
  distribution eligible for $\tau$. More exactly, if we write, for any $x\in\X$
  in the support of $\tau_\X$, based on~\cref{def:non-ambiguity}, $S_x =
  \brace{y_x}$, then this deterministic distribution is characterized as
  $\rho\vert_x=\delta_{y_x}$ almost everywhere.  
\end{lemma}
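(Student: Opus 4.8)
The plan is to use the pointwise separability noted in the Remark to reduce the statement to a claim about conditional distributions at a fixed $x\in\X$ in the support of $\tau_\X$, and then to characterize exactly which deterministic conditionals $\delta_z$ are eligible for $\tau\vert_x$. Concretely, I will show that $\delta_z$ is eligible for $\tau\vert_x$ if and only if $z\in S_x$; since non-ambiguity forces $S_x=\brace{y_x}$ to be a singleton, this immediately yields both existence (take $z=y_x$) and uniqueness (no other $z$ works).

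First I would unfold eligibility at the level of conditionals. By disintegrating the witnessing measure $\pi$ over $x$, the condition $\rho\vdash\tau$ amounts, for almost every $x$, to the existence of a coupling $\pi\vert_x\in\Prob{\Y\times{\cal S}}$ whose $\Y$-marginal is $\rho\vert_x$, whose ${\cal S}$-marginal is $\tau\vert_x$, and which is supported on the set $\brace{(y,S)\;:\;y\in S}$ (this is the measure-theoretic reading of $y\notin S\Rightarrow\PP_\pi\paren{S\midvert Y=y}=0$, namely $\pi\paren{Y\notin S}=0$).

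For the forward direction, suppose $\rho\vert_x=\delta_z$ is eligible. Then any coupling $\pi\vert_x$ with $\Y$-marginal $\delta_z$ is supported on $\brace{z}\times{\cal S}$, and combined with the support constraint $y\in S$ it is supported on $\brace{(z,S)\;:\;z\in S}$. Its ${\cal S}$-marginal being $\tau\vert_x$ forces $\supp(\tau\vert_x)\subseteq\brace{S\;:\;z\in S}$, i.e. $z\in S$ for every $S\in\supp(\tau\vert_x)$, hence $z\in\bigcap_{S\in\supp(\tau\vert_x)}S=S_x$. For the backward direction, if $z\in S_x$ then $z\in S$ for all $S\in\supp(\tau\vert_x)$, so the product measure $\pi\vert_x=\delta_z\otimes\tau\vert_x$ has the right marginals and is supported on $\brace{(z,S)\;:\;z\in S}\subseteq\brace{(y,S)\;:\;y\in S}$; thus $\delta_z$ is eligible. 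Non-ambiguity then gives $S_x=\brace{y_x}$, so $z\in S_x$ holds precisely for $z=y_x$, which establishes the pointwise claim.

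Finally I would reassemble the pointwise conclusions: setting $\rho^\star\vert_x=\delta_{y_x}$ defines the unique deterministic eligible distribution, provided $x\mapsto y_x$ is measurable so that $\rho^\star$ is a genuine element of $\Prob{\X\times\Y}$ and the fibrewise couplings glue into a global $\pi$. I expect this measurable-selection and disintegration bookkeeping to be the main obstacle: one must justify that the map $x\mapsto S_x$ (an intersection over the support of $\tau\vert_x$) is measurable with a measurable singleton-valued selection, and that the fibrewise product couplings combine into a measurable family, both of which rely on the Polish-space assumptions recalled at the start of the appendix. The purely output-space argument above is elementary; all the genuine work lies in this measurability layer.
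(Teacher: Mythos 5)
Your proposal is correct and follows essentially the same route as the paper's proof: both work pointwise on conditionals and characterize deterministic eligible distributions by showing $\delta_z$ is eligible for $\tau\vert_x$ exactly when $z\in S_x$, with non-ambiguity then forcing uniqueness. Your explicit product coupling $\delta_z\otimes\tau\vert_x$ in the backward direction is just a concrete spelling-out of what the paper calls the ``reverse derivations,'' and your flagged measurability/gluing concerns are likewise glossed over (not resolved) in the paper itself.
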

\begin{proof}
  Let us consider a probability measure $\tau\in\Prob{\X\times{\cal S}}$.
  We begin by working on the concept of eligibility. Consider
  $\rho\in\Prob{\X\times\Y}$ eligible for $\tau$ and a suitable $\pi$ as defined
  in~\cref{def:eligibility}. First of all, the condition that, for $y\in S$,
  $\PP_\pi\paren{S\midvert Y=y} = 0$, can be stated formally in term of measure as
  \[
    \pi(\brace{(x, y, S)\in\X\times\Y\times{\cal S}\midvert y\notin S}) = 0,
  \]
  from which we deduced that, for $y\in\Y$ and $x\in\X$,
  \begin{align*}
    \rho\vert_x(y) &= \pi\vert_x(\brace{y}\times{\cal S})
    = \pi\vert_x(\brace{y} \times \brace{S \in{\cal S}\midvert y\in S})
    \\&\leq \pi\vert_x(\Y \times \brace{S \in{\cal S}\midvert y\in S})
    = \tau\vert_x(\brace{S\in{\cal S}\midvert y\in S}).
  \end{align*}
  It follows that when $\rho$ is deterministic, if we write $\rho\vert_x =
  \delta_{y_x}$, then we have
  \(
    \tau\vert_x(\brace{S\in{\cal S}\midvert y_x\in S}) = 1,
  \)
  which means that $y_x$ is in all sets that are in the support of
  $\tau\vert_x$, or that, using notations of~\cref{def:non-ambiguity}, $y_x \in
  S_x$. 
  So far, we have proved that if there exists a deterministic distribution,
  $\rho\vert_x=\delta_{y_x}$, that is eligible for $\tau\vert_x$, we have
  $y_x\in S_x$. Reciprocally, one can do the reverse derivations, to show that
  if $\rho\vert_x = \delta_{y_x}$, with $y_x \in S_x$, for all $x\in\X$, then
  $\rho$ is elgible for $\tau$
  When $\tau$ is non-ambiguous, $S_x$ is a singleton and therefore, there could
  be only one deterministic eligible distribution for $\tau$, that is
  characterized in the lemma.
\end{proof}
Now we use the characterization of deterministic distribution through the
minimization of the risk~\cref{eq:risk}.
\begin{lemma}[Deterministic characterization]\label{lem:deterministic}
  When $\Y$ is compact and $\ell$ proper, deterministic distribution are exactly
  characterized by minimum variability~\cref{eq:infimum-disambiguation} as
  \[
    {\cal E}(\rho) = \inf_{f: \X\to\Y}\E_\rho\bracket{\ell(f(X), Y)} = 0.
  \]
\end{lemma}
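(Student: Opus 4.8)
The plan is to prove the equivalence in two directions, exploiting the properties of the proper loss $\ell$ and the compactness of $\Y$. We want to show that $\mathcal{E}(\rho) = 0$ if and only if $\rho$ is deterministic, i.e.\ $\rho\vert_x = \delta_{g(x)}$ for some measurable $g:\X\to\Y$ almost everywhere. By the \emph{Going pointwise} remark, it suffices to work conditionally on $x$, showing that $\inf_{z\in\Y}\E_{Y\sim\rho\vert_x}[\ell(z,Y)] = 0$ if and only if $\rho\vert_x$ is a Dirac mass, and then aggregate over $\X$.

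First I would establish the easy direction: if $\rho\vert_x = \delta_{y_x}$, then choosing $f(x) = y_x$ (or $z = y_x$ pointwise) gives $\E_{Y\sim\delta_{y_x}}[\ell(z,Y)] = \ell(y_x,y_x) = 0$, using that $\ell$ vanishes on the diagonal. Hence the pointwise infimum is $0$, and integrating over $x$ yields $\mathcal{E}(\rho) = 0$. For the deterministic function $g$ built this way to be admissible I should note that measurability of the selection $x\mapsto y_x$ is needed so that $f = g$ is a genuine competitor in the infimum defining $\mathcal{E}$; since $\mathcal{E}$ is defined as an infimum over $f:\X\to\Y$, the pointwise attainment is what matters.

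For the converse, suppose $\mathcal{E}(\rho) = 0$. Since the conditional risk $\mathcal{R}_{\cdot,x}(z) = \E_{Y\sim\rho\vert_x}[\ell(z,Y)]$ is continuous in $z$ (as $\ell$ is continuous) and $\Y$ is compact, the pointwise infimum is attained at some $z_x\in\Y$. The vanishing of $\mathcal{E}$ forces $\E_{Y\sim\rho\vert_x}[\ell(z_x,Y)] = 0$ for almost every $x$. Because $\ell(z_x,\cdot)$ is non-negative and its integral against $\rho\vert_x$ is zero, $\ell(z_x,Y) = 0$ $\rho\vert_x$-almost surely; but $\ell$ is strictly positive off the diagonal, so $\rho\vert_x$ must be supported on $\{Y : \ell(z_x,Y) = 0\} = \{z_x\}$. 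Therefore $\rho\vert_x = \delta_{z_x}$, i.e.\ $\rho$ is deterministic.

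The main obstacle is the measurability/selection issue rather than the pointwise logic: one must ensure that the pointwise minimizers $z_x$ can be chosen to form a measurable function of $x$, so that the argument truly descends from the global $\mathcal{E}(\rho)=0$ to an almost-everywhere pointwise statement and back. This is where the Polish-space structure and a measurable selection theorem enter, justifying the interchange between the global infimum over $f$ and the pointwise infima used in the \emph{Going pointwise} remark. Once that interchange is legitimate, the argument reduces to the elementary fact that a non-negative continuous function integrating to zero must vanish on the support of the measure, combined with the strict positivity of a proper $\ell$ away from the diagonal.
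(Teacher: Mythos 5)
Your proof is correct and follows essentially the same route as the paper's: existence of a minimizer via compactness of $\Y$ and continuity of $\ell$, then ${\cal E}(\rho)=0$ forces the conditional risk to vanish almost everywhere, and properness (strict positivity off the diagonal) forces each conditional $\rho\vert_x$ to be a Dirac mass. The only differences are cosmetic: you argue directly that the support of $\rho\vert_x$ is contained in $\{z_x\}$ where the paper argues by contradiction using two support points, and you explicitly flag the measurable-selection issue that the paper leaves implicit.
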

\begin{proof}
  Let's consider $\rho\in\Prob{\X\times\Y}$, because $\Y$ is compact and
  $\ell$ continuous, we can consider $f_\rho$ a minimizer of ${\cal R}(f;\rho)$.
  Let's now suppose that ${\cal R}(f_\rho; \rho) = 0$, since $\ell$ is
  non-negative, it means that almost everywhere
  \[
    \E_{Y\sim\rho\vert_x}\bracket{\ell(f_\rho(x), Y)} = 0.
  \]
  Suppose that $\rho\vert_x$ is not deterministic, then there is at least two
  points $y$ and $z$ in $\Y$ in its support, than, because $\ell$ is proper, we
  come to the absurd conclusion that 
  \[
    \E_{Y\sim\rho\vert_x}\bracket{\ell(f_\rho(x), Y)}
    \geq \rho\vert_x(y) \ell(f_\rho(x), y) + \rho\vert_x(z) \ell(f_\rho(x), z) > 0.
  \]
  So ${\cal R}(f_\rho; \rho) = 0$ implies that $\rho$ is deterministic.
  Reciprocally, when $\rho$ is deterministic it is easy to show that the risk is
  minimized at zero.
\end{proof}

\subsection{Proof of~\cref{thm:infimum-loss}}
\label{proof:infimum-loss}

At a comprehensive level, the~\cref{thm:infimum-loss} is composed of two parts:
\begin{itemize}
  \item A double minimum switch, to take the minimum over $\rho$ before the
    minimum over $f$, and for which we need some compactness assumption to
    consider the joint minimum.
  \item A minimum-expectation switch, to take the minimum over $\rho\vdash\tau$ as
    a minimum $y\in S$ before the expectation to compute the risk, and for which
    we need some measure properties. 
\end{itemize}
We begin with the minimum-expectation switch. To proceed with derivations, we
need first to reformulate the concept of eligibility in~\cref{def:eligibility}
in term of measures.
\begin{lemma}[Measure eligibility]\label{lem:eligibility}
  Given a probability $\tau$ over $\X \times {\cal S}$, the space of
  probabilities over $\X\times\Y$ satisfying $\rho \vdash \tau$ is characterized
  by all probability measures of the form
  \[
    \rho(C) = \int_{\X\times\Y\times{\cal S}} \ind{C}(x, y)
    \diff\pi\vert_{x,S}(y) \diff\tau(x, S),
  \]
  for any $C$ a closed subset of $\X\times\Y$, and
  where $\pi$ is a probability measure over $\X\times\Y\times{\cal S}$ that
  satisfies $\pi_{\X\times{\cal S}} = \tau$ and $\pi\vert_{x,S}(S) = 1$ for any
  $(x, S)$ in the support of $\tau$.
\end{lemma}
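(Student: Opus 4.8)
The plan is to prove the two inclusions by disintegrating $\pi$ along its marginal on $\X\times{\cal S}$, which is legitimate because all spaces involved are Polish. The starting observation is that, exactly as observed in the proof of~\cref{lem:ambiguity}, the pointwise eligibility condition $y\notin S \Rightarrow \PP_\pi(S\midvert Y=y)=0$ is equivalent to the single measure-theoretic statement that $\pi$ is concentrated on the set $\brace{(x,y,S)\midvert y\in S}$, i.e. $\pi(\brace{y\notin S})=0$. The whole lemma then reduces to translating this concentration property across the disintegration.

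First I would treat the forward direction. Suppose $\rho\vdash\tau$ with witness $\pi$. Since the marginal of $\pi$ on $\X\times{\cal S}$ equals $\tau$ and $\X\times\Y\times{\cal S}$ is Polish, I disintegrate $\pi$ over $\tau$, obtaining a $\tau$-almost surely unique family of probability measures $\pi\vert_{x,S}$ on $\Y$ with $\pi(A)=\int\int \ind{A}(x,y,S)\diff\pi\vert_{x,S}(y)\diff\tau(x,S)$. The concentration $\pi(\brace{y\notin S})=0$ then forces $\pi\vert_{x,S}(S)=1$ for $\tau$-almost every $(x,S)$; after modifying the family on a $\tau$-null set I may assume this holds on all of $\supp(\tau)$. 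Taking the image of the disintegration formula under the projection $(x,y,S)\mapsto(x,y)$ — equivalently integrating out $S$ — yields precisely $\rho(C)=\int\int \ind{C}(x,y)\diff\pi\vert_{x,S}(y)\diff\tau(x,S)$, the claimed form.

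Conversely, given any family $\pi\vert_{x,S}$ with $\pi\vert_{x,S}(S)=1$ on $\supp(\tau)$ and $\rho$ defined by the integral formula, I would reconstruct a joint $\pi$ on $\X\times\Y\times{\cal S}$ through the same disintegration formula and verify the three requirements of~\cref{def:eligibility}: the marginal of $\pi$ on $\X\times{\cal S}$ is $\tau$ by construction, its marginal on $\X\times\Y$ is $\rho$ by the defining formula, and the eligibility implication holds because $\pi\vert_{x,S}(S)=1$ makes $\pi(\brace{y\notin S})=0$, which is the reformulated condition. Hence every $\rho$ of the stated form is eligible, closing the characterization.

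The step I expect to be delicate is not the disintegration itself (guaranteed by the Polish structure) but ensuring that the objects appearing are genuinely measurable — in particular that the set $\brace{(y,S)\in\Y\times{\cal S}\midvert y\in S}$ is Borel, so that $\pi(\brace{y\notin S})$ and $\pi\vert_{x,S}(S)$ even make sense. This is exactly where closedness of the sets in ${\cal S}$ enters: since ${\cal S}$ carries the Hausdorff distance, the map $(y,S)\mapsto d(y,S)$ is continuous, and because each $S$ is closed one has $y\in S \iff d(y,S)=0$; thus $\brace{y\in S}$ is closed and in particular Borel. I would also check that $(x,S)\mapsto\pi\vert_{x,S}(S)$ is measurable, so that the almost-everywhere statements are well posed, which follows from measurability of the disintegration together with the Borel measurability of the graph $\brace{y\in S}$.
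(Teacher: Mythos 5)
Your proposal is correct and follows essentially the same route as the paper's proof: reformulating eligibility as the concentration property $\pi\paren{\brace{(x,y,S) \midvert y\notin S}} = 0$, translating this through the disintegration of $\pi$ over its $\X\times{\cal S}$ marginal into $\pi\vert_{x,S}(S)=1$ on $\supp(\tau)$, and then projecting to obtain the integral formula for $\rho$. Your additional checks --- the explicit converse construction and the Borel-measurability of $\brace{(y,S)\midvert y\in S}$ via closedness and the Hausdorff metric --- are details the paper leaves implicit, but they do not change the argument.
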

\begin{proof}
  For any $\rho$ that is eligible for $\tau$ there exists a suitable $\pi$ on $\X
  \times \Y \times {\cal S}$ as specified by~\cref{def:eligibility}.
  Actually, the set of $\pi$ leading to an eligible $\rho:=\pi_{\X\times\Y}$ is
  characterized by satisfying $\pi_{\X\times{\cal S}} = \tau$ and
  \[
    \pi(\{(x,y,S) \in \X\times\Y\times{\cal S} ~|~ y \notin S\}) = 0.
  \]
  This last property can be reformulated with the complementary space as
  \[
    \pi(\{(x,y,S) \in \X\times\Y\times{\cal S} ~|~ y \in S\}) = 1,
  \]
  which equivalently reads, that for any $(x, S)$ in the support of $\tau$, we
  have
  \[
    \pi\vert_{x, S}(S) =  \pi\vert_{x, S}(\brace{y\in\Y\midvert y\in S}) = 1.
  \]
  Finally, using the conditional decomposition we have that, for $C$ a closed
  subset of $\X\times\Y$
  \[
    \rho(C) = \pi_{\X\times\Y}(C) 
    = \int_{\X\times\Y\times{\cal S}} \ind{C}(x, y)\diff\pi(x, y, S)
    = \int_{\X\times\Y\times{\cal S}} \ind{C}(x, y)\diff\pi\vert_{x, S}(y)
    \diff\pi_{\X\times{\cal S}}(x, S),
  \]
  which ends the proof since $\tau = \pi_{\X\times{\cal S}}$.
\end{proof}
We are now ready to state the minimum-expectation switch.
\begin{lemma}[Minimum-Expectation switch]\label{lem:eligibility-infimum}
  For a probability measure $\tau\in\Prob{\X\times{\cal S}}$, and measurable
  functions $\ell \in \R^{\Y\times\Y}$ and $f \in \Y^\X$, the infimum of eligible
  expectations of $\ell$ is the expectation of the infimum of $f$ over $S$ where
  $S$ is distributed according to $\tau$. Formally  
  \[
    \inf_{\rho\vdash\tau}\E_{(X, Y)\sim\rho}\bracket{\ell(f(X), Y)} =
    \E_{(X,S)\sim\tau}\bracket{\inf_{y\in S}\ell(f(X), y)}.
  \]
\end{lemma}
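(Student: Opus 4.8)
The plan is to prove the two inequalities separately, using Lemma~\ref{lem:eligibility} to reduce everything to a statement about the conditional kernels $\pi\vert_{x,S}$.

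\textbf{The easy direction ($\geq$).}
First I would show that the left-hand side dominates the right-hand side. Take any $\rho \vdash \tau$ with associated $\pi$ as in \cref{lem:eligibility}. Since $\pi\vert_{x,S}$ is supported on $S$ (as $\pi\vert_{x,S}(S)=1$), for almost every $(x,S)$ we have $\int_\Y \ell(f(x), y)\diff\pi\vert_{x,S}(y) \geq \inf_{y\in S}\ell(f(x),y)$, because we are averaging a function over a measure concentrated on $S$ against the infimum of that same function on $S$. Integrating this pointwise inequality against $\diff\tau(x,S)$ and using the disintegration formula from \cref{lem:eligibility},
\[
  \E_{(X,Y)\sim\rho}\bracket{\ell(f(X),Y)}
  = \int \ell(f(x),y)\diff\pi\vert_{x,S}(y)\diff\tau(x,S)
  \geq \E_{(X,S)\sim\tau}\bracket{\inf_{y\in S}\ell(f(X),y)}.
\]
Taking the infimum over all $\rho \vdash \tau$ on the left preserves the inequality.

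\textbf{The hard direction ($\leq$).}
Here I must exhibit an eligible $\rho$ whose expectation approaches (or attains) the right-hand side. The natural candidate is to concentrate $\pi\vert_{x,S}$ on a minimizer $y^*(x,S) \in \argmin_{y\in S}\ell(f(x),y)$, i.e. set $\pi\vert_{x,S} = \delta_{y^*(x,S)}$. Since $\Y$ is compact and $\ell$ (hence $y\mapsto \ell(f(x),y)$) is continuous and $S$ is closed, this infimum is attained, so such a $y^*$ exists pointwise. Plugging this $\pi$ into \cref{lem:eligibility} yields an eligible $\rho^*$ satisfying $\E_{\rho^*}[\ell(f(X),Y)] = \E_\tau[\inf_{y\in S}\ell(f(X),y)]$ exactly, which would give the reverse inequality and in fact equality.

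\textbf{The main obstacle: measurable selection.}
The crux is that defining $\rho^*$ through \cref{lem:eligibility} requires $\pi\vert_{x,S} = \delta_{y^*(x,S)}$ to be a genuine (measurable) probability kernel, which in turn requires the selection $(x,S)\mapsto y^*(x,S)$ to be measurable, not merely pointwise well-defined. This is exactly where I expect the difficulty to lie and where the Polish-space assumptions on $\X,\Y,{\cal S}$ (and the closedness of $S$ under the Hausdorff distance, flagged in the appendix preamble) are needed. I would invoke a measurable-selection theorem --- e.g. the Kuratowski--Ryll-Nardzewski theorem or a Berge-type argument --- applied to the set-valued map $(x,S)\mapsto \argmin_{y\in S}\ell(f(x),y)$, whose graph is closed thanks to continuity of $\ell$ and closedness of $S$, to produce a measurable selector $y^*$. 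Once measurability of $y^*$ is secured, the Dirac kernel is measurable, $\rho^*$ is well-defined and eligible, and the two inequalities combine to give the claimed identity. If one only wants the stated infimum (not attainment), a cheaper route is an $\epsilon$-approximate measurable selection, avoiding the sharpest form of the selection theorem but still requiring the same topological hypotheses.
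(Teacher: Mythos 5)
Your proposal follows essentially the same route as the paper's own proof: the same ($\geq$) direction obtained by integrating the pointwise bound $\int_\Y \ell(f(x),y)\diff\pi\vert_{x,S}(y) \geq \inf_{y\in S}\ell(f(x),y)$ for kernels supported on $S$, and the same ($\leq$) direction via the Dirac kernel $\pi\vert_{x,S}=\delta_{y^*(x,S)}$ concentrated on a minimizer whose existence follows from compactness of $S$ and continuity of $\ell$. The only difference is that you explicitly justify measurability of the selector $(x,S)\mapsto y^*(x,S)$ through a Kuratowski--Ryll-Nardzewski or Berge-type selection theorem, a point the paper's proof passes over silently (it defines $Y(x,S)=\argmin_{y\in S}\ell(f(x),y)$ pointwise and uses it as a kernel without comment), so your treatment is if anything the more complete one.
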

\begin{proof}
  Before all, note that $(x, S) \to \inf_{y\in S} \ell(f(x), y)$ inherit
  measurability from $f$ allowing to consider such an expectation
  \citepappendix[see Theorem 18.19 of][and references therein for
  details]{appAliprantis2006}.
  Moreover, let us use~\cref{lem:eligibility} to reformulation the
  right handside problem as
  \[
    \inf_{\rho\vdash\tau}\E_{(X, Y)\sim\rho}\bracket{\ell(f(X), Y)} =
    \inf_{\pi \in {\cal M}} \int_{\X\times\Y\times{\cal S}} \ell(f(x),
    y)\diff\pi_{x, S}(y)\diff\tau(x, S).
  \]
  Where we denote by ${\cal M} \subset \Prob{\X\times\Y\times{\cal S}}$ the
  space of probability measures $\pi$ that satify the assumption
  of~\cref{lem:eligibility}. We will now prove the equality by showing that both
  quantity bound the other one. 
  
  \paragraph{($\geq$).} To proceed with the first bound, notice that for
  $x\in\X$ and $S\in{\cal S}$, when $\pi\vert_{x, S}\in\Prob{\Y}$ only charge
  $S$, {\em i.e.} if $\pi \in {\cal M}$, then
  \[
    \int_\Y \ell(f(x), y) \diff\pi_{x, S}(y) \geq \inf_{y\in S} \ell(f(x), y).
  \]
  The first bound is then obtained by taking the expectation over $\tau$ of this
  poinwise property.

  \paragraph{($\leq$).} For the second bound, we consider the function $Y \in
  \Y^{\X\times{\cal S}}$ define as
  \[
    Y(x, S) = \argmin_{y\in S} \ell(f(x), y).
  \]
  Such a function is well defined since $S$ is compact due to the fact that $\Y$
  is compact and ${\cal S}$ is the set of closed set. However, in more general
  cases, one can consider a sequence that minimize $\ell(f(x), y)$ rather than
  the argmin to show the same as what we are going to show.
  Now, if we define $\pi^{(f)}$ with $\pi^{(f)}_{\X\times{\cal S}}:= \tau$ and
  $\pi^{(f)}\vert_{x, S} := \delta_{Y(x,S)}$, because $Y(x, S)$ is in $S$, we
  have that $\pi^{(f)}$ is in ${\cal M}$, so, for $x \in \X$ and $S\in{\cal S}$
  \[
    \inf_{\pi\in{\cal M}} \int_{\Y} \ell(f(x), y)\diff\pi_{x, S}(y) \leq
    \int_{\Y} \ell(f(x), y)\diff\pi^{(f)}_{x, S}(y) = \ell(f(x), Y(x, S)) = 
    \inf_{y\in S} \ell(f(x), y).
  \]
  We end the proof by integrating this over $\tau$.
\end{proof}
Now, we will move on to the minimum switch. First, we make sure that the infimum
loss minimizer is well defined.
\begin{lemma}[Infimum loss minimizer]
  When $\Y$ is compact and the observed set are closed, there exists a
  measurable function $f_S \in \Y^\X$ that minimize the infimum loss risk
  \[
    {\cal R}_S(f_S) = \inf_{f:\X\to\Y} {\cal R}_S(f), \qquad\text{where}\qquad
    {\cal R}_S (f) = \int \min_{y \in S} \ell(f(x), y) \diff\tau(x, S).
  \]
  The infimum on the right handside being a minimum because $S$ is a closed
  subset of $\Y$ compact, and therefore, is compact.
\end{lemma}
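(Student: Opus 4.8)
The plan is to reduce the global minimization of ${\cal R}_S$ to a pointwise problem and then produce a minimizer by a measurable selection argument. Following the \emph{Going pointwise} remark, I would write ${\cal R}_S(f) = \E_X\bracket{{\cal R}_{S,X}(f(X))}$, where the conditional risk is
\[
  g(x, z) := {\cal R}_{S,x}(z) = \int_{\cal S} L(z, S)\,\diff\tau\vert_x(S),
  \qquad L(z, S) = \min_{y\in S}\ell(z, y),
\]
with $L$ the infimum loss of \cref{eq:infimum-loss}. Constructing $f_S$ then amounts to exhibiting a measurable map $x\mapsto f_S(x)$ with $f_S(x)\in\argmin_{z\in\Y} g(x, z)$ for $\tau_\X$-almost every $x$. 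Once this is done, $g(x, f_S(x)) \leq g(x, f(x))$ for every measurable $f$ and every $x$, and integrating against $\tau_\X$ yields ${\cal R}_S(f_S)\leq {\cal R}_S(f)$, which is the claim.

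First I would establish the pointwise facts. The infimum defining $L(z, S)$ is attained because $S$, being a closed subset of the compact space $\Y$, is itself compact, while $\ell(z, \cdot)$ is continuous. Since $\ell$ is continuous on the compact product $\Y\times\Y$ it is bounded and uniformly continuous, so that $\module{L(z, S) - L(z', S)} \leq \sup_{y\in\Y}\module{\ell(z, y) - \ell(z', y)}$ tends to $0$ as $z'\to z$, uniformly in $S$. Dominated convergence then gives that $z\mapsto g(x, z)$ is continuous for each fixed $x$, and compactness of $\Y$ guarantees that this continuous function attains its minimum. Hence $\argmin_{z\in\Y} g(x, z)$ is a nonempty compact set for every $x$.

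The hard part will be the measurability of the selection, not the existence of the pointwise minimizer. To handle it I would verify that $g$ is a Carathéodory function: it is continuous in $z$ (just shown), and for each fixed $z$ it is measurable in $x$, because the disintegration $\tau = \int \tau\vert_x\,\diff\tau_\X(x)$ exists (the spaces being Polish) and $L(z, \cdot)$ is bounded and measurable, so $x\mapsto\int L(z, S)\,\diff\tau\vert_x(S)$ is measurable. Taking the trivially measurable, compact-valued correspondence $x\mapsto\Y$, I would then apply the measurable maximum theorem to $-g$ \citepappendix[Theorem 18.19 of][]{appAliprantis2006}, which provides both measurability of the value function and a measurable selector $f_S$ of the argmin correspondence. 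This $f_S$ is exactly the sought minimizer, and combined with the pointwise-to-global reduction of the first paragraph it completes the proof.
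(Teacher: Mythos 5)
Your proposal follows essentially the same route as the paper's proof: reduce ${\cal R}_S$ to the conditional risk $g(x,z)=\int L(z,S)\diff\tau\vert_x(S)$, verify a Carath\'eodory-type regularity (continuity in $z$, measurability in $x$), and invoke the measurable maximum theorem --- the very Theorem 18.19 of Aliprantis and Border that the paper cites --- to extract a measurable selector of the argmin correspondence; the pointwise-to-global integration step you spell out is implicit in the paper as well.

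The one place where you assert what the paper actually proves is the claim that $L(z,\cdot)$ is ``bounded and measurable'' as a function of $S$. Boundedness is immediate, but Borel measurability with respect to the Hausdorff-metric topology on ${\cal S}$ is not free, and it is precisely the substantive verification in the paper's proof: there one introduces the loss-induced metric $d(y,y')=\sup_{z\in\Y}\module{\ell(z,y)-\ell(z,y')}$ and shows
\[
\module{L(z,S)-L(z,S')}\leq \max\brace{\max_{y\in S}\min_{y'\in S'} d(y,y'),\ \max_{y'\in S'}\min_{y\in S} d(y,y')}=d_H(S,S'),
\]
so that $L(z,\cdot)$ is $1$-Lipschitz, hence continuous, hence Borel. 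Without some such argument, the integral defining $g(\cdot,z)$ is not even known to be well defined, so the Carath\'eodory property you feed into Theorem 18.19 is unjustified at exactly the point carrying the weight of the lemma. The gap is easy to fill --- either by the displayed estimate, or by the same uniform-continuity device you already use in the $z$ variable: if $d_H(S,S')<\delta$, every point of $S$ is $\delta$-close to a point of $S'$ and conversely, so $\module{L(z,S)-L(z,S')}$ is bounded by the modulus of continuity of $\ell(z,\cdot)$ at scale $\delta$, giving continuity of $L(z,\cdot)$ in $S$. With that line added, your proof is complete and coincides in substance with the paper's.
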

\begin{proof}
  First note that $d(y,y') = \sup_{z \in \Y} \module{\ell(z,y) - \ell(z,y')}$ is a
  metric on $\Y$ when $\ell$ is a proper loss. Indeed, triangular inequality holds
  trivially, moreover when $y = y'$ then $d(y,y') = 0$, when $y\neq y'$, by
  properness we have $\ell(y,y) = 0$ and $d(y, y') \geq \ell(y,y') > 0$.
  Moreover note that $L(z,S) = \min_{y \in S} \ell(z,y)$ is continuous and
  $1$-Lipschitz with respect to the topology induced by the Hausdorff distance
  $d_H$ based on $d$, indeed given two sets $S,S' \in {\cal S}$ 
  \begin{align*}
    \module{L(z,S) - L(z,S')} &\leq \max\brace{
                                \max_{y \in S}\min_{y' \in S'} \module{\ell(z,y)-\ell(z,y')},~
                                \max_{y' \in S'}\min_{y \in S} \module{\ell(z,y)-\ell(z,y')}} \\
                              &\leq \max\brace{
                                \max_{y \in S}\min_{y' \in S'} d(y,y'),~
                                \max_{y' \in S'}\min_{y \in S} d(y,y')} = d_H(S,S').
  \end{align*}
  The result of existence of a measurable $f_S$ minimizing ${\cal R}_S(f) =
  \int L(f(x),S) d\tau(x, S)$ follows by the compactness of $\Y$, the continuity
  of $L(z,S)$ in the first variable with respect to the topology induced by $d$,
  in the second with respect to the topology induced by $d_H$ and measurability
  of $\tau\vert_x$ in $x$, via Berge maximum theorem \citepappendix[see Thm.
  18.19 of][and references therein]{appAliprantis2006}.
\end{proof}
We can state the minimum switch now.
\begin{lemma}[Minimum switch]\label{lem:switch}
  When $\Y$ is compact, and observed sets are closed, solving the partial labelling
  through the minimum variability principle
  \[
    f^* \in \argmin_{f\in\Y^\X} \E_{\rho^\star}\bracket{\ell(f(X), Y)}, \qquad\text{with}\qquad
    \rho^\star \in \argmin_{\rho\vdash\tau} \inf_{f\in\Y^\X} \E_{\rho}\bracket{\ell(f(X), Y)}.
  \]
  can be done jointly in $f$ and $\rho$, and rewritten as 
  \[
    f^*\in\argmin_{f\in\Y^\X}\inf_{\rho\vdash\tau}\E_{\rho}\bracket{\ell(f(X), Y)}.
  \]
\end{lemma}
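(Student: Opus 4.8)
The plan is to reduce the two-stage optimization to a single joint optimization by commuting two infima and then matching the two sets of minimizers. Write $G(f, \rho) = \E_{\rho}\bracket{\ell(f(X), Y)}$ for $f \in \Y^\X$ and $\rho \vdash \tau$, so that the inner problem on the right reads ${\cal E}(\rho) = \inf_{f} G(f, \rho)$, while \cref{lem:eligibility-infimum} identifies the infimum-loss risk as ${\cal R}_S(f) = \inf_{\rho\vdash\tau} G(f, \rho)$. Since infima over a product domain commute, I would first record the value identity
\[
  v := \inf_{\rho\vdash\tau} {\cal E}(\rho) = \inf_{\rho\vdash\tau}\inf_f G(f,\rho) = \inf_f \inf_{\rho\vdash\tau} G(f,\rho) = \inf_f {\cal R}_S(f),
\]
which already shows the two problems share the same optimal value; it remains to prove they share the same minimizers $f^*$.

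For the forward inclusion (any two-stage solution solves the joint problem), I would take $f^*$ with $f^* \in \argmin_f G(f, \rho^\star)$ for some $\rho^\star$ achieving ${\cal E}(\rho^\star) = v$. Then $G(f^*, \rho^\star) = {\cal E}(\rho^\star) = v$, whence ${\cal R}_S(f^*) = \inf_{\rho\vdash\tau} G(f^*,\rho) \leq G(f^*,\rho^\star) = v$; as $v$ is the global infimum this forces ${\cal R}_S(f^*) = v$, so $f^*$ minimizes the infimum-loss risk. For the reverse inclusion, I would start from $f^*$ minimizing ${\cal R}_S$, so $\inf_{\rho\vdash\tau} G(f^*,\rho) = v$, and exhibit a compatible $\rho^\star$ that is optimal for ${\cal E}$ and for which $f^*$ is an inner minimizer. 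The natural candidate is the measure built inside the proof of \cref{lem:eligibility-infimum}: set $\pi^{(f^*)}\vert_{x,S} = \delta_{Y(x,S)}$ with $Y(x,S) = \argmin_{y\in S}\ell(f^*(x), y)$ and $\rho^\star = \pi^{(f^*)}_{\X\times\Y}$, which yields $G(f^*, \rho^\star) = \E_{\tau}\bracket{\inf_{y\in S}\ell(f^*(X), y)} = {\cal R}_S(f^*) = v$. Chaining $v \leq {\cal E}(\rho^\star) = \inf_f G(f,\rho^\star) \leq G(f^*,\rho^\star) = v$ then shows ${\cal E}(\rho^\star) = v$ (so $\rho^\star$ is optimal) and simultaneously that $G(f^*,\rho^\star)$ attains the inner infimum, so $f^*$ is a genuine two-stage solution. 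Applying this same construction to a minimizer $f_S$ of ${\cal R}_S$ (which exists by the infimum-loss minimizer lemma) also certifies that $\argmin_{\rho\vdash\tau}{\cal E}$ is nonempty, so the two-stage problem is well posed.

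The delicate point is precisely this reverse inclusion: turning a joint minimizer into a two-stage one requires the outer infimum over $\rho \vdash \tau$ to be \emph{attained} at a measure for which $f^*$ is simultaneously inner-optimal. This is where compactness enters. Because $\Y$ is compact and each $S$ is closed, hence compact, the pointwise selection $Y(x,S) = \argmin_{y\in S}\ell(f^*(x),y)$ is well defined and measurable, so $\pi^{(f^*)}$ is a legitimate element of the eligible set and certifies optimality. Without closedness of the observed sets the minimizing selection could fail to exist and one would only obtain minimizing sequences, recovering the value identity but not the equality of the argmin sets. I expect the forward inclusion and the value identity to be routine, and the admissibility of the constructed $\rho^\star$ (already handled in \cref{lem:eligibility-infimum}) to be the only step needing care.
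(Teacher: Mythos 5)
Your proposal is correct and follows essentially the same route as the paper: both directions hinge on the commuted value identity and, for the delicate reverse inclusion, on constructing the eligible measure $\pi^{(f^*)}$ with conditionals $\delta_{Y(x,S)}$ from \cref{lem:eligibility-infimum}, whose existence is exactly where compactness of $\Y$ and closedness of the sets enter. Your write-up is somewhat more explicit than the paper's (spelling out the inequality chains and the well-posedness of $\argmin_{\rho\vdash\tau}{\cal E}$), but the key ideas coincide.
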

\begin{proof}
  When $(\rho^\star, f^*)$ is a minimizer of the top problem, it also minimizes
  the joint problem $(\rho, f) \to {\cal R}(f; \rho)$, and we can switch the
  infimum order. The hard part is to show that when $f_S$ minimize the bottom
  risk, the infimum over $\rho$ is indeed a minimum. Indeed, we know
  from~\cref{lem:eligibility-infimum} that $f_S$ is characterized as a minimizer
  of the infimum risk ${\cal R}_S$, those are well defined as shown in precedent
  lemma. To $f_S$, we can associate $\rho_S := \pi^{(f)}$ as defined in the proof
  of~\cref{lem:eligibility-infimum}, which is due to the closeness of sets in
  ${\cal S}$ and the compactness of $\Y$.
  Indeed, $(f_S, \rho_S)$ minimize jointly the objective ${\cal R}(f, \rho)$, so we
  have that
  \[
    \rho_S \in \argmin_{\rho\vdash\tau}\inf_{f:\X\to\Y} {\cal R}(f;
    \rho),\qquad\text{and}\qquad
    f_S \in \argmin_{f:\X\to\Y}{\cal R}(f; \rho_S).
  \]
  From which we deduced that $\rho_S$ can be written as a $\rho^\star$ and $f_S$
  as a $f^*$.
\end{proof}
\begin{remark}[A counter example when sets are not closed.]
  The minimum switch relies on compactness assumption, that can be violated when
  the observed sets in ${\cal S}$ are not closed.
  Let us consider the case where $\Y = \R$, $\ell=\ell_2$ is the mean square
  loss. Consider the pointwise weak supervision 
  \[
    \tau = \frac{1}{2} \delta_{\Q} + \frac{1}{2}\delta_{\sqrt{2}\Q},
  \]
  In this case, we have $\rho^\star = \delta_0$. Yet, for any $z$, we do have
  ${\cal R}_{S, x}(z) = 0$ for any $z \in \R$. For example, if $z=\sqrt{2}$, one
  can consider 
  \[
    \rho_n = \frac{1}{2}\delta_{\sqrt{2}} + \frac{1}{2}\delta_{\frac{\floor{10^n\sqrt{2}}}{10^n}},
  \]
  to show that $z\in\argmin_{z\in\Y}\inf_{\rho\vdash\tau} {\cal R}(z, \rho)$.
  As one can see this is counter example is based on the fact that
  $\brace{\rho\midvert \rho\vdash\tau}$ is not complete, so that there exists
  infimum of ${\cal R}_{x}(z, \rho)$ that are not minimum such as ${\cal
    R}_x(\sqrt{2}, \delta_{\sqrt{2}})$. 
\end{remark}
\subsection{Proof of~\cref{thm:non-ambiguity}}
\label{proof:non-ambiguity}

If $\tau$ is not ambiguous, then, almost surely for $x\in\X$, if $y_x$ is the
only element in $S_x$ of~\cref{def:non-ambiguity}, we know that
$\rho^\star\vert_x = \delta_{y_x}$, and consequently we derive $f^*(x)=y_x$, so
for it to be consistent with $f_0$, we need that $f_0(x)=y_x$. 

Moreover, because $\tau$ is a weaking of $\rho_0$, $\rho_0$ is eligible for
$\tau$. When $\rho_0$ is deterministic, we know from considerations in the proof
of~\cref{lem:ambiguity}, that it is $\rho^\star$, the only deterministic
distribution eligible for $\tau$. Thus, in fact, the condition $S_x =
\brace{f_0(x)}$ is implied by $\rho_0$ deterministic. 

\subsection{Proof of~\cref{thm:calibration}}
\label{proof:calibration}

When $\tau$ is not ambiguous, we know from~\cref{thm:ambiguity}, that
$\rho^\star$ is deterministic. Let us write $\rho^\star\vert_x = \delta_{y_x}$,
we have $f^*(x) = y_x$, and ${\cal R}_x(f^*) = 0$, moreover, because $y_x$ is in
every $S$ in the support of $\tau\vert_S$, then ${\cal R}_{S, x}(f^*) = 0$.
Similarly to the bound given by~\citetappendix{appCour2011} for the 0-1
loss, we have
\begin{align*}
  {\cal R}_{S, x}(z) &= \E_{S\sim\tau\vert_x}[\inf_{z'\in S} \ell(z, z')]
  = \sum_{S; z\notin S} \inf_{z'\notin S} \ell(z, z')\PP_{S\sim\tau\vert_x}(S)
  \\&\geq \inf_{z'\neq z} \ell(z, z')\PP_{S\sim\tau\vert_x}(z\notin S)
  \geq \inf_{z'\neq z} \ell(z, z')\eta,
\end{align*}
while ${\cal R}_x(z) = \ell(z, y)$, so we deduce locally
\begin{align*}
  {\cal R}_{x}(z; \rho^\star\vert_x) - {\cal R}_{x}(f^*(x); \rho^\star\vert_x)
  &\leq \frac{\ell(z, y)}{\inf_{z'\neq z} \ell(z, z')} \eta^{-1} \paren{{\cal R}_{S, x}(z) - {\cal R}_{S, x}(f^*(x))}
  \\&\leq e^\nu\eta^{-1}\paren{{\cal R}_{S, x}(z) - {\cal R}_{S, x}(f^*(x))}.
\end{align*}
Integrating over $x$ this last equation gives us the bound
in~\cref{thm:calibration}.

\subsection{Refined bound analysis of~\cref{thm:calibration}}
\label{discussion:refinement-C}
The constant $C$ that appears in~\cref{thm:calibration} is the result of
controlling separately the corruption process and the discrepancy of the loss.
Indeed, they can be controlled together, leading to a better constant.
To relates the two risk ${\cal R}$ and ${\cal R}_S$, we will consider the
pointwise setting $\tau\in\Prob{2^\Y}$ and $\rho_0\in\Prob{\Y}$ that satisfies
$\rho_0\vdash\tau$, we will also consider a prediction $z\in\Y$. 
\begin{proposition}[Bound refinement]\label{prop:refinement}
  When $\Y$ is discrete and $\tau$ not ambiguous, the best $C$ that verifies
  \cref{eq:calibration-bound} in the pointwise setting $\tau\in\Prob{2^\Y}$ is
  maximum of $\lambda^{-1}$, for $\lambda \in [0, 1]$ such that there exists a
  point $z\neq y$ and signed measured $\sigma$ that verify ${\cal R}(z;\sigma) = 0$
  and such that $\sigma + \lambda \delta_y + (1-\lambda)\delta_z$ is a probabily
  measure that is eligible for $\tau$. 
\end{proposition}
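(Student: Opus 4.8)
The plan is to reduce the comparison inequality to a single pointwise ratio and then read off the characterization from the Minimum-Expectation switch of~\cref{lem:eligibility-infimum}. First, since $\tau$ is non-ambiguous, \cref{thm:ambiguity} gives $\rho^\star = \delta_y$ with $y$ the unique element of $S_x$; hence $f^* = y$, ${\cal R}(f^*;\rho^\star) = 0$, and ${\cal R}_S(f^*) = 0$ (as $y$ lies in every $S$ charged by $\tau$). The best, {\em i.e.} smallest, constant $C$ for which \cref{eq:calibration-bound} holds for all predictions is therefore
\[
  C = \sup_{z\neq y}\frac{{\cal R}(z;\rho^\star)}{{\cal R}_S(z)} = \sup_{z\neq y}\frac{\ell(z, y)}{{\cal R}_S(z)},
\]
the case $z = y$ being excluded since both sides of \cref{eq:calibration-bound} vanish there (and $\ell(z,y) > 0$ for $z\neq y$ by properness).

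Next I would apply \cref{lem:eligibility-infimum} pointwise to rewrite the denominator as ${\cal R}_S(z) = \inf_{\rho\vdash\tau}{\cal R}(z;\rho) =: m(z)$, the minimum of the linear functional $\rho \mapsto {\cal R}(z;\rho) = \int \ell(z, y')\diff\rho(y')$ over the eligible set $\brace{\rho \midvert \rho\vdash\tau}$. Because $\Y$ is compact and this set is a weak-* closed (hence compact) convex subset of $\Prob{\Y}$ on which the functional is continuous, the infimum is attained; this is what makes the claimed {\em maximum} legitimate rather than a mere supremum. One also checks $m(z) > 0$ for $z\neq y$: otherwise some eligible $\rho$ would satisfy ${\cal R}(z;\rho)=0$, forcing $\rho = \delta_z$ and contradicting the uniqueness of the deterministic eligible distribution $\delta_y$ established in \cref{lem:ambiguity}.

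The heart of the argument is the following identity. For a fixed $z\neq y$ and any probability $\rho\vdash\tau$, set $\lambda = {\cal R}(z;\rho)/\ell(z,y)$ and the signed measure $\sigma = \rho - \lambda\delta_y - (1-\lambda)\delta_z$. By linearity and $\ell(z,z)=0$ one computes ${\cal R}(z;\sigma) = {\cal R}(z;\rho) - \lambda\ell(z,y) = 0$, while $\sigma + \lambda\delta_y + (1-\lambda)\delta_z = \rho$ is by construction an eligible probability measure. Conversely, any configuration $(\lambda, z, \sigma)$ satisfying the hypotheses yields, via $\rho = \sigma + \lambda\delta_y + (1-\lambda)\delta_z$, an eligible probability with ${\cal R}(z;\rho) = \lambda\ell(z,y)$. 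Thus admissible configurations are in exact correspondence with pairs $(z, \rho)$, $z\neq y$, $\rho\vdash\tau$, satisfying ${\cal R}(z;\rho)\leq\ell(z,y)$, under the relation $\lambda^{-1} = \ell(z,y)/{\cal R}(z;\rho)$; the constraint $\lambda\in[0,1]$ is precisely ${\cal R}(z;\rho)\leq\ell(z,y)$, which always holds at the minimizer since $\delta_y = \rho^\star$ is itself eligible, giving $m(z)\leq\ell(z,y)$.

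It then remains to take the extremum. For fixed $z$, the quantity $\lambda^{-1} = \ell(z,y)/{\cal R}(z;\rho)$ is maximized by minimizing ${\cal R}(z;\rho)$ over eligible $\rho$, {\em i.e.} at ${\cal R}(z;\rho)=m(z)$, giving $\lambda^{-1} = \ell(z,y)/m(z)$; maximizing further over $z\neq y$ recovers exactly $C = \sup_{z\neq y}\ell(z,y)/m(z)$. Hence the best $C$ equals the maximum of $\lambda^{-1}$ over all admissible configurations, as claimed. I expect the main obstacle to be the reduction itself --- correctly identifying the best constant with the pointwise ratio and justifying that the infimum defining $m(z)$ is attained, so that the characterization is a genuine maximum; once \cref{lem:eligibility-infimum} is invoked, the splitting of $\rho$ into the extreme points $\delta_y$, $\delta_z$ and a loss-neutral remainder $\sigma$ is a short linear computation.
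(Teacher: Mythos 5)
Your proof is correct and follows essentially the same route as the paper's: it reduces to the pointwise ratio $\ell(z,y)/{\cal R}_{S,x}(z)$, invokes the minimum-expectation switch (\cref{lem:eligibility-infimum}) to identify ${\cal R}_{S,x}(z) = \inf_{\rho\vdash\tau}{\cal R}(z;\rho)$, and uses the same decomposition of an eligible $\rho$ into $\lambda\delta_y + (1-\lambda)\delta_z$ plus a risk-neutral signed measure $\sigma$ (the paper phrases this as membership of $\lambda\rho_0+(1-\lambda)\delta_z$ in $R_\tau + D_z$). Your additional checks --- that the infimum over eligible $\rho$ is attained so the claimed maximum is legitimate, and that ${\cal R}_{S,x}(z)>0$ for $z\neq y$ under non-ambiguity --- are welcome points that the paper's own proof leaves implicit.
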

\begin{proof}
  First, let's extend our study to the space ${\cal M}_\Y$ of signed measure over
  $\Y$. We extend the risk definition in~\cref{eq:risk} to any signed measure
  $\mu\in {\cal M}_\Y$, with
  \[
    {\cal R}_x(z; \mu) = \int_{\Y} \ell(z, y) \diff\mu(y).
  \]
  Note that the risk is a linear function of the distribution $\mu$.
  Two spaces are going to be of particular interest, the one of measure of mass
  one ${\cal M}_{\Y, 1}$, and the one of measure of mass null ${\cal M}_{\Y, 0}$, where
  \[
    {\cal M}_{\Y, p} = \brace{\mu\in{\cal M} \midvert \mu(\Y) = p}.
  \]
  Let's now relates for a $\rho_0$, $\tau$ and $z$, the risk ${\cal R}_x(z; \rho_0)$
  and ${\cal R}_{S,x}(z)$. To do so, we introduce the space of signed measures of
  null mass, that could be said orthonal to $(\ell(z, y))_{y\in\Y}$, formally 
  \[
    D_z = \brace{\mu\in{\cal M}_{\Y, 0} \midvert {\cal R}_x(z; \mu) = 0}.
  \]
  There is two alternatives: (1) either ${\cal R}_x(z; \rho_0) = 0$, and so ${\cal R}_{S,
  x}(z) = 0$ too, and we have relates the two risk; (2) either ${\cal R}_x(z,
  \rho_0) \neq 0$, and the space ${\cal M}_{\Y, 1}$ can be decomposed as
  \[
    {\cal M}_{\Y, 1} = D_z + \brace{\lambda \rho_0 + (1-\lambda)\delta_z \midvert
      \lambda \in \R}.
  \]
  To prove it take $\mu \in {\cal M}_{\Y, 1}$, and use linearity of the risk after writing
  \[
    \mu = \lambda \rho_0 + (1-\lambda)\delta_z + \paren{\mu - (\lambda \rho_0 + (1-\lambda)\delta_z)},
    \qquad\text{with}\qquad \lambda = \frac{{\cal R}_{x}(z, \mu)}{{\cal R}_{x}(z, \rho_0)}.
  \]
  For such a $\mu$, using the linearity of the risk, and the properness of the
  loss, if we denote by $d_z$ the part in $D_z$ of the last decomposition, we have
  \[
    {\cal R}_x(z; \mu) = \lambda {\cal R}_x(z; \rho_0) + (1-\lambda){\cal R}_x(z;
    \delta_z) +  {\cal R}_x(z; d_z) = \lambda {\cal R}_x(z; \rho_0) 
  \]
  If we denote by $R_\tau = \brace{\rho\in\Prob{\Y}\midvert \rho\vdash \tau}$, we
  can conclude that
  \[
    \frac{{\cal R}_{S, x}(z)}{{\cal R}_x(z; \rho_0)} = \inf\brace{\lambda\midvert
      (\lambda \rho_0 + (1-\lambda)\delta_z) \in R_\tau + D_z}. 
  \]
  Finally, when $\tau$ is not ambiguous, we know that $\rho^\star$ is
  deterministic, and if $\rho_0$ is deterministic then $\rho_0 = \rho^\star$.
  In this case, there exists a $y$ such that $\rho_0 = \delta_y$, and we can
  suppose this $y$ different of $z$ otherwise ${\cal R}_x(z; \rho_0) = 0$.
  In this case, we also have ${\cal R}_x(z^*) = {\cal R}_{S, x}(z^*) = 0$ with
  $z^* = y$, and thus the excess of risk to relates in~\cref{eq:calibration-bound}
  is indeed the relation between the two risks.
  
  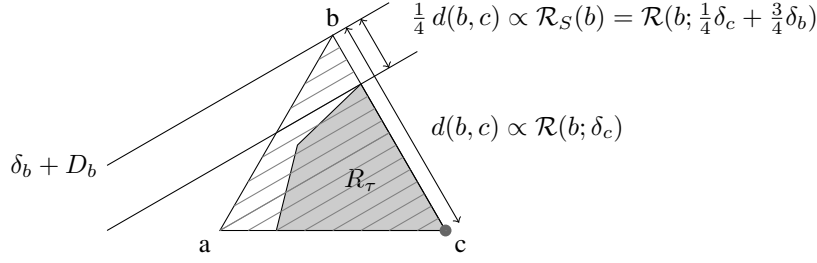
\begin{figure}[h]
    \centering
    \begin{tikzpicture}[scale=3]
  \coordinate(a) at (0, 0);
  \coordinate(b) at ({cos(60)}, {sin(60)});
  \coordinate(c) at (1, 0);
  \coordinate(r) at ({1/4}, 0);
  \coordinate(s) at ({7/32 + 1/8}, {7*sin(60)/16});
  \coordinate(t) at ({3/8 + 1/4}, {3*sin(60)/4});

  \fill[fill=black!20] (c) -- (r) -- (s) -- (t) -- cycle;
  \draw (a) node[anchor=north east]{a} -- (b) node[anchor=south]{b} --
        (c) node[anchor=north west]{c} -- cycle;
  \draw (c) -- (r) -- (s) -- (t) -- cycle;
  \node at ({5/8}, {sin(60)/3 - 1/16}) {$R_{\tau}$};
  \fill[fill=black!60] (1, 0) circle (.075em);

  \foreach \x in {0,.125,...,1} \draw[gray]
           ({\x / 2}, {\x*sin(60)}) -- ({(3 - \x) / 4)}, {(1 + \x) * sin(60)/2}); 
  \foreach \x in {0,.125,...,1} \draw[gray]
           ({\x, 0}) -- ({(3 + \x) / 4)}, {(1 - \x) * sin(60)/2});
  \draw (-.5,0) -- ({7/8}, {11*sin(60)/12});
  \draw (-.5,{sin(60)/3}) node[anchor=east] {$\delta_b + D_b$} -- ({3/4}, {7*sin(60)/6});
  \draw[<->] ({3/4}, {5*sin(60)/6}) -- ({11/16}, {23*sin(60)/24}) node[anchor=south west]
             {$\quad \frac{1}{4}\, d(b, c) \propto {\cal R}_S(b) =
              {\cal R}(b; \frac{1}{4}\delta_c + \frac{3}{4}\delta_b)$} -- ({5/8}, {13*sin(60)/12});    
  \draw[<->] ({1/2 + 3 * .04 / 2}, {(1 + .04) * sin(60)})  --
             ({7/8}, {2*(1 + 2*.04 - 7/8)*sin(60)}) node[anchor=south west]
             {$\, d(b, c) \propto {\cal R}(b; \delta_c)$} -- ({1 + 3 * .04 / 2}, {.04*sin(60)});    
\end{tikzpicture}
      \vspace*{-.3cm}  
    \caption{Geometrical understanding of~\cref{prop:refinement}, showing the link
      between the infimum and the fully supervised risk. The drawing is set in the
      affine span of the simplex ${\cal M}_{\Y, 1}$, where we identify $a$ with
      $\delta_a$. The underlying instance $(\ell, \tau)$ is taken
      from~\cref{sec:inconsistency}, and can be linked to the setting
      of~\cref{prop:refinement} with $z=b$, $y=c$. Are represented in the simplex
      the level curves of the function $\rho\rightarrow {\cal R}(z;\rho)$. Based
      on this drawing, one can recover ${\cal R}_S(b) = {\cal R}(b)/4$, which is better
      than the bound given in~\cref{thm:calibration}. 
    }
    \label{fig:simplex-calibration}
  \end{figure}
\end{proof}

\begin{remark}[\cref{prop:refinement} as a variant of Thales theorem]
  \cref{prop:refinement} can be seen as a variant of the Thales theorem. Indeed,
  with the geometrical embedding $\pi$ of the simplex in $\R^\Y$, $\pi(\rho) = (\rho(y))_{y\in\Y}$, one can have, with $d$ the Euclidean distance
  \[
    \frac{{\cal R}_{S, x}(z)}{{\cal R}_x(z; \rho_0)} = \frac{d(\pi(\delta_z +
      D_z), \pi(R_\tau))}{d(\pi(\delta_z + D_z), \pi(\rho_0))}.
  \]
  And conclude by using the following variant of Thales theorem, that can be
  derived from~\cref{fig:proof-thales}:
  For $x, y, z \in \R^d$, and $S \subset \R^d$, with $d$ the Euclidean distance,
  if $y \in S$, $d(z+x^\perp, S) = \gamma d(z+x^\perp, y) $, where
  \[
    \gamma = \min\brace{\module{\lambda}\midvert \lambda\in \R,
      (\lambda y + (1- \lambda) z + x^\perp) \cap S \neq \emptyset}.
  \]
  More over, notice that if $S$ is contains in the half space that contains $y$
  regarding the cut with the hyperplane $z + x^\perp$,
  $\lambda$ can be restricted to be in $[0,1]$.
  
  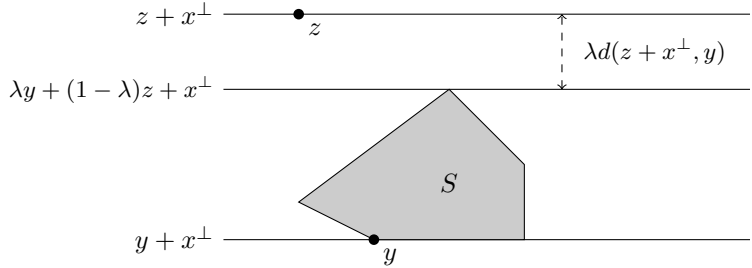
\begin{figure}[h]
    \centering
    \begin{tikzpicture}[scale=1]
  \coordinate(xz1) at (0,3);
  \coordinate(xz2) at (7,3);
  \coordinate(xs1) at (0,2);
  \coordinate(xs2) at (7,2);
  \coordinate(xy1) at (0,0);
  \coordinate(xy2) at (7,0);
  \coordinate(s1) at (2,0);
  \coordinate(s2) at (4,0);
  \coordinate(s3) at (4,1);
  \coordinate(s4) at (3,2);
  \coordinate(s5) at (1,.5);
  \draw (xz1) node[anchor=east] {$z + x^\perp$} -- (xz2);
  \draw (xs1) node[anchor=east] {\footnotesize{$\lambda y + (1-\lambda)z + x^\perp$}} -- (xs2);
  \draw (xy1) node[anchor=east] {$y + x^\perp$} -- (xy2);
  \filldraw[fill=black!20] (s1) -- (s2) -- (s3) -- (s4) -- (s5) -- cycle;

  \fill[fill=black] (2,0) circle (.2em) node[anchor=north west] {$y$};
  \fill[fill=black] (1,3) circle (.2em) node[anchor=north west] {$z$};

  \coordinate(d1) at (4.5,2);
  \coordinate(d2) at (4.5,2.5);
  \coordinate(d3) at (4.5,3);
  \draw[<->, dashed] (d1) -- (d2) node [anchor=west] {\footnotesize{$\,\,\,\lambda d(z+x^\perp, y)$}} -- (d3);
  \coordinate(s) at (3, 1);
  \draw (s) node [anchor=north] {$S$};
\end{tikzpicture}  
    \vspace*{-.3cm}  
    \caption{A variant of Thales theorem.}
    \label{fig:proof-thales}
  \end{figure}
\end{remark}

\begin{remark}[Active labelling]
  When annotating data, as a partial labeller, you could ask yourself how to
  optimize your labelling. For example, suppose that you want to poll a population
  to retrieved preferences among a set of presidential candidates. Suppose that
  for a given polled person, you can only ask her to compare between four
  candidates. Which candidates would you ask her to compare? 
  According to the questions you are asking, you will end up with different sets
  of potential weak distribution $\tau$. If aware of the problem $\ell$ that your
  dataset is intended to tackle, and aware of a constant $C = C(\ell, \tau)$ that
  verify~\cref{eq:calibration-bound}, you might want to design your questions in
  order to maximize on average over potential $\tau$, the quantity $C(\ell,\tau)$.
  An example where $\tau$ is not well designed according to $\ell$ is given
  in~\cref{fig:example-calibration}.
  
  \begin{figure}[h]
    \centering
    \begin{tikzpicture}[scale=3]
  \coordinate(a) at (0, 0);
  \coordinate(b) at ({cos(60)}, {sin(60)});
  \coordinate(c) at (1, 0);
  \coordinate(r) at ({1/2}, 0);
  \coordinate(s) at ({1/4}, {sin(60)/2});
  \coordinate(t) at ({3/4}, {sin(60)/2});
  
  \fill[fill=black!20] (c) -- (r) -- (s) -- (t) -- cycle;
  \draw (a) node[anchor=east]{a} -- (b) node[anchor=south east]{b} -- (c) node[anchor=north west]{c} -- cycle;
  \draw (c) -- (r) -- (s) -- (t) -- cycle;
  \node at ({5/8}, {sin(60)/3 - 1/16}) {$R_{\tau}$};
  \fill[fill=black!60] (1, 0) circle (.075em);

  \foreach \x in {0,.075,...,1} \draw[gray] ({\x}, 0) -- ({(1 + \x) / 2)}, {(1 - \x) * sin(60)});
  \foreach \x in {.25} \draw ({-\x / 2}, {- \x * sin(60)}) node[anchor=east] {$\ell_b^\perp$} -- ({(1 + \x) / 2}, {(1 + \x) * sin(60)});
\end{tikzpicture}
      \vspace*{-.3cm}  
    \caption{Example of a bad link between $\tau$ and $\ell$. Same representation
      as~\cref{fig:simplex-calibration} with a different instance where
      $\tau = \frac{1}{2} \delta_{\brace{a, c}} + \frac{1}{2}\delta_{\brace{b, c}}$
      and $\ell(b, a) = 0$, $\ell(b, c) = 1$. In this example $C_\ell(\tau)=+\infty$,
      and the infimum loss is $0$ on $\Y$ and therefore not consistent.
      Given the loss structure, partial labelling acquisition should focus on
      specifying sets that does not intersect $\brace{a, b}$. Note that this
      instance violate the proper loss assumption, explaining its inconsistency.
    }
    \label{fig:example-calibration}
  \end{figure}
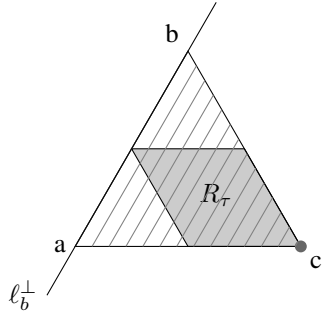
\end{remark}

\subsection{Proof of~\cref{thm:consistency,thm:learning-rates}}
\label{proof:consistency}
Firt note that, since ${\cal R}_S(f)$ is characterized by ${\cal R}_S(f) =
\mathbb{E}_{(x,S)\sim\tau} \min_{u \in S} \ell(f(x), u)$, then the problem 
\[
  f^* = \argmin_{f:\X \to \Y} {\cal R}_S(f) = \argmin_{f:\X \to \Y}
  \E_{(x,S)\sim\tau}\bracket{\min_{y \in S} \ell(f(x), y)}.
\]
can be considered as an instance of structured prediction with loss $L(z,S) =
\min_{y \in S} \ell(f(x), y)$. The framework for structured prediction presented
in \citetappendix{appCiliberto2016},  and extended in
\citetappendix{appCiliberto2020}, provides consistency and learning rates in
terms of the excess risk ${\cal R}_S(f_n) - {\cal R}_S(f^*)$ when $f^*$
is estimated via $f_n$ defined as in \cref{eq:algorithm} and when the
structured loss $L$ admits the decomposition 
\[
  L(z,S) = \langle \psi(z), \phi(S) \rangle_{\cal H},
\]
for a separable Hilbert space ${\cal H}$ and two maps $\psi: \Y \to {\cal H}$
and $\phi: {\cal S} \to {\cal H}$. Note that since $\Y$ is finite $L$ always
admits the decomposition, indeed the cardinality of $\Y$ is finite, {\em i.e.},
$|\Y| < \infty$ and $|{\cal S}| = 2^{|\Y|}$. Choose an ordering for the elements
in $\Y$ and in ${\cal S}$ and denote them respectively $o_\Y:\N \to \Y$ and
$o_{\cal S}:\N \to {\cal S}$. Let $n_\Y: \Y \to \N$ the inverse of $o_\Y$, i.e.
$o_\Y(n_\Y(y)) = y$ and $n_\Y(o_\Y(i)) = i$ for $y \in \Y$ and $i \in
{1,\dots,|\Y|}$, define analogously $n_{\cal S}$. 
Now let ${\cal H} = \R^{|\Y|}$ and define the matrix $B \in \R^{|\Y| \times
  2^{|\Y|}}$ with element $B_{i,j} = L(o_\Y(i), o_{\cal S}(j))$ for
$i=1,\dots,|\Y|$ and $j=1,\dots,2^{|\Y|}$, then define  
\[
  \psi(z) = e^{|\Y|}_{n_\Y(z)}, \quad \phi(S) = B e^{2^{|\Y|}}_{n_{\cal S}(S)},
\]
where $e^k_i$ is the $i$-th element of the canonical basis of $\R^k$. We have
that 
\[
  \langle \psi(z), \phi(S) \rangle_{\cal H} = \langle e^{|\Y|}_{n_\Y(z)}, B
  e^{2^{|\Y|}}_{n_{\cal S}(S)} \rangle_{R^{|\Y|}} = B_{n_\Y(z), n_{\cal S}(S)} =
  L(i_\Y(n_\Y(z)), i_{\cal S}(n_{\cal S}(S))) = L(z,S),
\]
for any $z \in \Y, S \in {\cal S}$.
So we can apply Theorem 4 and 5 of \citeappendix{appCiliberto2016}
\citepappendix[see also their extended forms in Theorem 4 and 5
of][]{appCiliberto2020}.
The last step is to connect the excess risk on ${\cal R}_S$ with the excess risk
on ${\cal R}(f, \rho^\star)$, which is done by our comparison inequality in
\cref{thm:calibration}.
\begin{remark}[Illustrating the consistency in a discrete setting]
Suppose that $\tau_{\vert x}$ has been approximate, as a signed measure
$\hat{\tau}_{\vert x} = \sum_{i=1}^n \alpha_i(x)\delta_{S_i}$. After
renormalization, one can represent it with as a region $R_{\hat\tau_{\vert x}}$
in the affine span of $\Prob{\Y}$. Retaking the settings
of~\cref{sec:inconsistency}, suppose that
\[
  \hat{\tau}(\brace{a,b}) = \frac{1}{2},\qquad\hat{\tau}(\brace{c}) = \frac{1}{2},\qquad
  \hat{\tau}(\brace{a,c}) = \frac{1}{4},\qquad \hat{\tau}(\brace{a,b,c}) = -\frac{1}{4}.
\]
This corresponds to the region $R_{\hat\tau}$ represented in~\cref{fig:consistency}.
It leads to a disambiguation $\hat{\rho}$ that minimizes
${\cal E}$,~\cref{eq:infimum-disambiguation}, inside this space as
\[
  \hat{\rho}(a) = \frac{1}{2}, \qquad \hat{\rho}(b) = -\frac{1}{4}, \qquad \hat{\rho}(c) = \frac{3}{4},
\]
and to the right prediction $\hat{z}=c$, since $\hat{\rho}$ felt in the decision
region $R_c$. As the number of data augments, $\R_{\hat\rho}$ converges towards $R_\tau$,
so does $\hat{\rho}$ toward $\rho^\star$ and the risk ${\cal R}(\hat{f})$ towards its minimum.

\begin{figure}[h]
  \centering    
  \begin{tikzpicture}[scale=3]
  \coordinate(a) at (0, 0);
  \coordinate(b) at ({cos(60)}, {sin(60)});
  \coordinate(c) at (1, 0);
  \coordinate(r) at ({1/4}, 0);
  \coordinate(s) at ({7/32 + 1/8}, {7*sin(60)/16});
  \coordinate(t) at ({3/8 + 1/4}, {3*sin(60)/4});

  \coordinate(ra) at ({1/4},0);
  \coordinate(rb) at ({3/8},{-sin(60)/4});
  \coordinate(rc) at ({5/8},{-sin(60)/4});
  \coordinate(rd) at (1,{sin(60)/2});
  \coordinate(re) at ({1/2},{sin(60)/2});
  
  \fill[fill=black!10] (c) -- (r) -- (s) -- (t) -- cycle;
  \fill[fill=black!30] (ra) -- (rb) -- (rc) -- (rd) -- (re) -- cycle;
  \draw (a) node[anchor=north east]{a} -- (b) node[anchor=south]{b} -- (c) node[anchor=south west]{c} -- cycle;
  \draw (c) -- (r) -- (s) -- (t) -- cycle;
  \draw (ra) -- (rb) -- (rc) -- (rd) -- (re) -- cycle;
  \node at ({5/8}, {sin(60)/2 + 1/16}) {$R_{\tau}$};
  \node at ({9/16}, {-sin(60)/4 + 1/8}) {$R_{\hat{\tau}}$};
  \fill[fill=red!100] (1, 0) circle (.1em) node[anchor=north west] {$\rho^\star$};
  \fill[fill=red!100] ({5/8}, {-sin(60)/4}) circle (.1em) node[anchor=north west] {$\hat\rho$};

  \foreach \x in {0,.05,...,.6}
  \draw[gray, very thin, rotate=30] (1.25, \x) -- ({sin(60) - (tan(60)*\x)}, \x) -- ({sin(60) - (tan(60)*\x)}, -\x) -- ({1.25}, {-\x}); 

  \draw[dashed, rotate=30] (1.25, 0) -- ({sin(60)}, 0) -- ({sin(60) - (tan(60)*.55)}, .55); 
  \draw[dashed, rotate=30] ({sin(60)}, 0) --  ({sin(60) - (tan(60)*.55)}, -.55); 
\end{tikzpicture}
  \vspace*{-.3cm}  
\caption{Understanding convergence of the algorithm in~\cref{eq:algorithm}. Our method is
  approximating $\tau$ as a signed measured $\hat\tau$, which leads to
  $R_{\hat{\tau}}$ in dark gray compared to the ground truth $R_{\tau}$ in light gray.
  The disambiguation of $\hat\rho$ and $\rho^\star$ is done on those two
  domains with the same objective ${\cal E}$,~\cref{eq:infimum-disambiguation},
  which level curves are represented with light lines.}
\label{fig:consistency}
\end{figure}
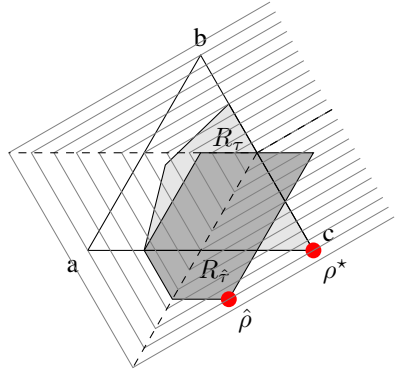
\end{remark}

\subsection{Understanding of the average and the supremum loss}
\label{app:other-losses}

For the average loss, if there is discrepancy in the loss $\nu > 0$, then there
exists $a, b, c$ such that $\ell(b, c) = (1+\epsilon) \ell(a, b)$, for some
$\epsilon > 0$.  In this case, one can recreate the example
of~\cref{sec:inconsistency} by considering $\rho_0 = \rho^\star = \delta_c$ and
\[
  \tau = \lambda \delta_{\brace{c}} + (1 - \lambda) \delta_{\brace{a, b, c}},
  \qquad\text{with}\qquad
  \lambda = \frac{1}{2}\frac{\epsilon}{3\ell(a, b) + \epsilon},
\]
to show the inconsistency of the average loss.
Similarly supposing, without loss of generality that $\ell(a, c) \in
\bracket{\ell(a, b), \ell(b, c)}$, the case where $\rho_0 = \rho^\star = \delta_b$
and 
\[
  \tau = \lambda \delta_{\brace{b}} + (1 - \lambda) \delta_{\brace{a, b, c}},
  \qquad \text{with}\qquad  
  \lambda = \frac{1}{2} \min\paren{\frac{\epsilon}{1+\epsilon},
    \frac{1+\epsilon - x}{2+\epsilon - x}},\qquad x=\frac{\ell(a,c)}{\ell(a, b)},
\]
will fail the supremum loss, which will recover $z^* = a$, instead of $z^* = b$.

\section{Experiments}
\label{app:experiments}

\subsection{Classification}\label{app:classification}
Let consider the classification setting of~\cref{sec:classification}. The
infimum loss reads $L(z, S) = \ind{z\notin S}$. Given a weak distribution
$\tau$, the infimum loss is therefore solving for 
\[
  f(x) \in \argmin_{z\in\Y} \E_{S\sim\tau\vert_x}\bracket{L(z, S)}
  = \argmin_{z\in\Y} \E_{S\sim\tau\vert_x}\bracket{\ind{z\notin S}}
  = \argmin_{z\in\Y} \PP_{S\sim\tau\vert_x}(z\notin S)
  = \argmax_{z\in\Y} \PP_{S\sim\tau\vert_x}(z\in S).
\]
Given data, $(z_i, S_i)$ our estimator consists in approximating the conditional
distributions $\tau\vert_x$ as
\[
  \hat\tau\vert_x = \sum_{i=1}^n \alpha_i(x) \delta_{S_i}, 
\]
from which we deduce the inference formula, that we could also derived
from~\cref{eq:algorithm}, 
\[
  \hat{f}(x) \in \argmax_{z\in\Y} \sum_{i=1}^n \alpha_i(x) \ind{z\in S_i}
  = \argmax_{z\in\Y} \sum_{i; z\in S_i} \alpha_i(x).
\]

\subsubsection{Complexity Analysis}
The complexity of our algorithm~\cref{eq:algorithm} can be split in two
parts:
\begin{itemize}
  \item a training part, where given $(x_i, S_i)$ we precompute quantities that
    will be useful at inference.
  \item an inference part, where given a new $x$, we compute the corresponding
    prediction $\hat{f}(x)$.
\end{itemize}
In the following, we will review the time and space complexity of both parts. We
give this complexity in term of $n$ the number of data and $m$ the number of
items in $\Y$. Results are summed up in~\cref{tab:cl:complexity}.
\begin{table}[ht]
  \caption{Complexity of our algorithm for classification.}
  \label{tab:cl:complexity}
  \vskip 0.3cm
  \centering
  {\small\sc
    \begin{tabular}{lcc}
      \toprule
      Complexity & Time & Space \\
      \midrule
      Training  & ${\cal O}(n^2(n+m))$ & ${\cal O}(n(n+m))$ \\
      Inference & ${\cal O}(nm)$       & ${\cal O}(n+m)$ \\
      \bottomrule
    \end{tabular}
  }
\end{table}
\paragraph{Training.}
Let us suppose that computing $L(y, S) = \ind{y\notin S}$ can be done in a
constant cost that does not depend on $m$. We first compute the following
matrices in ${\cal O}(nm)$ and ${\cal O}(n^2)$ in time and space.
\[
  L = (L(y, S_i))_{i\leq n, y\in\Y} \in \R^{n\times m},\qquad
  K_\lambda = (k(x_i, x_j) + n\lambda\delta_{i=j})_{ij} \in \R^{n\times n}. 
\]
We then solve the following, based on the {\tt \_gesv} routine of Lapack, in
${\cal O}(n^3 + n^2m)$ in time and ${\cal O}(n(n+m))$ in space
\citepappendix[see][for details]{appGolub1996}
\[
  \beta = K_\lambda^{-1}L \in \R^{n\times m}.
\]

\paragraph{Inference.}
At inference, we first compute in ${\cal O}(n)$ in both time and space
\[
  v(x) = (k(x, x_i))_{i\leq n} \in \R^n.
\]
Then we do the following multiplication in ${\cal O}(nm)$ in time and ${\cal
  O}(m)$ in space,
\[
  {\cal R}_{S, x} = v(x)^T \beta \in \R^m.
\]
Finally we take the minimum of ${\cal R}_{S, x}(z)$ over $z$ in ${\cal O}(m)$ in
time and ${\cal O}(1)$ in space.

\subsubsection{Baselines}
The average loss is really similar to the infimum loss, it reads
\[
  L_{\textit{ac}}(z, S) = \frac{1}{\module{S}}\sum_{y\in S} \ell(z, y) = 1 -
  \frac{\ind{z\in S}}{\module{S}} \simeq \frac{1}{\module{S}}\cdot\ind{z\notin S} =
  \frac{1}{\module{S}} L(z, S).
\]
Following similar derivations to the one for the infimum loss, given a
distribution $\tau$, one can show that the average loss is solving for 
\[
  f_{\textit{ac}}(x) \in \argmax_{z\in\Y} \sum_{S; z\in S}
  \frac{1}{\module{S}}\tau\vert_x(S),
\]
which is consistent when $\tau$ is not ambiguous.
The difference with the infimum loss is due to the term in $\module{S}$. It can
be understood as an evidence weight, giving less importance to big sets that do
not allow to discriminate efficiently between candidates.
Given data $(x_i, S_i)$, it leads to the estimator
\[
  \hat{f}_{\textit{ac}}(x) \in \argmin_{z\in\Y} \sum_{i; z\in S_i} \frac{\alpha_i(x)}{\module{S_i}}.
\]
The supremum loss is really conservative since
\[
  L_{\textit{sp}}(z, S) = \sup_{y\in S} \ell(y, z) = \sup_{y\in S} \ind{y\neq z}
  = \ind{S\neq\brace{z}}.
\]
It is solving for
\[
  f(x) \in \argmax_{z\in\Y} \tau\vert_x(\brace{z}),
\]
which empirically correspond to discarding all the set with more than one
element
\[
  \hat{f}_{\textit{sp}}(x) \in \argmin_{z\in\Y} \sum_{i; S_i = \brace{z}} \alpha_i(x).
\]
Note that $\tau$ could be not ambiguous while charging no singleton, in this
case, the supremum loss is not informative, as its risk is the same for any prediction.

\subsubsection{Corruptions on the \emph{LIBSVM} datasets}\label{sec:libsvm}

To illustrate the dynamic of our method versus the average baseline, we used
{\em LIBSVM} datasets~\citeappendix{appChang2011}, that we corrupted by
artificially adding false class candidates to transform fully supervised pairs
$(x, y)$ into weakly supervised ones $(x, S)$. We experiment with two types of
corruption process.
\begin{itemize}
  \item A uniform one, reading, with the $\mu$ of~\cref{def:eligibility}, for
  $z\neq y$,
  \[
    \PP_{(Y, S)\sim\mu\vert_{\Y\times 2^\Y}}\paren{z\in S\midvert Y=y} = c.
  \]
  with $c$ a corruption parameter that we vary between zero and one.
  In this case, the average loss and the infimum one works the same as shown
  on~\cref{fig:cl:uniform}.
  \item A skewed one, where we only corrupt pair $(x, y)$ when $y$ is the most
    present class in the dataset. More exactly, if $y$ is the most present class
    in the dataset, for $z\in\Y$, and $z'\neq z$, our corruption process reads
    \[
      \PP_{(Y, S)\sim\mu\vert_{\Y\times 2^\Y}}\paren{z'\in S\midvert Y=z} =
      c\cdot\ind{z=y}.
    \]
    In unbalanced dataset, such as the ``dna'' and ``svmguide2'' datasets, where
    the most present class represent more than fifty percent of the labels as
    shown~\cref{tab:cl:libsvm}, this allows to fool the average loss as
    shown~\cref{fig:libsvm}. Indeed, this corruption was designed to fool the
    average loss since we knew of the evidence weight $\frac{1}{\module{S}}$
    appearing in its solution.
\end{itemize}
\begin{figure}[h]
  \centering
  \includegraphics{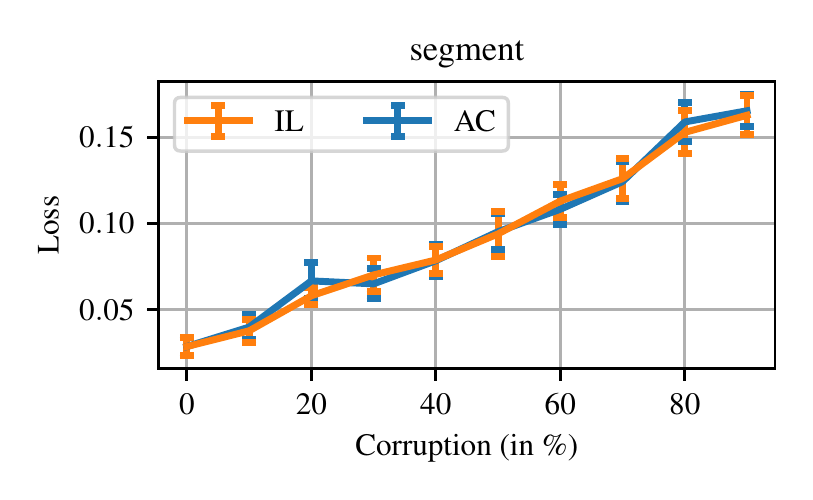}
  \includegraphics{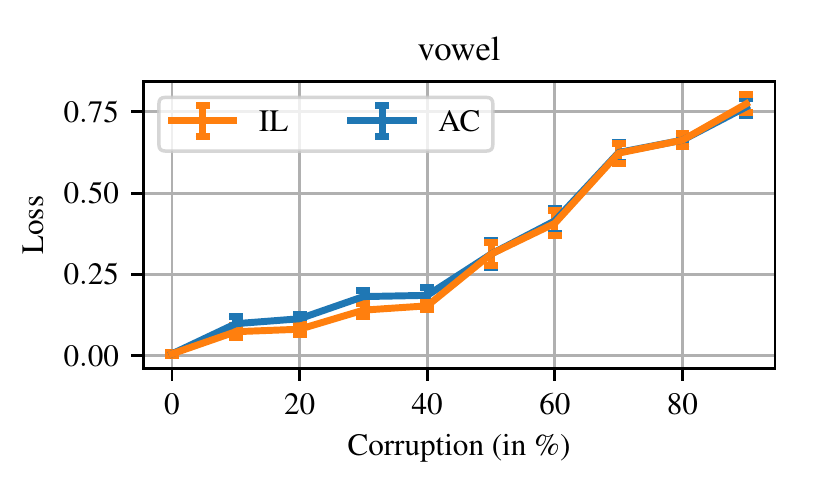}
  \vspace*{-.3cm}  
  \caption{Classification. Testing risks (from~\cref{eq:risk}) achieved by {\em
      AC} and {\em IL} on the ``segment'' and ``vowel'' datasets from {\em
      LIBSVM} as a function of corruption parameter $c$, when the corruption is
      uniform, as described in~\cref{sec:libsvm}.}
  \label{fig:cl:uniform}
\end{figure}
\begin{table}[h]
  \caption{{\em LIBSVM} datasets characteristics, showing the number of data,
    of classes, of input features, and the proportion of the most present class when
    labels are unbalanced.}
  \label{tab:cl:libsvm}
  \vskip 0.3cm
  \centering
  {\small\sc
    \begin{tabular}{lccccc}
      \toprule
      Dataset & Data ($n$) & Classes ($m$) & Features ($d$) & Balanced & Most present \\
      \midrule
      Dna       & 2000 & 3  & 180 & $\times$     & 52.6\% \\
      Svmguide2 & 391  & 3  & 20  & $\times$     & 56.5\% \\
      Segment   & 2310 & 7  & 19  & $\checkmark$ & - \\
      Vowel     & 528  & 11 & 10  & $\checkmark$ & - \\
      \bottomrule
    \end{tabular}
  }
\end{table}

\subsubsection{Reproducibility specifications}
All experiments were run with {\em Python}, based on {\em NumPy} library.
Randomness was controlled by instanciating the random seed of {\em NumPy} to $0$
before doing any computations.
Results of~\cref{fig:libsvm,fig:cl:uniform} were computed by using eight folds,
and trying out several hyperparameters, before keeping the set of
hyperparameters that hold the lowest mean error over the eight folds. Because we
used a Gaussian kernel, there was two hyperparameters, the Gaussian kernel
parameter $\sigma$, and the regularization parameter $\lambda$. We search for
the best hyperparameters based on the heuristic
\[
  \sigma = c_\sigma d,\qquad \lambda = c_\lambda n^{-1/2},
\]
where $d$ is the dimension of the input $\X$ (or the number of features), and
where the Gaussian kernel reads
\[
  k(x, x') = \exp\paren{-\frac{\norm{x-x'}^2}{2\sigma^2}}.
\]
We tried $c_\sigma \in \brace{10, 5, 1, .5, .1, .01}$ and $c_\lambda
\in\brace{10^i\midvert i\in\bbracket{3, -3}}$.

\subsection{Ranking}
Consider the ranking setting of~\cref{sec:ranking}, where $\Y = \Sfrak_m$,
$\phi$ is the Kendall's embedding and the loss is equivalent to $\ell(z, y) = -
\phi(y)^T\phi(z)$.

\subsubsection{Complexity Analysis}
Given data $(x_i, S_i)$, our algorithm is solving at inference for 
\[
  f(x) \in \argmin_{z\in\Y}\inf_{y_i \in S_i} - \sum_{i=1}^n \alpha_i(x)
  \phi(z)^T\phi(y_i)
  = \argmax_{z\in\Y}\sup_{y_i \in S_i} \sum_{i=1}^n \alpha_i(x) \phi(z)^T\phi(y_i)
\]
We solved it through alternate minimization, by iteratively solving in $z$ for
\[
  \phi(z)^{(t+1)} = \argmax_{\xi \in \phi(\Y)} \scap{\xi}{ \sum_{i=1}^n \alpha_i(x) \phi(y_i)^{(t)}},
\]
and solving for each $y_i$ for
\[
  \phi(y_i)^{(t+1)} = \argmax_{\xi\in\phi(S_i)} \alpha_i(x)\scap{\xi}{\phi(z)}.
\]
We initialize the problem with the coordinates of $\phi(y_i)$ put to 0 when not
specified by the constraint $y_i \in S_i$.\footnote{Coordinates of the Kendall's
  embedding correspond to pairwise comparison between two items $j$ and $k$, so
  we put to 0 the coordinates for which we can not infer preferrences from $S$
  between items $j$ and $k$.}
Those two problems are minimum feedback arc set problems, that are \textit{NP}-hard in
$m$, meaning that one has to check for all potential solutions, and there is
$m!$ of them, which is the cardinal of $\Sfrak_m$.
We suggest to solve them using an integer linear programming (ILP) formulation
that we relax into linear programming as explained in~\cref{app:fas}.
All the problem in $y_i$ share the same objective, up to a change in sign, but
different constraint $\xi\in\phi(S_i)$, such a setting is particularily suited
for warmstart on the dual simplex algorithm to solve efficiently one after the
other the linear programs associated to each $y_i$.

To give numbers, at training time, we compute the inverse $K_\lambda^{-1}$ in
${\cal O}(n^3)$ in time and ${\cal O}(n^2)$ in space, and at inference we
compute $\alpha(x) K_\lambda^{-1}v(x)$ in ${\cal O}(n^2)$ in time and ${\cal
  O}(n)$ in space, before solving iteratively $n$ \textit{NP}-hard problem in $m$ of
complexity $n \textit{NP}(m)$, that cost $nm^2$ in space to represent using
{\em Cplex}~\citeappendix{Cplex}, if we allows our self $e$ iterations, the inference
complexity is ${\cal O}(n^2 + e\,n\,\textit{NP}(m))$ in time and ${\cal
  O}(nm^2)$ in space. 

\subsubsection{Baselines}
The supremum loss is really similar to the infimum loss, only changing an infimum
by a supremum. However, algorithmically, this change leads to solving for a
local sadle point rather than solving for a local minimum. While the latter are
always defined, there might be instances where no sadle point exists. In this
case, the supremum optimization might stall without getting to any stable
solution, and the user might consider stopping the optimization after a certain
number of iteration and outputting the current state as a solution.

The average loss, despite its simple formulation does not lead to an easy
implementation either. Indeed, when given a set $S$, the average loss is
implicitely computing the center of this set $c(S)$, and replacing
$L_{\textit{ac}}(z, S)$ by $\ell(z, c(S))$, more exactly
\[
  L_{\textit{ac}}(z, S) \simeq - \frac{1}{\module{S}} \sum_{y\in S} \phi(z)^T\phi(y)
    = -\phi(y)^T\paren{\frac{1}{\module{S}}\sum_{y\in S}\phi(y)}.
\] 
To compute the center $\paren{\frac{1}{\module{S}}\sum_{y\in S}\phi(y)}$, we
sample $c_k \sim {\cal N}(0, I_{m^2})$, solve the resulting minimum
feedback arc set problem, with the constraint $y\in S$, and end up with
solutions $\phi(y_k)$. After removing duplicates, we estimate the average with
the empirical one. Note that this work is done at training, leading the average
loss to have a quite good inference complexity in ${\cal O}(nm +
  \textit{NP}(m))$ in time.

\subsubsection{Synthetic example: ordering lines}
In the following, we explain our synthetic example of~\cref{sec:ranking}.
It correspond of choosing $\X = [0, 1]$, choose $m$ a number of items, simulate
$a, b \sim {\cal N}(0, I_m)$, compute scores $v_i(x) = ax + b$, and order items
according to their scores as shown on~\cref{fig:rk:setting}.
For~\cref{fig:rk:corruption}, we chose $m=10$, as this is the biggest $m$ for
which can rely on our minimum feedback arc set heuristic to recover the real
minimum feedback arc set solution and there not to play a role in what our
algorithm will output.
The corruption process was defined as loosing coordinates in the Kendall's
embedding, more exactly given a point $x\in\X$, we have score $(v_i(x))_{i\leq m}$
and an ordering $y\in\Y$. To create a skewed corruption, we first compute the
normalized distance between scores as
\[
  d_{ij} = \frac{\module{v_i - v_j}}{\max_{k,l} \module{v_k - v_l}} \in [0, 1]
\]
and remove the pairwise comparison for which $d_{ij} > c$, where $c$ is a
corruption parameter between 0 and 1, formally
\[
  S = \brace{z \in \Y\midvert \forall\, (j,k) \in I,\ \phi(z)_{jk} =
    \phi(y)_{jk}},\qquad\text{where}\qquad
  I = \brace{(j,k) \midvert d_{(j, k)} < c },
\]
Because of transitivity constraint, when
$c$ is small the comparison that we lost can be found back using transitivity
between comparisons.
\begin{figure}[h]
  \centering
  \includegraphics{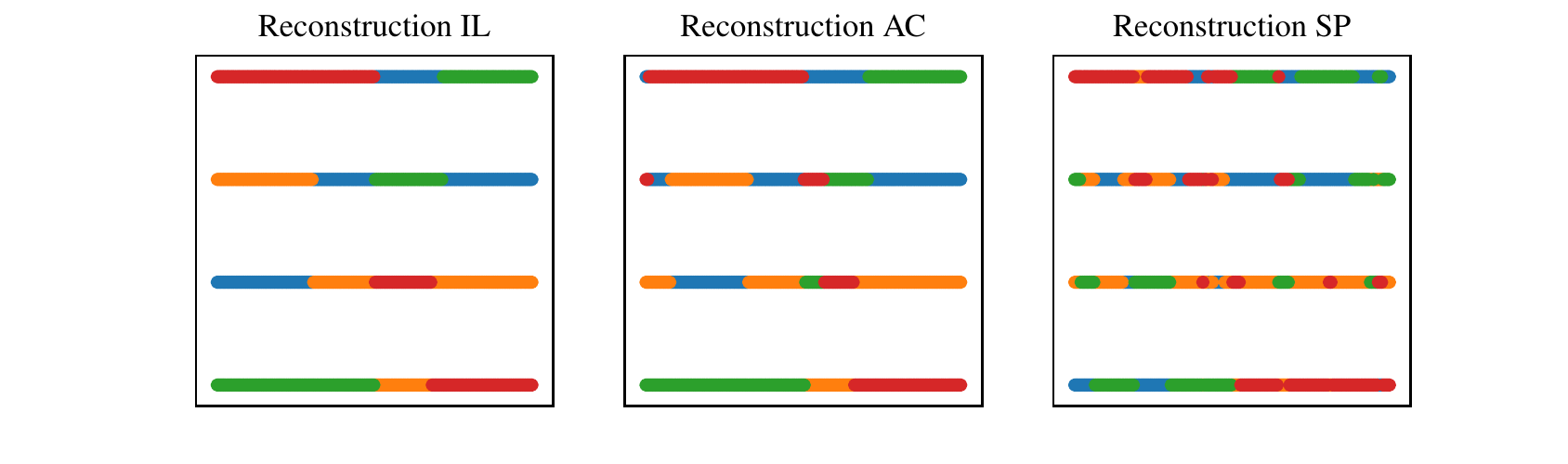}
  \vspace*{-.3cm}  
  \caption{Reconstruction of the problem of~\cref{fig:rk:setting}, given $n=50$
    random points $(x_i, y_i)_{i\leq n}$, after loosing at random fifty percent 
    of the coordinates $(\phi(y_i))_{i\leq n}$, leading to sets $(S_i)_{i\leq
      n}$ of potential candidates. Hyperparameter were choosen as $\sigma = 1$
    for the Gaussian kernel and $\lambda = 10^{-3}n^{-1/2}$ for the
    regularization parameter. The percentage of error in the reconstructed
    Kendall's embedding is 3\% for {\em IL}, 4\% for {\em AC} and 13\% for {\em
      SP}. As for classification, with such a random corruption process, {\em
      AC} and {\em IL} shows similar behaviors.} 
  \label{fig:ranking_reconstruction}
\end{figure}

\subsubsection{Reproducibility specification}
To get~\cref{fig:rk:corruption}, we generates eight problems that corresponds to
ordering $m=10$ lines, that correspond to eight folds.
We only cross validated results with the same heuristics as
in~\cref{app:classification}, yet, because computations were expensive we only
tried $c_\sigma \in \brace{1, .5}$, and $c_\lambda \in \brace{10^3, 1, 10^{-3}}$.
Again, randomness was controlled by instanciating random seeds to 0.
Solving the linear program behind our minimum feedback arc set was done using
{\em Cplex} \citeappendix{Cplex}, which is the fastest linear program solver we are
aware of.

\subsection{Multilabel}
Multilabel is another application of partial labelling that we did not mention
in our experiment section in the core paper. This omission was motivated by the
fact that, under natural weak supervision, the three losses (infimum, average
and supremum) are basically the same. However, we will provide, now, an
explanation of this problem and our algorithm to solve it.

Multilabel prediction consists in finding which are the relevant tags (possibly
more than one) among $m$ potential tags. In this case, one can represent $\Y =
\brace{-1, 1}^m$, with $y_i = 1$ (resp.~$y_i = -1$), meaning that tag $i$ is
relevant (resp.~not relevant). The classical loss is the Hamming loss, which is
the decoupled sum of errors for each label: 
\[
  \ell(y, z) = \sum_{i=1}^m \ind{y_i\neq z_i}.
\]
Natural weak supervision consists in mentioning only a small number of relevant
or irrelevant tags. This is the setting of~\citetappendix{appYu2014}. 
This leads to sets $S$ that are built from a set $P$ of relevant items, and a
set $N$ of irrelevant items. 
\[
  S = \brace{y\in \Y \midvert \forall\,i\in P, y_i = 1, \forall\,i\in N, y_i = -1}.
\]
In this case, the infimum loss reads,
\[
  L(z, S) = \sum_{i\in P} \ind{z_i = -1} + \sum_{i\in N} \ind{z_i = 1}.
\]
For such supervision, the infimum, the average and the supremum loss are
intrinsically the same, they only differs by constants, due to the fact that for
each unseen labels, the infimum loss pays $0$, the average loss $1/2$ and the
supremum loss $1$. 

When considering data $(x_i, S_i)_{i\leq n}$, where $(S_i)$ is built from $(N_i,
P_i)$, our algorithm in~\cref{eq:algorithm} reads
$\hat{f}(x) = (\sign(\hat{f}_j(x)))_{j\leq m}$, based on the scores
\[
  \hat{f}_j(x) = \sum_{i;j\in P_i} \alpha_i(x) - \sum_{i;j\in N_i} \alpha_i(x).
\]

\subsubsection{Tackling positive bias.} 
In the precedent development, we implicitly assumed that the ratio between
positive and negative labels given by the weak supervision reflects the one of
the full distribution. An assumptions that is often violated in practice.
It is common that partial labelling only mention subset of the revelant tags
(i.e., $N = \emptyset$).
This case is ill-conditioned as always outputting all tags ($y = \textbf{1}$)
will minimize the infimum loss.
To solve this problem, we can constrained the prediction space to the top-$k$
space $\Y_{k} = \brace{y\in\Y\midvert \sum_{i=1}^m \ind{y_j = 1} = k}$, which
will lead to taking the top-$k$ over the score $(\hat{f}_j)_{j\leq m}$.
We can also break the loss symmetry and add a penalization with $\epsilon >0$, 
\[
 \ell_\epsilon(z, y) = \ell(z, y) + \epsilon \sum_{i=1}^m \ind{z_i=1}.
\]
In this case, the inference algorithm will threshold scores at $\epsilon$ rather
than $0$.

\[
  f(x) = \paren{\sign\paren{\sum_{i;j\in P_i} \alpha_i(x) - \sum_{i;j\in N_i} \alpha_i(x)}}_{j\leq m}.
\]

\subsubsection{Complexity analysis}
The complexity analysis is similar to the one for classification. At training,
we compute $L = (\ind{j\in P_i} - \ind{j \in N_i})$ and we solve for $\beta =
K_\lambda^{-1}L$ in $\R^{n\times m}$. At testing, we compute $v(x)$ and $\beta^T
v(x)$  in $\R^m$, before thresholding it or taking the top-$k$ in either ${\cal
  O}(m)$ or ${\cal O}(m\log(m))$. As such, complexity reads similarly as for the
classification case. Yet notice that, for multilabelling, the dimension of $\Y$
is not $m$ but $2^m$, meaning we do not scale with $\#\Y$ but with the intrisic
dimension.
\begin{table}[h]
  \caption{Complexity of our algorithm for multilabels.}
  \label{tab:ml:complexity}
  \vskip 0.3cm
  \centering
  {\small\sc
    \begin{tabular}{lcc}
      \toprule
      Complexity & Time & Space \\
      \midrule
      Training          & ${\cal O}(n^2(n+m))$      & ${\cal O}(n(n+m))$ \\
      Inference         & ${\cal O}(nm)$            & ${\cal O}(n+m)$ \\
      Inference top-$k$ & ${\cal O}(nm + m\log(m))$ & ${\cal O}(n+m)$ \\
      \bottomrule
    \end{tabular}
  }
\end{table}

\subsubsection{Corruptions on the \emph{MULAN} datasets}\label{sec:mulan}

When set comes with tag of few positive and negative tags, all losses are the
same. Yet, under other type of supervision, such as when the sets comes as
Hamming balls, defined by
\[
  B(z, r) = \brace{y\in \Y \midvert \ell(z, y) \leq r},
\]
the methods will not behave the same. We experiment on MULAN datasets provided
by~\citetappendix{appTsoumakas2011}. Because supervision with Hamming balls does
not lead to efficient implementation, we went for extensive grid search for the
best solution, which reduce our hability to consider large $m$. Among MULAN
datasets, we went for the ``scene'' one, with $m=6$ tags, and $n=2407$ data.
When given a pair $(x, y)$, we add corruption on $y$, by first sampling a radius
parameter $r\sim\uniform([0, c*(m+1)])$, with $c$ a corruption parameter. We
then sample, with replacement, $\floor{r}$ coordinates to modify to pass from
$y$ to a center $c$. We then consider the supervision $S = B(c, r)$.
For such random, somehow uniform, corruption the infimum loss works slightly
better than the average loss that both outperform the supremum loss as shown
on~\cref{fig:mulan}.
\begin{figure}[h]
  \centering
  \includegraphics{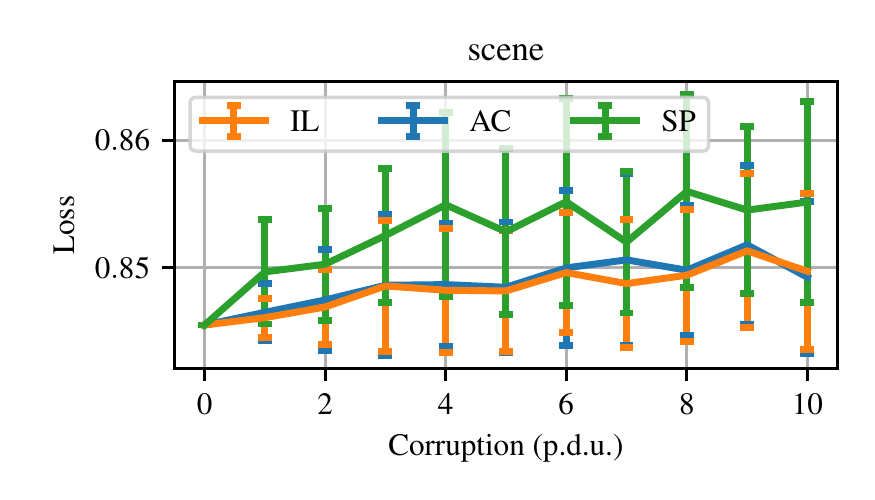}
  \vspace*{-.3cm}  
  \caption{Multilabelling. Testing risks (from~\cref{eq:risk}) achieved by {\em
      AC} and {\em IL} on the ``scene'' dataset from MULAN as a function of
    corruption parameter c, shown in procedure defined unit, when the
    supervision is given as Hamming balls, as described in~\cref{sec:mulan}.}
  \label{fig:mulan}
\end{figure}

\subsubsection{Reproducibility specification}
To get~\cref{fig:mulan}, we follow the same cross-validation scheme as for
classification and ranking. More exactly, we cross-validated over eight folds
with the same heuristics for $\sigma$, the Gaussian kernel parameter, and
$\lambda$, the regularization one, with $c_\sigma \in \brace{10, 5, 1, .5, .1, .01}$,
and $c_\lambda \in \brace{10^i\midvert i\in\bbracket{-3, 3}}$.

\subsection{Partial regression}
Partial regression is the regression instance of partial labelling.
When supervision comes as interval, it is known as interval regression,
and known as censored regression, when sets come as half-lines.
Note that for censored regression, nor the average, nor the supremum loss can be
properly defined.

\subsubsection{Baselines}
Given a bounded set $S$, learning with the average loss correspond to considering the
center of this set, since, for $z\in\Y$, with $\lambda$ the Lebesgue measure
\begin{align*}
  L_{\textit{ac}}(z, S) &= \frac{1}{\lambda(S)}\int_{S} \norm{z - y}^2 \lambda(\diff y)
  = \norm{z}^2  - 2\scap{z}{\frac{1}{\lambda(S)}\int_{S} y \lambda(\diff y)} +
  \frac{1}{\lambda(S)}\int_{S} \norm{y}^2 \lambda(\diff y)
  \\&= \norm{z - \frac{1}{\lambda(S)}\int_{S} y \lambda(\diff y)}^2 + \frac{1}{\lambda(S)}\int_{S} \norm{y}^2 \lambda(\diff y) - \norm{\frac{1}{\lambda(S)}\int_{S} y \lambda(\diff y) }^2
  = \norm{z - c(S)}^2 + C_S,
\end{align*}
where $c(S) =  \frac{1}{\lambda(S)}\int_{S} y \lambda(\diff y)$ is the center of $S$.
As such, the average loss is always convex.
As the supremum of convex function, the supremum loss is also convex. 

\subsubsection{Reproducibility specification}
To compute~\cref{fig:interval-regression}, for both {\em AC} and {\em IL}, we
consider $\sigma$, the Gaussian kernel parameter, and $\lambda$, the
regularization parameter, achieving the best risk when measure with the
fully supervised distribution,~\cref{eq:risk}. We tried over $\sigma \in
\brace{1, .5, .1, .05, .01}$ and $\lambda \in \brace{10^3, 1, 10^{-3}}$.
Randomness was controlled by instanciating random seeds.

\subsection{Beyond}
Beyond the examples showcased precedently, advances in dealing with weak
supervision could be beneficial for several problems.
Supervision on \emph{image segmentation} problems usually comes as partial pixel
annotation. This problem is often tackled through conditional random fields~\citeappendix{appVerbeek2007},
making it a perfect mix between partial labelling and structured prediction.
\emph{Action retrieval} on instructional video, where partial supervision is
retrieved from the audio track is an other interesting application~\citeappendix{appAlayrac2018}.

{Minimum feedback arc set}\label{app:fas}

\subsection{Formulation}
Consider a directed weighted graph with vertex $\bbracket{1, m}$ and
edges $\brace{i\rightarrow j}$ with weights $(w_{ij})_{i,j\leq m} \in \R_+^{m^2}$.
The goal is to find directed acyclic graph $G = (V, E)$ that maximize the weights
on remaining edges
\[
  \argmax_{E} \sum_{i\rightarrow j \in E} w_{ij}.
\]
This directed acyclic graph can be seen as a preference graph, item $j$ being
preferred over item $i$. Since $w_{ij}$ are non-negative, the underlying ordering
in $G$ is necessarily total, and therefore can be written based on a score
function, that can be embedded in the permutation of $\bbracket{1, m}$,
$\sigma \in \Sfrak_m$, with $\sigma(j) > \sigma(i)$ meaning that $j$ is preferred
over $i$. Thus the problem reads equivalently
\begin{align*}
  \argmax_{\sigma\in\Sfrak_m}\sum_{i,j \leq m} w_{ij} \ind{\sigma(j) > \sigma(i)}
  &= \argmax_{\sigma\in\Sfrak_m}\sum_{i<j \leq m} c_{ij}  \ind{\sigma(j) > \sigma(i)}
  = \argmax_{\sigma\in\Sfrak_m}\sum_{i<j \leq m} c_{ij}  \sign\paren{\sigma(j) - \sigma(i)}
  \\&= \argmin_{\sigma\in\Sfrak_m}\sum_{i<j \leq m} c_{ij}  \sign\paren{\sigma(i) - \sigma(j)}
  = \argmin_{\sigma\in\Sfrak_m}\sum_{i<j \leq m} c_{ij}  \ind{\sigma(i) > \sigma(j)} 
\end{align*}
with $c_{ij} = w_{ij} - w_{ji}$.
This last formulation is the one usually encounter for ranking algorithms in
machine learning~\citeappendix{appDuchi2010}.

We are going to study in depth this problem under the formulation
\begin{equation}\label{eq:fas}
 \argmin_{\sigma \in\Sfrak_m} \sum_{i<j\leq m} c_{ij} \sign\paren{\sigma(i) - \sigma(j)} 
\end{equation}

\subsection{Integer linear programming}

\begin{definition}[Kendall's embedding]\label{def:ken}
  For $\sigma \in \Sfrak_m$, define Kendall's embedding,
  with $m_e =m(m-1) / 2$,
  \[
    \phi(\sigma) =  \sign\paren{\sigma(i) - \sigma(j)}_{i<j \leq m} \in \brace{-1, 1}^{m_e}.
  \]
  Let's associate to it Kendall's polytope of order $m$, $\hull\paren{\phi(\Sfrak_m)}$.
\end{definition}
The Kendall's embedding~\cref{def:ken} cast the minimum feedback arcset
problem~\cref{eq:fas} as a linear program
\minimize{\scap{c}{x}}{x\in \hull\paren{\phi(\Sfrak_m)}.}
Since the objective is linear, the solution is known to lie on a vertex of the
constraint polytope, which is the set of Kendall's embeddings of permutations.
Yet, how to describe Kendall's polytope?

\begin{definition}[Transitivity polytope]\label{def:tr}
  The transitivity polytope of order $m$ is defined in $\R^{m_e}$ as
  \[
    {\cal M} = \brace{x\in\R^{m_e} \midvert \forall\,i<k<j; -1 \leq x_{ij} + x_{jk} - x_{ik} \leq 1 }
  \]
  This polytope encodes the transitivity constraints of Kendall's
  embeddings~\cref{def:ken}.
\end{definition}

The transitivity polytope~\cref{def:hull} will be used to approximate Kendall's
polytope based on the following property.

\begin{proposition}[Relaxed polytope]\label{prop:rel}
  The intersection between the transitivity polytope and the vertex of the
  hypercube is exactly the set of Kendall's embeddings of permutations.
  Mathematically
  \[
    \phi(\Sfrak_m) = {\cal M} \cap \brace{-1, 1}^{m_e}.
  \]
\end{proposition}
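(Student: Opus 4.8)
The plan is to prove the two inclusions of \cref{prop:rel} separately, reading the Boolean points of the transitivity polytope as \emph{tournaments} on $\bbracket{1,m}$ and invoking the classical fact that a tournament without directed $3$-cycles is transitive. The whole statement is combinatorial once one observes that, on the vertices of the hypercube, each transitivity constraint reduces to a single forbidden-pattern condition.

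First I would fix a triple of indices $i<k<j$ and record the pointwise meaning of the constraint. The three coordinates of $x$ relevant to this triple are $x_{ik},x_{kj},x_{ij}$, and on $\brace{-1,1}^{m_e}$ the constraint of \cref{def:tr} is (up to the sign of the defining combination) equivalent to $-1\le x_{ik}+x_{kj}-x_{ij}\le 1$. Enumerating the eight sign patterns of $(x_{ik},x_{kj},x_{ij})$, the quantity $x_{ik}+x_{kj}-x_{ij}$ takes a value in $\brace{-1,1}$ for the six patterns coming from a strict total order on $\brace{i,k,j}$, and the out-of-range values $\pm 3$ exactly for the two \emph{cyclic} orientations; hence on $\brace{-1,1}^{m_e}$ the inequality is equivalent to forbidding those two cyclic orientations of the triple. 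This already yields the inclusion $\phi(\Sfrak_m)\subseteq {\cal M}\cap\brace{-1,1}^{m_e}$: every $\phi(\sigma)$ is Boolean by \cref{def:ken}, and each triple inherits a strict total order from $\sigma$, so no orientation is cyclic and every constraint of \cref{def:tr} holds.

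For the reverse inclusion I would take $x\in{\cal M}\cap\brace{-1,1}^{m_e}$ and build a tournament $T$ on the vertex set $\bbracket{1,m}$ by orienting the edge between $i<j$ as $i\to j$ when $x_{ij}=1$ and as $j\to i$ when $x_{ij}=-1$. By the pointwise analysis above, the constraints of \cref{def:tr} ranging over all $\binom{m}{3}$ triples say exactly that $T$ has no directed $3$-cycle. Producing a permutation $\sigma$ with $\phi(\sigma)=x$ then amounts to showing that $T$ is \emph{transitive}, i.e. that it is the comparison tournament of a linear order on $\bbracket{1,m}$.

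The crux, and the step I expect to be the main obstacle, is the combinatorial lemma that a tournament with no directed $3$-cycle has no directed cycle at all. I would argue by minimality: if $T$ contained a directed cycle, pick one of minimal length $\ell$; a tournament has no $1$- or $2$-cycles and by hypothesis no $3$-cycle, so $\ell\ge 4$; for three consecutive vertices $v_1\to v_2\to v_3$ on the cycle, the chord between $v_1$ and $v_3$ must be oriented $v_1\to v_3$, for otherwise $v_1\to v_2\to v_3\to v_1$ would be a $3$-cycle, and this chord produces a strictly shorter directed cycle, contradicting minimality. Hence $T$ is acyclic, an acyclic tournament is a strict total order, and letting $\sigma\in\Sfrak_m$ be the rank function of that order (so that $\sigma(i)>\sigma(j)$ precisely when $i\to j$ in $T$) gives $\phi(\sigma)=x$. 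This establishes ${\cal M}\cap\brace{-1,1}^{m_e}\subseteq\phi(\Sfrak_m)$, and together with the first inclusion proves the claimed equality; as a by-product, $\phi$ is a bijection from $\Sfrak_m$ onto the Boolean points of the transitivity polytope.
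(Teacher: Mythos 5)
Your proof is correct, and it takes a genuinely different route from the paper's. The paper argues by an explicit score construction: it extends $x\in{\cal M}\cap\brace{-1,1}^{m_e}$ to an antisymmetric array $\tilde x$, defines $\sigma$ by sorting the row sums $\sum_k \tilde x_{ik}$, and shows that the transitivity constraints force $\tilde x_{ij}=1\Rightarrow\tilde x_{ik}\ge\tilde x_{jk}$ for all $k$, whence the row sums of $i$ and $j$ differ strictly in the direction prescribed by $x$ and $\phi(\sigma)=x$; this is in essence a Borda-count (score-sequence) argument. You instead identify the Boolean points of ${\cal M}$ with tournaments having no directed $3$-cycle --- your per-triple enumeration showing that, on $\brace{-1,1}^{m_e}$, each constraint excludes exactly the two cyclic orientations is the right local analysis, and it also quietly repairs the index notation in \cref{def:tr} --- and then invoke the classical lemma that a $3$-cycle-free tournament is acyclic, proved by your minimal-cycle/chord argument, so that the rank function of the resulting linear order is the desired $\sigma$. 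Both arguments are elementary and complete. The paper's approach buys constructiveness: it exhibits an explicit ${\cal O}(m^2)$ procedure (rank items by row sums) that recovers the permutation from a Boolean vertex, which is of practical interest in the surrounding algorithmic context of rounding the ILP relaxation. Yours buys modularity and transparency: the statement is explained as ``Boolean points of the transitivity polytope are exactly the transitive tournaments,'' the global step is a reusable textbook lemma, and the bijectivity of $\phi$ onto those points comes out as a stated by-product. The two proofs are closely related under the hood: the paper's inequality $\tilde x_{ik}\ge\tilde x_{jk}$ is precisely the no-$3$-cycle condition applied to the triple $\brace{i,j,k}$, and sorting by scores is one standard way of proving the tournament lemma you use.
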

\begin{proof}
First of all it is easy to show that
$\phi(\Sfrak_m) \subset \brace{-1, 1}^{m_e}$, and that,
$\phi(\Sfrak_m)\subset{\cal M}$.

Let's now consider $x \in {\cal M} \cap\brace{-1,1}^{m_e}$.
Let's associate to $x$ the symmetric embedding
\[
  \tilde{x}_{ij} = \left\{
\begin{array}{ccc}
  x_{ij} &\text{if}& i < j\\
  0 &\text{if}& i = j\\
  -x_{ji} &\text{if}& j < i 
\end{array}
\right.
\]
Let's consider the permutation $\sigma$ resulting from the ordering of
$\sum_{k}\tilde{x}_{ik}$
\[
  \sigma^{-1}(1) = \argmin_{i \in\bbracket{1,m}} \sum_{k = 1}^m \tilde{x}_{ik}\qquad\text{and}\qquad
  \sigma^{-1}(i) = \argmin_{i \in\bbracket{1,m}\backslash \sigma^{-1}\paren{\bbracket{1, i-1}} } \sum_{k=1}^m \tilde{x}_{ik}. 
\]
Let's now show that $\phi(\sigma) = x$, or equivalently that
$\tilde{\phi}(\sigma) = (\sign(\sigma(i) - \sigma(j)))_{i,j\leq m} = \tilde{x}$.
First, one can show that $\tilde{x}$ verify the transitivity constraints
\[
  \forall\, i, j, k \leq m, \qquad -1 \leq \tilde{x}_{ij} + \tilde{x}_{jk} - \tilde{x}_{ik} \leq 1. 
\]
This can be proven for any ordering of $i, j, k$ based on the fact that $x \in {\cal M}$.
For example, if $i < k < j$, we have
\[
  \bracket{-1, 1} \ni x_{ik} + x_{kj} - x_{ij} = \tilde{x}_{ik} - \tilde{x}_{jk} - \tilde{x}_{ij}. 
\]
which leads to
\[
  \tilde{x}_{ij} + \tilde{x}_{jk} - \tilde{x}_{ik} \in -\bracket{-1, 1} = \bracket{-1, 1}.
\]
Now suppose, without loss of generality, that $\tilde{x}_{ij} = 1$ (if
$\tilde{x}_{ij} = -1$, just consider $\tilde{x}_{ji} = 1$).
The transitivity constraints tells us that $\tilde{x}_{ik} \geq \tilde{x}_{jk}$
for all $k$, therefore
\[
  \sum_{k\not\in\brace{i,j}} \tilde{x}_{ik} \geq \sum_{k\not\in\brace{i,j}} \tilde{x}_{jk},
    \qquad\Rightarrow\qquad
    \sum_{k=1}^{m} \tilde{x}_{ik} > \sum_{k=1}^{m} \tilde{x}_{jk}.
    \qquad\Rightarrow\qquad
    \sigma(i) > \sigma(j).
  \]
  This shows that $\tilde{\phi(\sigma)}_{ij} = 1 = \tilde{x}_{ij}$. Thus we have
  shown that $x \in \phi(\Sfrak_m)$, which concludes the proof.
\end{proof}

\begin{definition}[ILP relaxation]\label{def:hull}
  Based on~\cref{prop:rel}, we define the canonical polytope
    ${\cal C} = {\cal M} \cap \bracket{-1, 1}^{m_e}$,
  and relax the problem~\cref{eq:fas} into
  \minimize{\scap{c}{x}}{x\in{\cal C}}
  As soon as the solution $x$ is in $\brace{-1, 1}^{m_e}$,~\cref{prop:rel} tells
  us that $x$ recover the exact minimum feedback arc set solution~\cref{eq:fas}.
\end{definition}

\begin{figure}[ht]
  \centering
  \includegraphics{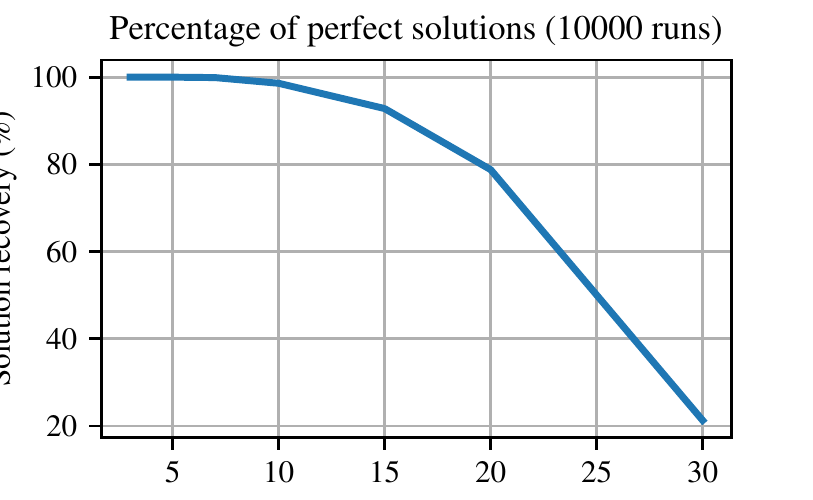}
  \includegraphics{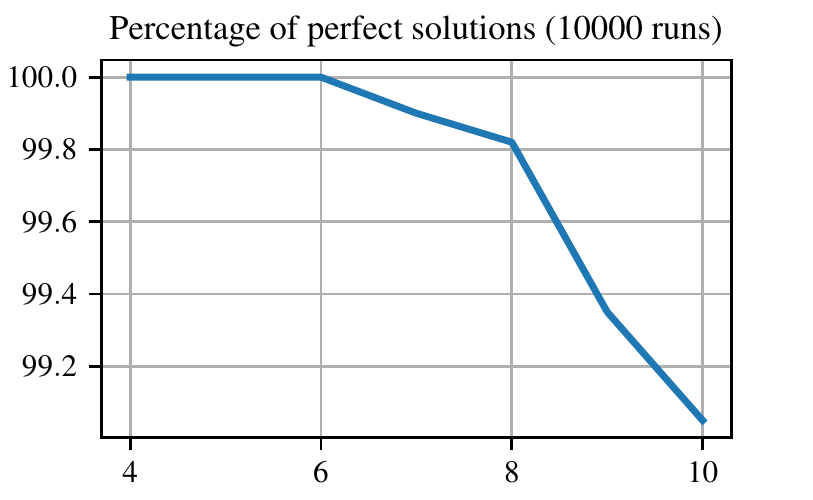}
  \vspace*{-.3cm}  
  \caption{Evaluating the percentage of exact solutions of the ILP relaxation as
    $m$ grows large. Evaluation is done by choosing a objective
    $c\sim{\cal N}(0, I_{m_e})$, solving the ILP relaxation~\cref{def:hull},
    and evaluating if the solution is in $\brace{-1, 1}^{m_e}$.
    The experience is repeated several time to estimate how often, on average,
    the original solution of~\cref{eq:fas} is returned by the ILP.
  }
  \label{fig:ilp_rel}
\end{figure}

\iftrue
In small dimension, the canonical polytope ${\cal C}$ is the same as the Kendall's one, and
the ILP relaxation gives the right solution.
Yet, as shown~\cref{fig:ilp_rel}, as soon as $m > 5$, there exists vertex in
${\cal C}$ that does not correspond to a permutation embedding.
For small dimensions, proving that ${\cal C}$ is exactly the Kendall's polytope is
done with a simple drawing for $m=3$, using unimodularity of the transitivity constraint
matrix is enough for $m=4$~\citeappendix{appHoffman2010}. The case $m=5$ is also provable, based on several
twicks that we will not discuss here.
\else
\begin{proposition}
 The ILP relaxation~\cref{def:hull} always recovers the solution of~\cref{eq:fas}
 if and only if $m \leq 5$.
\end{proposition}
\begin{proof}
 For $m > 5$, computation done to draw~\cref{fig:ilp_rel} gives
 counter-examples,
 and small dimension counter-examples can be cast in bigger ones as $m$ grows.
 For $m=3$, one can draw the Kendall's polytope and the canonical one, they are the same. 
 For $m=4$, let use basic knowledge on integer linear program and
 unimodularity~\citeappendix{appHoffman2010}. With $x = (x_{12}, x_{13}, x_{14}, x_{23}, x_{24}, x_{34})$,
 the transitivity constraints are built from
 \[
   A = 
   \paren{\begin{array}{cccccc}
     1 & -1 & 0 & 1 & 0 & 0\\
     1 & 0 & -1 & 0 & 1 & 0\\
     0 & 1 & -1 & 0 & 0 & 1\\
     0 & 0 & 0 & 1 & -1 & 1\\
   \end{array}}
\]
More precisely, the entire constraints reads $Ax \leq 1$, $-Ax\leq -1$, $x\leq 1$,
$-x\leq 1$.

The matrix $A$, and the one that build the transitivity constraints for $m=5$, is
totally unimodular, meaning that any invertible sub-square matrix of $A$ is
invertible in $\mathbb{Z}$, or equivalently, any sub-square matrix has
determinant -1, 0 or 1.
It is easy to show, that $\tilde{A} = [A, -A, I, -I]^T$ is also totally
unimodular.

Consider a vertex $x$ in ${\cal C}$, it can be defined from $m_e$ linearly independent
constraint hyperplanes, it solves an equation of the type $\tilde{A}_{\vert I} x = b$,
with $b\in\brace{-1,  1}^{m_e}$.
$\tilde{A}_{\vert I}$ being an invertible sub-square matrix of $\tilde{A}$,
it is invertible in $\mathbb{Z}$, leading to $x = \tilde{A}_{\vert I}^{-1}b$
being integer. Therefore $x\in\brace{-1, 0, 1}^{m_e}$.
To show that $x\in\brace{-1, 1}^{m_e}$, let's consider $x' = (x + 1)/2$, it
corresponds to the embedding $(\ind{\sigma(i) > \sigma(j)})_{ij}$.
Solving the original problem with $x'$ or with $x$ is the same.
The constraints for $x'$ reads $Ax' \leq 1$, $-Ax'\leq 0$, $x' \leq 1$, $-x'\leq 0$.
Therefore, for the same reason as $x$, $x'$ is integer, more exactly
$x'\in\brace{0, 1}^{m_e}$, from which $x$ is deduced to be in
$\brace{-1,1}^{m_e}$.
By showing that all vertices of ${\cal C}$ are in $\brace{-1, 1}^{m_e}$,
using~\cref{prop:rel} we have shown that the ILP relaxation does gives the exact
solution for $m = 4$.

For $m=5$, one can not use unimodularity as the transitivity constraint matrix
shows submatrices of determinant $\pm 2$, for example
\[
      \tilde{A} = \begin{blockarray}{cccccc}
        x_{45} & x_{34} & x_{25} & x_{13} & x_{12} \\
        \begin{block}{(ccccc)c}
          1 & 1 & 0 & 0 & 0 & x_{34} + x_{45} - x_{35} \\
          1 & 0 & -1 & 0 & 0 & x_{24} + x_{45} - x_{25} \\
          0 & 1 & 0 & 1 & 0 & x_{13} + x_{34} - x_{14} \\
          0 & 0 & 1 & 0 & 1 & x_{12} + x_{25} - x_{15} \\
          0 & 0 & 0 & -1 & 1 & x_{12} + x_{23} - x_{13} \\
        \end{block}
      \end{blockarray}
    \]
This matrix is not invertible in $\mathbb{Z}$. The constraints and coefficients
corresponding are written around it.

Let's first prove that ${\cal C} \in \frac{1}{2}\mathbb{Z}$.
Mainly we used that all submatrix of dimension $7\times 7$ are not invertible,
the one of dimension $6\times 6$ are of determinant -1, 1, 0, and the one of
$5\times 5$ are the only ones with determinant sometimes being $\pm 2$.
This shows that the determinant of the matrix defining the constraint for a
vertex is at most 2, by developing the determinant with respect to constraint
of the type $x_{ij} = \pm 1$. Which shows that ${\cal C} \in
\frac{1}{2}\mathbb{Z}$.
With the same embedding trick, with $x' = (x+1)/2$, we show that
$x\in\mathcal{Z}$.
We know that the only potential trouble is when the matrix defining the
constraints has a subfactor of determinant $\pm 2$, and all the other constraint
are of the type $x_{ij} = \pm 1$, meaning that there is at least 5 coordinates
among the 10 non zero. If we put those into the 5 transitivity constraints, if
we put one of them in a coefficient that appear twice, then necessarily there
will be one transitivity constraint that get two of its three coefficient fixed
by the simple constraints, this constraint can so be replace by a fixed one on
the last coefficient, meaning that the vertex could also be defined by 4
transitivity constraint and 6 simple constraint, in this case it will be invertible.
Meaning that we should distribute the simple constraint among the coefficient
that appears only once. This links all the remaining coefficient between them in
a really simple way, either they are all zeros, either all integer. If they are
all zeros, one can check that they are not on the border by showing for example
that $\max_{y\in{\cal C}}\scap{x}{y}$ is not only achieve for $y=x$, indeed any
completing of the partial order given by the simple constraint will work
\end{proof}
\fi

\begin{remark}[Low noise consistency]
  Remark that the low-noise setting considered by~\citetappendix{appDuchi2010}
  correspond to having $\sign(c) = -\phi(y)$ for a $y\in\Y$, in this case our
  algorithm is consistent and does recover the best solution $z=y$.
\end{remark}

\subsection{Sorting heuristics}
When formatting and solving the integer linear program takes too much time, one
can go for simple sorting heuristic, mainly based on a heuristic to compare
items two by two and using quick sorting.
A review of some heuristic with guarantees is provide
by~\citetappendix{appAilon2005},
Similar study when in presence of constraint on the resulting total order can be
found in~\citetappendix{appVanZuylen2007}.


\bibliographyappendix{appendix/appendix}
\bibliographystyleappendix{style/icml2020}

\end{document}